\title{Learning linear acyclic causal model including Gaussian noise using ancestral relationships}
\author[1]{Ming Cai}
\author[1]{Penggang Gao}
\author[2]{Hisayuki Hara}
\affil[1]{Graduate School of Informatics, Kyoto University}
\affil[2]{Institute for Liberal Arts and Sciences, Kyoto University}
\theoremstyle{plain}%
\newtheorem{theorem}{Theorem}[section]
\newtheorem{lemma}[theorem]{Lemma}
\newtheorem{corollary}[theorem]{Corollary}
\newtheorem{proposition}[theorem]{Proposition}
\theoremstyle{plain}%
\newtheorem{definition}[theorem]{Definition}
\numberwithin{figure}{section}%
\numberwithin{table}{section}%
\numberwithin{equation}{section}%
\newcommand{\indep}{\mathop{\perp\!\!\!\!\perp}}
\newcommand{\notindep}{\mathop{\not \perp\!\!\!\!\perp}}
\newenvironment{breakablealgorithm}
  {% \begin{breakablealgorithm}
   \begin{center}
     \refstepcounter{algorithm}% New algorithm
     \hrule height.8pt depth0pt \kern2pt% \@fs@pre for \@fs@ruled
     \renewcommand{\caption}[2][\relax]{% Make a new \caption
       {\raggedright\textbf{\fname@algorithm~\thealgorithm} ##2\par}%
       \ifx\relax##1\relax % #1 is \relax
         \addcontentsline{loa}{algorithm}{\protect\numberline{\thealgorithm}##2}%
       \else % #1 is not \relax
         \addcontentsline{loa}{algorithm}{\protect\numberline{\thealgorithm}##1}%
       \fi
       \kern2pt\hrule\kern2pt
     }
  }{% \end{breakablealgorithm}
     \kern2pt\hrule\relax% \@fs@post for \@fs@ruled
   \end{center}
  }
\newlist{compactenumA}{enumerate}{5}%
\setlist[compactenumA]{topsep=0pt,itemsep=-1ex,partopsep=1ex,parsep=1ex,%
   label=(\Alph*)}%
\newlist{compactenuma}{enumerate}{5}%
\setlist[compactenuma]{topsep=0pt,itemsep=-1ex,partopsep=1ex,parsep=1ex,%
   label=(\alph*)}%
\newlist{compactenumI}{enumerate}{5}%
\setlist[compactenumI]{topsep=0pt,itemsep=-1ex,partopsep=1ex,parsep=1ex,%
   label=(\Roman*)}%
\newlist{compactenumi}{enumerate}{5}%
\setlist[compactenumi]{topsep=0pt,itemsep=-1ex,partopsep=1ex,parsep=1ex,%
   label=(\roman*)}%
\newlist{compactitem}{itemize}{5}%
\setlist[compactitem]{topsep=0pt,itemsep=-1ex,partopsep=1ex,parsep=1ex,%
   label=\ensuremath{\bullet}}%
\begin{document}
\maketitle

\begin{abstract}
This paper discusses algorithms for learning causal DAGs. 
The PC algorithm \cite{Spirtes} makes no assumptions other than the faithfulness to the causal model and can identify only up to the Markov equivalence class. 
LiNGAM \cite{Shimizu2006} assumes linearity and continuous non-Gaussian disturbances for the causal model, and the causal DAG defining LiNGAM is shown to be fully identifiable. 
The PC-LiNGAM \cite{hoyer2008b}, a hybrid of the PC algorithm and LiNGAM, can identify up to the distribution-equivalence pattern of a linear causal model, even in the presence of Gaussian disturbances. However, in the worst case, the PC-LiNGAM has factorial time complexity for the number of variables.

In this paper, we propose an algorithm for learning the distribution-equivalence patterns of a linear causal model with a lower time complexity than PC-LiNGAM, using the causal ancestor finding algorithm in Maeda and Shimizu \cite{Maeda2020}, which is generalized to account for Gaussian disturbances.
\end{abstract}

\section{Introduction}
Learning the causal structure among high-dimensional variables is a fundamental challenge across various disciplines. 
We are often forced to learn causal structures based solely on observed data.
Over the past quarter century, theoretical research on causal discovery based on observational data has made remarkable progress, and many practical algorithms have been proposed. 

In this paper, we assume that the causal structure is defined by a directed acyclic graph (DAG). 
We also assume that the causal model has no latent confounders. 
When learning a causal DAG nonparametrically, we need to focus on conditional independence (CI) relationships between variables. 
In general, however, multiple causal DAGs may exist such that the CI relationships among variables are identical. 
The set of causal DAGs encoding the same CI relationship among variables is called the Markov equivalence class (MEC). 

The PC algorithm \cite{Spirtes} can identify a causal DAG up to a MEC using the CI relationships under the faithfulness assumption for the causal model. 
The greedy equivalence search (GES) \cite{Chickering2002} learns a causal DAG using a model criterion such as BIC. 
The GES can also identify a causal DAG up to a MEC. 
These algorithms cannot identify the orientation of some edges in a causal DAG. 
A graph in which edges in the causal DAG whose orientations cannot be determined are replaced by undirected edges is called a d-separation-equivalence pattern (DSEP, e.g., \cite{hoyer2008b}). 
The output of the PC algorithm and the GES is obtained as a DSEP. 

Identifying a causal DAG beyond a MEC requires additional assumptions to the causal model. 
Shimizu et al. \cite{Shimizu2006} assumed that the causal model is linear and that disturbances are independently distributed, continuous, and non-Gaussian. 
Such a model is called the linear non-Gaussian acyclic model (LiNGAM). 
Shimizu et al. \cite{Shimizu2006} showed that the causal DAG that defines LiNGAM is fully identifiable using the independent component analysis (ICA, e.g., \cite{Hyvarinen2002}). 
Their algorithm is called the ICA-LiNGAM. 
While the time complexity of the PC algorithm and the GES is exponential for the dimension of variables, the ICA-LiNGAM can estimate a causal DAG with polynomial time complexity for the dimension of variables. 
Since the advent of the ICA-LiNGAM, much work has been devoted to improving the LiNGAM estimation algorithm and generalizing the model. 

Hoyer et al. \cite{hoyer2008b} proposed the PC-LiNGAM for identifying a causal DAG when the linear causal model includes Gaussian disturbances. 
The PC-LiNGAM is a hybrid of the PC algorithm and the ICA-LiNGAM. 
Identifying the entire causal DAG may be impossible if a linear causal model includes Gaussian disturbances. 
When two different linear causal models have the same joint distribution, they are called distribution-equivalent. The distribution-equivalent linear causal models are represented by a distribution-equivalence pattern (\cite{hoyer2008b}, DEP) consisting of a graph with both directed and undirected edges. % and a vector of binary variables that indicates whether each disturbance is Gaussian or not. 
The causal DAG defining a linear causal model containing Gaussian disturbances can only be identified up to a DEP. The PC-LiNGAM can identify a DEP for the linear causal model.  

The PC-LiNGAM first estimates a DSEP that encodes a MEC using the PC algorithm. Next, it finds the DAG that maximizes the ICA objective function among the MEC. In the worst case, when a DSEP is an undirected complete graph, this procedure is factorial time for the dimension of the variables, making it challenging to implement in high-dimensional cases. 

In this paper, we assume that the causal model is linear and faithful, and we propose a new algorithm that outputs a DEP given a DSEP. The proposed method uses the ancestor-finding algorithm proposed by Maeda and Shimizu \cite{Maeda2020}, which assumes non-Gaussian disturbances. In this paper, we generalize it to the case where the linear causal model contains Gaussian disturbances and use it to determine the orientation of undirected edges in a DSEP. 
We can show that the proposed method generically identifies a causal DAG up to the DEP of a causal DAG. 
In this paper, "generically" means except the set of measure zeros in the parameter space. 
We can also show that the proposed method has a polynomial time complexity on the dimension of the variables. 

The rest of this paper is organized as follows: Section 2 summarizes some existing causal structure learning algorithms
and clarifies the position of the proposed method. Section 3 describes the details of the proposed algorithm. 
Section 4 confirms the usefulness of the proposed methods through computer experiments. 
Section 5 concludes the paper. The proofs of theorems and lemmas are provided in the Appendix. 

\subsection{Terminologies and notations on graphs}
Before going to Section 2, this subsection summarizes some terminologies and notations for graphs used in this paper. 

In this paper, we assume that the causal graph is a DAG. A DSEP and a DEP are graphs with both directed and undirected edges, as described above.
A graph with directed and undirected edges is called a mixed graph. A mixed graph without a directed cycle is called a chain graph (e.g., \cite{Andersson1997}). 
Both a DSEP and a DEP are chain graphs.

A directed graph $G$ is said to be weakly connected if the undirected graph obtained by replacing all directed edges in $G$ with undirected edges is connected.
For a directed graph, a maximal set of vertices that induces a weakly connected subgraph is called the weakly connected component.
In this paper, we identify a weakly connected component with the subgraph induced by the weakly connected component. 

For a graph $G=(\bm{X},E)$, the subgraph induced by $\bm{X}' \subset \bm{X}$ is denoted as $G(\bm{X}') = (\bm{X}', \bm{X}' \times \bm{X}' \cap E)$. 

Furthermore, directed and undirected edge between $x_i$ and $x_j$, $i \ne j$ are denoted as $x_i \to x_j$ and $x_i - x_i$, respectively. 
Both directed and undirected edges are sometimes identified with the set $\{x_i,x_j\}$ consisting of two variables. That is, for a weakly connected graph $G=(\bm{X},E)$, $\bigcup_{e\in E} e = \bm{X}$ holds. 
We represent a V-structure as $x_i \rightarrow x_k \leftarrow x_j$ and a directed cycle as $x_i \to x_j \to x_k \to x_i$. 
For simplicity, we abuse $x_i \rightarrow x_k \leftarrow x_j \in G$ to indicate that the V-structure is contained in $G$.

\section{Related studies}
This section reviews several statistical causal discovery methods relevant to this paper: the PC algorithm \cite{sg1991,Spirtes}, LiNGAM \cite{Shimizu2006, Shimizu2011}, PC-LiNGAM \cite{hoyer2008b}, and the ancestor finding in RCD \cite{Maeda2020}.
\subsection{PC algorithm}
\label{sec:pc}
The PC algorithm \cite{sg1991, Spirtes} is a constraint-based algorithm for estimating causal DAGs using only the CI relationships among variables. 
The PC algorithm can only identify a causal DAG up to a MEC. 
Let $\bm{X}=(x_1,\ldots,x_p)^\top$ be a $p$-variate random vector. 
%and let $G=(X,E)$ be the causal DAG for $\bm{X}$. 
The PC algorithm starts with an undirected complete graph on $p$-variables in $\bm{X}$.
For all pair $(x_i,x_j)$, $i<j$, if there is $S \subset \bm{X} \setminus \{x_i,x_j\}$ satisfying $x_i \indep x_j \mid S$, remove the undirected edge $x_i - x_j$. 
$S$ is called a separating set (sepset) of $(x_{i}, x_{j})$.
Then, we obtain the causal skeleton $G_{\mathrm{skel}}=(\bm{X},E_{\mathrm{skel}})$ of a causal DAG. 
Next, we find some V-structures in $G_{\mathrm{skel}}$ using the CI relationships between variables (Algorithm \ref{alg:PC1}) and then find a DSEP that encodes a MEC using the orientation rule by Verma and Pearl \cite{vp1992} and Meek \cite{meek1995} (Algorithm \ref{alg:PC2}). 

In the worst case, the PC algorithm requires $p(p-1)\cdot 2^{p-3}$ CI tests to obtain a causal skeleton, rendering the algorithm infeasible for high-dimensional datasets. 
There have been several previous studies aimed at improving the computation efficiency of the PC algorithm. 
Kalisch and B\"uhlmann \cite{kb2007}  proposed an algorithm for learning sparse Gaussian causal models using a PC algorithm with high computational efficiency. 
Giudice et al. \cite{Giudice2023} proposed the dual PC algorithm that searches a sepset from both zero-order and full-order and showed that their algorithm outperforms the PC algorithm in terms of computation time and estimation accuracy. 
A parallel computing technique for PC algorithms has also been proposed in Le et al. \cite{fastPC2016}. 
%%%%% 
%%%%%citations about PC-algorithm
%%%%%

Algorithm \ref{alg:PC1} is also executable even when latent confounders are present in the causal model. 
Fast Causal Inference (\cite{Spirtes}, FCI) generalizes the PC algorithm to the case where the causal model has latent confounders. 
\begin{breakablealgorithm} 
	\caption{Finding V-structure} 
	\label{alg:PC1} 
	\begin{algorithmic}[1]
		\Require A causal skeleton $G_{\mathrm{skel}}$ 
		\Ensure A chain graph with some V-structures $G_v$
        \ForAll{$x_i$ and $x_j$ such that $(x_i,x_j) \notin E_{\mathrm{skel}}$}
        % \Comment{For all unshielded triples}
            \If{$x_i - x_k - x_j \in E_{\mathrm{skel}}$ and $x_i \notindep x_j \mid x_k \cup S$}
                \State Orient $x_i \rightarrow x_k \leftarrow x_j$
            \EndIf
        \EndFor
    \end{algorithmic}
\end{breakablealgorithm}

\begin{algorithm}
	\caption{The orientation rule \cite{vp1992, meek1995}} 
	\label{alg:PC2} 
	\begin{algorithmic}[1]
		\Require A chain graph $G_v$
		\Ensure d-separation-equivalence pattern $G_{\mathrm{dsep}}$
        \ForAll{$x_i$ and $x_j$ such that $(x_i,x_j) \in E_{\mathrm{skel}}$}
            \If{$x_k \rightarrow x_i - x_j$}
                \State Orient $x_i \rightarrow x_j$ %\Comment{R1}
            \EndIf
            \If{$x_i \rightarrow x_k \rightarrow x_j$}
                \State Orient $x_i \rightarrow x_j$
                %\Comment{R2}
            \EndIf
            \If{$x_i - x_k \rightarrow x_j$, $x_i - x_l \rightarrow x_j$, and $(x_{k}, x_{l}) \notin E_{\mathrm{skel}}$}
                \State Orient $x_i \rightarrow x_j$
                %\Comment{R3}
            \EndIf
            \If{$x_i - x_k \rightarrow x_l$ and $x_k \rightarrow x_l \rightarrow X_j$}
                \State Orient $x_i \rightarrow x_j$
                %\Comment{R4}
            \EndIf
        \EndFor
    \end{algorithmic}
\end{algorithm}

\subsection{LiNGAM and variants}
\label{sec:LiNGAM}
Shimizu et al. \cite{Shimizu2006} considered identifying the entire causal DAG by imposing additional assumptions on the causal model.
Let $\bm{X}=(x_1,\ldots,x_p)^\top$ be a $p$-variate random vector.
In the following, we identify $\bm{X}$ with the variable set. 
They considered an acyclic linear structural equation model 
\begin{align}
    \label{model:LiNGAM}
        \bm{X}  =  B\bm{X} + \bm{\epsilon},   
\end{align}
where the disturbances $\bm{\epsilon}=(\epsilon_1,\ldots,\epsilon_p)^\top$ are independently distributed as continuous 
non-Gaussian distributions and the coefficient matrix $B$ can be transformed into a strictly lower triangular matrix by permuting the rows and columns. 
Shimizu et al. \cite{Shimizu2006} called the model (\ref{model:LiNGAM}) the linear non-Gaussian acyclic model (LiNGAM). 

Consider the reduced form of LiNGAM
%LiNGAM is rewritten by 
\begin{align}
    \label{model:LiNGAM2}
    \bm{X} = (I-B)^{-1} \bm{\epsilon},
\end{align}
where $I$ denotes the $p \times p$ identity matrix. 
Shimizu et al. \cite{Shimizu2006} showed that a causal DAG defining LiNGAM (\ref{model:LiNGAM}) is fully identifiable, using the fact that the model (\ref{model:LiNGAM2}) can be considered as the independent component analysis (ICA) model and provided an algorithm to estimate a causal DAG. 
The algorithm is known as the ICA-LiNGAM. 
Since the advent of the ICA-LiNGAM, much work has been devoted to improving the algorithm and generalizing the model.

As the dimension of the variables increases, the ICA-LiNGAM tends to converge to a locally optimal solution, resulting in lower estimation accuracy for small samples. 
To overcome this problem, Shimizu et al. \cite{Shimizu2011} proposed the DirectLiNGAM, which estimates LiNGAM using linear regressions. 

Hoyer et al. \cite{Hoyer2008} and Zhang and Hyv\"arinen \cite{Zhang2012} generalized LiNGAM to nonlinear and showed that the causal DAG for the nonlinear causal model is identifiable even if the disturbances are Gaussian. 
Vector autoregressive LiNGAM (VAR-LiNGAM) \cite{hyvarinen10a} is also a variant of LiNGAM to handle time series data.

Tashiro et al. \cite{tashiro2014parcelingam} and Hoyer et al. \cite{hoyer2008estimation} proposed feasible causal discovery methods considering latent confounders under the LiNGAM framework.
Maeda and Shimizu \cite{Maeda2020} proposed the repetitive causal discovery (RCD) that is intended to be applied to LiNGAM with latent confounders. 
RCD first determines the ancestral relationships between variables using linear regressions and independence tests, then creates a list of ancestor sets for each variable. 
The parent-child relationships between variables are determined from the CI relationships among variables in each estimated ancestor set. 

Divide-and-conquer algorithms have also been proposed to increase the feasibility of the DirectLiNGAM even when the sample size is smaller than the dimension of the observed variable. In these algorithms, variables are grouped into several subsets, the DirectLiNGAM is applied to each group, and the results are merged to estimate the entire causal DAG. 
Cai et al. \cite{Cai2013} and Zhang et al. \cite{Zhang2020} proposed algorithms for grouping variables based on the CI relationships between variables. 
Recently, Cai and Hara \cite{cai2024learning} have proposed another algorithm for variable grouping inspired by the ancestor-finding in RCD \cite{Maeda2020}.

In the proposed method, the ancestor finding in RCD is generalized to the case where the model includes Gaussian disturbances.

%%%%%
%%%%% sth about CAG, the proposed method in Master's period.
%%%%%
\subsection{PC-LiNGAM}
\label{sec:PC-LiNGAM}
As described in Section \ref{sec:pc}, the constraint-based approach, such as the PC algorithm, has no restrictions on the model or distribution of variables but can identify a causal DAG only up to a MEC. On the other hand, LiNGAM and its variant in Section \ref{sec:LiNGAM}, by constraining the causal model to be linear and the distribution of disturbances to be continuous non-Gaussian, can identify more directed edges of a causal DAG than the PC algorithm. 

To combine both advantages of the PC algorithm and LiNGAM, Hoyer et al. proposed a hybrid method of these algorithms named PC-LiNGAM \cite{Hoyer2008}. 
The PC-LiNGAM assumes that the causal model is linear (\ref{model:LiNGAM}), but the disturbances' distributions can be arbitrary continuous distributions, including the Gaussian distribution. 
To identify such a causal DAG, we need to focus not only on the graph structure but also on whether the disturbance is Gaussian or non-Gaussian.
Hoyer et al. \cite{hoyer2008b} defined ngDAG as follows. 

\begin{definition}[Hoyer et al. \cite{hoyer2008b}]
    \label{def: ngDAG}
An ngDAG $(G, ng)$ is defined by a pair of causal DAG $G$ and a $p$-dimensional vector $ng$ consisting of binary variables that take one when the disturbance for each variable follows a Gaussian distribution and zero otherwise.
\end{definition}

Two causal models defined by different ngDAGs with the same joint distribution are called distribution-equivalent. 
A chain graph encoding distribution-equivalent ngDAGs is called a distribution-equivalence pattern (DEP).
A causal DAG of a linear causal model containing Gaussian disturbances can only be identified up to a DEP. 
Hoyer et al. \cite{hoyer2008b} showed that the PC-LiNGAM can identify a causal graph up to a DEP.

The PC-LiNGAM first estimates a DSEP of a true causal DAG and then generates all DAGs that are consistent with the DSEP. 
For the structural equation model defined by each DAG, test the Gaussianity of OLS residuals $r_1,\ldots,r_p$ for each variable to set $ng$ defined in Definition \ref{def: ngDAG}.
Next, calculate the score for each DAG using the ICA objective function 
\begin{align}
    \label{eq: ICA objective}
    U_{f} = \sum_{i=1}^p
    \left(
    E[|r_{i}|] - \sqrt{\frac{2}{\pi}}
    \right)
\end{align}
and select the highest-scoring DAG. 
If the highest-scoring DAG has directed edges such that the residuals for the variables at both ends are Gaussian, the highest-scoring DAG is modified into a chain graph by replacing the directed edges with undirected edges.
At this stage, at least one residual of the variable at each end of any directed edges of the chain graph is non-Gaussian. 
Finally, using the orientation rules in Algorithm \ref{alg:PC2} as in the PC algorithm, the PC-LiNGAM outputs a DEP. 

Figure \ref{fig: example for PC-LiNGAM} illustrates a procedure of the PC-LiNGAM.
Diamond nodes represent variables with non-Gaussian disturbances, and circle nodes represent variables with Gaussian disturbances.
In this example, the true DAG is a directed complete DAG with four variables, as shown in (a).
Since the PC algorithm does not identify the orientation of any edge, the DSEP forms an undirected complete graph, as shown in (b).
Let (c) be the DAG with the highest score, where the orientations of $x_3 \leftarrow x_4$ are reversed compared to the true DAG (a).
If all the results of the Gaussianity test of residuals are correct, then $x_3 \leftarrow x_4$ is replaced with $x_3 - x_4$.
On the other hand, although the disturbances of $x_1$ and $x_3$ are also Gaussian, the orientation rule in Algorithm \ref{alg:PC2} maintains the orientation.
(d) is the DEP of the output, which, in this case, shows that the correct DEP was returned.

\begin{figure}[th]
    \centering
    \scalebox{0.8}{
    \begin{tikzpicture} 
        \node[draw, circle] (x_1) at (0, 1) {$x_1$};
        \node[draw, diamond] (x_2) at (-1, 0) {$x_2$};
        \node[draw, circle] (x_3) at (1, 0) {$x_3$};
        \node[draw, circle] (x_4) at (0, -1) {$x_4$};
        \draw[thick, ->] (x_1) -- (x_2);
        \draw[thick, ->] (x_1) -- (x_3);
        \draw[thick, ->] (x_1) -- (x_4);
        \draw[thick, ->] (x_2) -- (x_3);
        \draw[thick, ->] (x_2) -- (x_4);
        \draw[thick, ->] (x_3) -- (x_4);
        \node[] (X) at (0, -2) {(a) true graph};
        
        \node[draw, circle] (x_1) at (4, 1) {$x_1$};
        \node[draw, diamond] (x_2) at (3, 0) {$x_2$};
        \node[draw, circle] (x_3) at (5, 0) {$x_3$};
        \node[draw, circle] (x_4) at (4, -1) {$x_4$};
        \draw[thick, -] (x_1) -- (x_2);
        \draw[thick, -] (x_1) -- (x_3);
        \draw[thick, -] (x_1) -- (x_4);
        \draw[thick, -] (x_2) -- (x_3);
        \draw[thick, -] (x_2) -- (x_4);
        \draw[thick, -] (x_3) -- (x_4);
        \node[] (X) at (4, -2) {\makecell{(b)DSEP}};
        
        \node[draw, circle] (x_1) at (8, 1) {$x_1$};
        \node[draw, diamond] (x_2) at (7, 0) {$x_2$};
        \node[draw, circle] (x_3) at (9, 0) {$x_3$};
        \node[draw, circle] (x_4) at (8, -1) {$x_4$};
        \draw[thick, ->] (x_1) -- (x_2);
        \draw[thick, ->] (x_1) -- (x_3);
        \draw[thick, ->] (x_1) -- (x_4);
        \draw[thick, ->] (x_2) -- (x_3);
        \draw[thick, ->] (x_2) -- (x_4);
        \draw[thick, <-] (x_3) -- (x_4);
        \node[] (X) at (8, -2) {\makecell{(c) highest-scoring DAG}};
        
        \node[draw, circle] (x_1) at (12, 1) {$x_1$};
        \node[draw, diamond] (x_2) at (11, 0) {$x_2$};
        \node[draw, circle] (x_3) at (13, 0) {$x_3$};
        \node[draw, circle] (x_4) at (12, -1) {$x_4$};
        \draw[thick, ->] (x_1) -- (x_2);
        \draw[thick, ->] (x_1) -- (x_3);
        \draw[thick, ->] (x_1) -- (x_4);
        \draw[thick, ->] (x_2) -- (x_3);
        \draw[thick, ->] (x_2) -- (x_4);
        \draw[thick, -] (x_3) -- (x_4);
        \node[] (X) at (12, -2) {\makecell{(d) DEP}};
    \end{tikzpicture} 
    }
    \caption{An example to illustrate the PC-LiNGAM when handling a directed complete DAG with four variables.
    }
    \label{fig: example for PC-LiNGAM}
\end{figure}
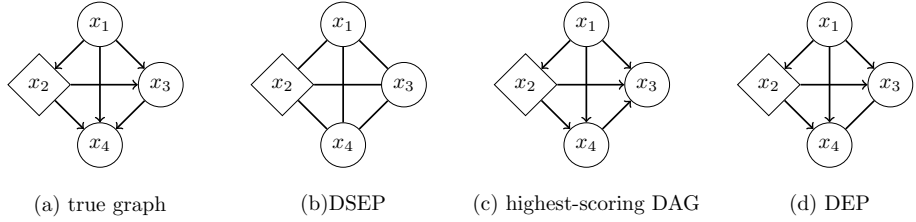

The PC-LiNGAM needs to calculate the objective function of the ICA for all DAGs that are consistent with a MEC. If the true causal DAG is a directed complete DAG, then the corresponding DSEP is an undirected complete graph. Then, any directed complete DAGs are consistent with this d-separation-equivalence pattern.
Thus, the number of graphs for which the objective function of ICA needs to be calculated is $p!$. In other words, the PC-LiNGAM is a factorial time algorithm in the worst case, making it infeasible for high-dimensional data.

\subsection{Ancestor finding in RCD}
\label{sec: rcd}
In the proposed method, the orientation of the undirected edges in a DSEP is determined by estimating the ancestral relationships between adjacent variables. 
Maeda and Shimizu \cite{Maeda2020} proposed an algorithm for estimating ancestral relationships between variables in LiNGAM that can be applied even in the presence of latent confounders. 
To determine the ancestral relationship between $x_i$ and $x_j$, $i \ne j$, consider the following pair of simple regression models,
\begin{equation}
\begin{aligned}
    \label{model:no ancestors}
    x_i &= \frac{\mathrm{Cov}(x_i,x_j)}{\mathrm{Var}(x_j)} x_j + r^{(j)}_i,\\
    x_j &= \frac{\mathrm{Cov}(x_i,x_j)}{\mathrm{Var}(x_i)} x_i + r^{(i)}_j, 
\end{aligned}
\end{equation}
where $r^{(j)}_i$ and $r^{(i)}_j$ are disturbances, and they are continuous and non-Gaussian according to the assumption of LiNGAM. 
Let $Anc_i$ denote the set of ancestors of $x_i$, and let $CA_{ij}$ be the set of common ancestors of $x_i$ and $x_j$. 

\begin{proposition}[Maeda and Shimizu \cite{Maeda2020}]
    \label{Prop:1}
    One of the following four conditions holds for the ancestral relationship between $x_i$ and $x_j$. 
    \begin{compactenumi}
        \item If $x_i \indep x_j$, then $x_i \notin Anc_j \wedge x_j \notin Anc_i$.
        \item If $x_i \indep r^{(i)}_j \wedge x_j \notindep r^{(j)}_i$, then $x_i \in Anc_j$.
        \item If $x_j \indep r^{(j)}_i \wedge x_i \notindep r^{(i)}_j$, then $x_j \in Anc_i$.
        \item If $x_i \notindep  r^{(i)}_j \wedge x_j \notindep r^{(j)}_i$, then $CA_{ij} \neq \emptyset$.
    \end{compactenumi}
    
\end{proposition}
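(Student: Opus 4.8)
The plan is to reduce every independence statement to a condition on which underlying disturbances two quantities share, and then to read off the ancestral relationships from the graph. The essential tool is the Darmois--Skitovich theorem: if two linear combinations $Y_1=\sum_k a_k e_k$ and $Y_2=\sum_k b_k e_k$ of mutually independent random variables $e_k$ are independent, then every $e_k$ with $a_kb_k\neq 0$ must be Gaussian. Writing the model in the reduced form $\bm{X}=(I-B)^{-1}\bm{\epsilon}=M\bm{\epsilon}$ (augmented with the latent-confounder terms where present), each $x_i=\sum_k m_{ik}\epsilon_k$ is such a combination, and so is each residual, since $r^{(j)}_i=x_i-\beta_{ij}x_j=\sum_k(m_{ik}-\beta_{ij}m_{jk})\epsilon_k$ with $\beta_{ij}=\mathrm{Cov}(x_i,x_j)/\mathrm{Var}(x_j)$. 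Because every $\epsilon_k$ is non-Gaussian, Darmois--Skitovich together with its trivial converse yields the clean equivalence that two such combinations are independent if and only if their supports are disjoint; writing $S_i=\{k:m_{ik}\neq 0\}$ for the support of $x_i$, under faithfulness $k\in S_i$ exactly when $x_k$ is an ancestor of $x_i$ or $k=i$ (as $m_{ii}=1$), so a support overlap corresponds precisely to a shared ancestor (observed or latent).

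First I would record the single computation that drives everything. The disjoint-support characterization of $x_i\indep r^{(i)}_j$, read off at the index $k=i$, forces the coefficient of $\epsilon_i$ in $r^{(i)}_j=x_j-\beta_{ji}x_i$ to vanish, i.e. $m_{ji}=\beta_{ji}$ with $\beta_{ji}=\mathrm{Cov}(x_i,x_j)/\mathrm{Var}(x_i)$. Since $m_{ji}\neq 0$ already forces the existence of a directed path from $x_i$ to $x_j$, the whole question reduces to deciding whether $\beta_{ji}$ vanishes.

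Then I would dispatch the four cases. For (i), if $x_i\indep x_j$ then $S_i\cap S_j=\emptyset$; were $x_i\in Anc_j$ the disturbance $\epsilon_i$ would lie in both supports (as $m_{ii}=1$ and $m_{ji}\neq 0$), a contradiction, and symmetrically for $x_j\in Anc_i$. For (ii), the hypothesis $x_j\notindep r^{(j)}_i$ excludes $x_i\indep x_j$, hence $\mathrm{Cov}(x_i,x_j)\neq 0$ and $\beta_{ji}\neq 0$; combined with $m_{ji}=\beta_{ji}$ this gives $m_{ji}\neq 0$, so $x_i\in Anc_j$. Case (iii) is the mirror image. For (iv) I would argue by contraposition: assuming $CA_{ij}=\emptyset$, I would show that at least one of the two residuals is independent of its regressor, so the conjunction in the hypothesis of (iv) cannot occur. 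Indeed, with no common ancestor either $S_i$ and $S_j$ are disjoint (giving $x_i\indep x_j$, hence both residuals independent), or one variable, say $x_i$, is an ancestor of the other; but then $x_i$ can have no ancestors of its own (they would be common ancestors), so $x_i$ is a source with $S_i=\{i\}$, and a direct covariance computation gives $\beta_{ji}=m_{ji}$, which makes the $\epsilon_i$-coefficient of $r^{(i)}_j$ vanish and hence $x_i\indep r^{(i)}_j$.

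The step I expect to be most delicate is the bookkeeping that turns ``shared disturbance'' into ``common ancestor'' once latent confounders are allowed: I must ensure that the augmented reduced form lists every observed and latent disturbance exactly once, that a shared index in $S_i\cap S_j$ other than $i$ or $j$ corresponds to a genuine (possibly latent) element of $CA_{ij}$, and that faithfulness/genericity is invoked only where truly needed, namely to pass from $x_i\in Anc_j$ to $m_{ji}\neq 0$ in case (i); the reverse implication $m_{ji}\neq 0\Rightarrow x_i\in Anc_j$ holds unconditionally and is exactly what the orientation conclusions of (ii) and (iii) rest on.
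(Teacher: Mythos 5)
Your proof is correct, but it is worth noting that the paper itself never proves Proposition \ref{Prop:1}: it is imported from Maeda and Shimizu, and the closest in-paper analogues are Lemmas \ref{lemma: v-structure}, \ref{lemma: complex chain}, \ref{lemma: complex fork}, and \ref{lemma: complex triple}, which establish the Gaussian-inclusive generalization (Theorem \ref{ancestor finding}). Your toolkit is the same as the paper's --- reduced form $\bm{X}=(I-B)^{-1}\bm{\epsilon}$, the Darmois--Skitovich theorem (Theorem \ref{thm:DS}), and faithfulness/genericity --- but your organization is genuinely different and buys brevity: because \emph{all} disturbances are non-Gaussian here, Darmois--Skitovich plus its trivial converse give the clean equivalence ``independent $\Leftrightarrow$ disjoint disturbance supports,'' which lets you prove (ii)/(iii) by pure coefficient bookkeeping ($x_i \indep r^{(i)}_j$ forces $d_{j,i}=\mathrm{Cov}(x_i,x_j)/\mathrm{Var}(x_i)$, and nonvanishing of this quantity forces a directed path) and (iv) by a short contraposition. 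The paper's lemmas cannot use this equivalence --- it fails once Gaussian disturbances are allowed --- so they instead partition the ancestor indices into sets $K_A, K_B, K_C$ and case-split on which disturbances are Gaussian; your contrapositive for (iv) plays the same logical role as the paper's proof-by-exhaustion of Theorem \ref{ancestor finding}(iii), just in a setting where only two configurations need to be excluded.

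One step deserves tightening. In case (ii) you write that $x_j \notindep r^{(j)}_i$ excludes $x_i \indep x_j$, ``hence $\mathrm{Cov}(x_i,x_j)\neq 0$'': dependence does not by itself imply nonzero covariance (cancellation among path coefficients can zero out the covariance of dependent variables), so as written this either silently invokes genericity --- which would contradict your closing claim that genericity is needed only in case (i) --- or needs a different argument. The fix is cheap and uses both conjuncts of the hypothesis: if $\mathrm{Cov}(x_i,x_j)=0$ then both regression coefficients vanish, so $r^{(i)}_j = x_j$ and $r^{(j)}_i = x_i$, and the two conjuncts $x_i \indep r^{(i)}_j$ and $x_j \notindep r^{(j)}_i$ become the contradictory pair $x_i \indep x_j$ and $x_i \notindep x_j$. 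With that repair your accounting of where faithfulness enters (only to pass from $x_k \in Anc_i$ to $d_{i,k}\neq 0$) is accurate, and matches the paper's own usage.
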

For every pair of variables $x_i$ and $x_j$, we can check which of conditions (i) through (iv) in Proposition \ref{Prop:1} is satisfied.
The disturbances $r^{(j)}_i$ and $r^{(i)}_j$ are replaced with the OLS residuals for implementation. 

For all $(x_i,x_j)$ satisfying condition (iv), the determination of the ancestral relationship between $(x_i,x_j)$ is withheld. 
Assume that $x_i \in Anc_j$ or $x_i$ and $x_j$ have no ancestral relationship. 
If there exists $x_k \in CA_{ij}$ and there exists at least one backdoor path from $x_i$ to $x_j$ through $x_{k}$, 
we call $x_k$ a backdoor common ancestor of $x_i$ and $x_j$. 
Let $BCA_{ij}$ denote the set of backdoor common ancestors of $x_i$ and $x_j$.
Let $\mathcal{J}$ be the set of indices of the pair $(i,j)$ such that $(x_i,x_j)$ satisfies condition (iv). 
Then $CA_{ij}$ for $(i,j) \in \mathcal{J}$ always contains at least one backdoor common ancestor of $x_i$ and $x_j$. 

Let $CA^*_{ij}$ be the set of common ancestors of $(x_i,x_j)$ found during checking Proposition \ref{Prop:1} for all pairs of variables. 
In the following, $CA^*_{ij}$ is also considered as a vector. 
To remove the influence of $CA^*_{ij}$ on $x_i$ and $x_j$, consider the regression models,
\begin{equation}
    \begin{aligned}
        \label{model: no ancestors}
        x_i &= {CA^*_{ij}}^\top \cdot \bm{\alpha}_{ij} + v_i, \\
        x_j &= {CA^*_{ij}}^\top \cdot \bm{\alpha}_{ji} + v_j,   
    \end{aligned}
\end{equation}
where $\bm{\alpha}_{ij}$ and $\bm{\alpha}_{ji}$ are
\begin{align*}
\bm{\alpha}_{ij}&=
E\left[
CA_{ij}^* {CA^*_{ij}}^\top
\right]^{-1}
E\left[
CA^*_{ij} x_i
\right], \\    
\bm{\alpha}_{ji}&=
E\left[
CA_{ij}^* {CA^*_{ij}}^\top
\right]^{-1}
E\left[
CA^*_{ij} x_j
\right],
\end{align*}
respectively. 
If $\bm{X}$ follows LiNGAM, $v_i$ and $v_j$ are non-Gaussian disturbances. 
Furthermore, consider the following regression models for $v_i$ and $v_j$, 
\begin{equation}
    \begin{aligned}
        \label{eq: CA1}
        v_i =  \frac{\mathrm{Cov}(v_i,v_j)}{\mathrm{Var}(v_j)} v_j + u_i,\\
        v_j =  \frac{\mathrm{Cov}(v_i,v_j)}{\mathrm{Var}(v_i)} v_i + u_j,
    \end{aligned}
\end{equation}
where $u_i$ and $u_j$ are non-Gaussian disturbances. 
Then Proposition \ref{Prop:1} is generalized as follows. 
\begin{proposition}[Maeda and Shimizu \cite{Maeda2020}]
\label{Prop:2}
     One of the following four conditions holds for the ancestral relationship between $(x_i, x_j)$ for $(i,j) \in {\cal J}$. 
    \begin{compactenumi}
        \item If $v_i \indep v_j$, then $x_i \notin Anc_j \wedge x_j \notin Anc_i$.
        \item If $v_i \indep u_j \wedge v_j \notindep u_i$, then $x_i \in Anc_j$.
        \item If $v_j \indep u_j \wedge v_i \notindep u_j$, then $x_j \in Anc_i$.
        \item If $v_i \notindep u_j \wedge v_j \notindep u_i$, then $CA_{ij} \setminus CA^*_{ij} \ne \emptyset$. 
    \end{compactenumi}
\end{proposition}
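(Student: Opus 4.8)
The plan is to reduce Proposition \ref{Prop:2} to Proposition \ref{Prop:1} by showing that the residual pair $(v_i,v_j)$ obtained after projecting out $CA^*_{ij}$ is itself a valid LiNGAM pair in which the confounding through the already-discovered common ancestors has been completely eliminated. Concretely, I would first write every variable in its reduced form $x_i=\bm{a}_i^\top\bm{\epsilon}$, where $\bm{a}_i$ is the $i$-th row of $(I-B)^{-1}$ and $\bm{a}_{ik}\neq 0$ generically exactly when $x_k\in Anc_i\cup\{x_i\}$. Because the population OLS projection onto $\mathrm{span}(CA^*_{ij})$ is linear, both residuals inherit linear representations $v_i=\bm{c}_i^\top\bm{\epsilon}$ and $v_j=\bm{c}_j^\top\bm{\epsilon}$ in the same independent non-Gaussian sources, and hence $u_i,u_j$ are linear in $\bm{\epsilon}$ as well. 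The problem is thereby turned into a question about which disturbances $\epsilon_k$ appear jointly in the supports of these linear forms.

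The technical heart, and the step I expect to be the main obstacle, is the support characterization: I must prove that projecting out $CA^*_{ij}$ removes from the joint support every disturbance routed through a common ancestor in $CA^*_{ij}$, while leaving intact both the disturbance carried along a directed path $x_i\to\cdots\to x_j$ whenever $x_i\in Anc_j$, and the disturbances routed through any common ancestor in $CA_{ij}\setminus CA^*_{ij}$. Since linear adjustment coincides with backdoor adjustment under linearity, blocking the backdoor paths through $CA^*_{ij}$ is the conceptual content of the removal; the delicate part is verifying that the OLS coefficients do not produce accidental cancellations that would either erase a surviving common-ancestor contribution or, worse, cancel the $x_i\to x_j$ signal. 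I would handle this by expanding the entries of $\bm{c}_i,\bm{c}_j$ as sums over directed paths and treks in the graph and arguing that any such cancellation forces a nontrivial polynomial identity among the edge coefficients, hence occurs only on a measure-zero exceptional set; this is where the ``generic'' qualifier enters. One must also check that $E[CA^*_{ij}{CA^*_{ij}}^\top]$ is invertible, which again holds generically.

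Granting the support characterization, the conclusion is immediate. Applying the classical Darmois--Skitovitch theorem to the linear forms shows that two such combinations are independent precisely when their supports share no (non-Gaussian) disturbance, so $(v_i,v_j)$ behaves exactly like an $(x_i,x_j)$ pair in a LiNGAM whose common-ancestor set is $CA_{ij}\setminus CA^*_{ij}$ and whose ancestral relation between the two endpoints is unchanged. Conditions (i)--(iv) of Proposition \ref{Prop:2} then follow verbatim from the corresponding cases of Proposition \ref{Prop:1}, with $r^{(j)}_i,r^{(i)}_j$ replaced by $u_i,u_j$ and $CA_{ij}$ replaced by $CA_{ij}\setminus CA^*_{ij}$; case (iii) is obtained from case (ii) by exchanging the roles of $i$ and $j$. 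In particular, condition (iv) now detects exactly the presence of a common ancestor not yet contained in $CA^*_{ij}$, which is the only new feature relative to Proposition \ref{Prop:1}.
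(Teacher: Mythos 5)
You should first note that the paper does not actually prove this statement: Proposition \ref{Prop:2} is quoted verbatim from Maeda and Shimizu \cite{Maeda2020}, and the closest the paper comes to a proof is the analogous machinery in its Appendix (Lemmas \ref{lemma: complex chain}--\ref{lemma: complex triple}), where the corresponding claims for the Gaussian-inclusive model are established by writing all residuals as explicit linear combinations of the disturbances over a partition of index sets and invoking the Darmois--Skitovitch theorem plus genericity. Your overall architecture --- linear representations in $\bm{\epsilon}$, a support analysis, Darmois--Skitovitch, and a measure-zero argument for accidental cancellations --- is exactly that standard technique, so the skeleton of your proposal is sound and consistent with how the paper handles its own generalizations.

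However, your central intermediate claim, the ``support characterization,'' is false as stated, and proving it as you describe would fail. You assert that projecting out $CA^*_{ij}$ removes from $v_i$ and $v_j$ every disturbance routed through a member of $CA^*_{ij}$. This exact removal holds only when the regressed set is ancestrally closed. Since $CA^*_{ij}$ is just the set of common ancestors \emph{found so far}, an unfound common ancestor $x_l \in CA_{ij}\setminus CA^*_{ij}$ may be an ancestor of a found one $x_m \in CA^*_{ij}$; then $\epsilon_l$ appears both in the regressors and in the part of $x_i$ not explained by them, the population OLS coefficients acquire a correction term $E[CA^*_{ij}{CA^*_{ij}}^\top]^{-1}E[CA^*_{ij}R_i]\neq 0$, and the disturbances of $CA^*_{ij}$ (and of its ancestors) \emph{leak back} into the support of $v_i$. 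In other words, the clean reduction of $(v_i,v_j)$ to a Proposition \ref{Prop:1} pair with common-ancestor set $CA_{ij}\setminus CA^*_{ij}$ is unavailable precisely in the regime where $CA_{ij}\setminus CA^*_{ij}\neq\emptyset$. The repair is a case split rather than a uniform reduction: when $CA_{ij} = CA^*_{ij}$, the set $CA_{ij}$ is ancestrally closed (ancestors of common ancestors are common ancestors), removal is exact, and your reduction yields cases (i)--(iii); when $CA_{ij}\setminus CA^*_{ij}\neq\emptyset$, no reduction is needed --- a surviving common ancestor's non-Gaussian disturbance appears with generically nonzero coefficients in all of $v_i,v_j,u_i,u_j$, so both $v_i \notindep u_j$ and $v_j \notindep u_i$ hold generically, which is all that case (iv) asserts (the extra leakage only adds dependence and is harmless there). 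It is worth observing that the paper's own Lemmas \ref{lemma: complex fork} and \ref{lemma: complex triple} implement exactly this fix by defining the index block $K_A$ through the ancestral closure $\overline{BCA}_{ij} = BCA_{ij} \cup \bigl(\bigcup_{k:x_k \in BCA_{ij}} Anc_k\bigr)$ rather than through the raw found set.
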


Proposition \ref{Prop:1} is the case where $CA_{ij}=\emptyset$ and $\mathcal{J}=\emptyset$. 
For implementation, the disturbances $v$ and $u$ are replaced with OLS residuals. 
If $(x_i,x_j)$ satisfies condition (iv) of Proposition \ref{Prop:2}, the determination of the ancestral relationship between $x_i$ and $x_j$ is withheld. 
After checking Proposition \ref{Prop:2} for all $(x_i, x_j)$ such that $(i,j) \in \mathcal{J}$, update $\mathcal{J}$ to the set $(i,j)$ satisfying condition (iv) in Proposition \ref{Prop:2}. 
If $\mathcal{J} \ne \emptyset$, recheck Proposition \ref{Prop:2}. 
Theoretically, if the model contains no latent confounders, the procedure in Proposition \ref{Prop:2} can be repeated until $\mathcal{J} = \emptyset$ to completely determine the ancestral relationships of all $(x_i,x_j)$ in a causal DAG.

%%%
%%% Sec 3 Proposed
%%%
\section{Proposed Algorithm}
\label{sec: proposed alg}
This section introduces the proposed algorithm in detail.
We assume that the causal model is linear for 
$\bm{X}=(x_1,\ldots,x_p)^\top$ 
\begin{equation}
    \label{eq:SEM}
    \bm{X} = B \bm{X} + \bm{\epsilon}, 
\end{equation}
which is apparently the same as the model (\ref{model:LiNGAM}).
$B$ is a $p \times p$ matrix that can be transformed into a strictly lower triangular matrix by permuting the rows and columns. 
Let $G=(\bm{X},E)$ be the causal DAG that defines (\ref{eq:SEM}). 
As with the PC-LiNGAM, we assume that the disturbances
$\bm{\epsilon} = (\epsilon_1,\ldots,\epsilon_p)^\top$ are distributed as arbitrary continuous distributions, including the Gaussian distribution. 
We also assume the faithfulness assumption to the model (\ref{eq:SEM}). 

The proposed algorithm first obtains a DSEP by applying the PC algorithm to $\bm{X}$, as in the PC-LiNGAM. Then, it orients the undirected edges in the DSEP by estimating the ancestral relationships between the adjacent variable pairs in the DSEP. 

In the following, let $Pa_i$ and $Des_{i}$ denote the set of parents of $x_i$ and the set of descendants of $x_{i}$, respectively. 
The proofs of theorems in this section will be provided in the Appendix. 

\subsection{Ancestor finding in the model containing Gaussian disturbances}
In the proposed method, we determine the orientation of an undirected edge in a DSEP by estimating the ancestral relationship between two nodes connected by the undirected edge. 
We refer once again to the regression model in (\ref{model:no ancestors}). 
In the following, let $\mathcal{G}$ and $\mathcal{NG}$ denote Gaussian and non-Gaussian distributions, respectively. 
When the model is linear, the following theorem holds. 
\begin{theorem}
\label{ancestor finding simple}
Assume that $x_i \sim \mathcal{G}$ and $x_j \sim \mathcal{NG}$ and that $x_i$ and $x_j$ are adjacent in the true causal DAG $G = (\bm{X},E)$. 
Then, $x_i \to x_j \in E$. 
\end{theorem}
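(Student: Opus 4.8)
The plan is to work with the reduced form $\bm{X} = (I-B)^{-1}\bm{\epsilon}$ of the model (\ref{eq:SEM}), which expresses each coordinate as $x_i = \sum_{k \in \{i\}\cup Anc_i} a_{ik}\epsilon_k$, where $a_{ik}$ is the $(i,k)$ entry of $(I-B)^{-1}$, equivalently the total causal effect of $x_k$ on $x_i$, and the disturbances $\epsilon_k$ are mutually independent. The two tools I would invoke are Cram\'er's decomposition theorem (if a sum of independent random variables is Gaussian, then every non-degenerate summand is itself Gaussian) and the faithfulness assumption, which I use only to guarantee that the relevant total effects do not vanish.

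\textbf{First step.} I would first establish the single-variable fact that $x_i \sim \mathcal{G}$ forces $\epsilon_k \sim \mathcal{G}$ for every $k$ with $a_{ik}\neq 0$. This is immediate from Cram\'er's theorem applied to $x_i = \sum_k a_{ik}\epsilon_k$: a Gaussian variable cannot be written as a sum of independent variables unless each non-degenerate summand is Gaussian. Combining this with the (to-be-justified) fact that $a_{ik}\neq 0$ for every $k \in \{i\}\cup Anc_i$, I conclude that the disturbance of $x_i$ and the disturbance of each of its ancestors are all Gaussian.

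\textbf{Main step.} Since $x_i$ and $x_j$ are adjacent in $G$, exactly one of $x_i\to x_j$ or $x_j\to x_i$ holds; I argue by contradiction, assuming $x_j\to x_i$. Then $x_j$ is a parent, hence an ancestor, of $x_i$, and by transitivity every ancestor of $x_j$ is also an ancestor of $x_i$, so $\{j\}\cup Anc_j \subseteq \{i\}\cup Anc_i$. By the first step, every disturbance $\epsilon_l$ with $l\in\{j\}\cup Anc_j$ is Gaussian. But $x_j = \sum_{l\in\{j\}\cup Anc_j} a_{jl}\epsilon_l$ is then a linear combination of independent Gaussian disturbances, hence Gaussian, contradicting $x_j\sim\mathcal{NG}$. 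Therefore $x_j\to x_i$ is impossible, and the edge must be $x_i\to x_j$, as claimed.

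\textbf{Main obstacle.} The delicate point is the claim $a_{ik}\neq 0$ for $k\in Anc_i$ used in the first step, since $a_{ik}$ is a signed sum over all directed paths from $x_k$ to $x_i$ and could in principle cancel. I would discharge this with faithfulness: because the disturbances are independent, $\mathrm{Cov}(\epsilon_k, x_i) = a_{ik}\,\mathrm{Var}(\epsilon_k)$, so $a_{ik}$ vanishes exactly when $\epsilon_k$ and $x_i$ are uncorrelated; a directed path $\epsilon_k\to x_k\to\cdots\to x_i$ d-connects them, and faithfulness forbids the corresponding independence, forcing $a_{ik}\neq 0$. Equivalently, vanishing of such a total effect is a measure-zero event in the parameter space of $B$, matching the genericity caveat used elsewhere in the paper. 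A minor bookkeeping point is to treat degenerate summands in Cram\'er's theorem correctly, but since each $\epsilon_k$ is a genuine non-constant disturbance this causes no difficulty.
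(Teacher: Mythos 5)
Your proof is correct and takes essentially the same approach as the paper's: both argue by contradiction from $x_j \to x_i$, expand the variables in the reduced form $\bm{X}=(I-B)^{-1}\bm{\epsilon}$, and combine Cram\'er's decomposition theorem with faithfulness (nonvanishing total effects $d_{i,k}$) to reach a contradiction with the assumed Gaussianity/non-Gaussianity. The only difference is cosmetic direction of propagation --- you push the Gaussianity of $x_i$ up onto all ancestral disturbances and then back down to conclude $x_j \sim \mathcal{G}$, whereas the paper pushes a non-Gaussian disturbance in $Anc_j$ forward into $x_i$ to conclude $x_i \sim \mathcal{NG}$ --- and your first step is exactly the paper's Corollary \ref{cor: ancestor finding simple}, proved before rather than after the theorem.
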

For two adjacent variables, one is Gaussian, and the other is non-Gaussian, their ancestral relationship is necessarily determined, with the Gaussian variable being the ancestor of the non-Gaussian variable. 
The next corollary follows directly from Theorem \ref{ancestor finding simple}.
\begin{corollary}
\label{ancestor finding simple corollary}
\label{cor: ancestor finding simple}
    For a variable $x_{i} \in \bm{X}$, 
    \begin{enumerate}
        \item $x_{i} \sim \mathcal{G}$ implies that $\forall x_{k} \in Anc_{i}, x_{k} \sim \mathcal{G}$.
        \item $x_{i} \sim \mathcal{NG}$ implies that $\forall x_{k} \in Des_{i}, x_{k} \sim \mathcal{NG}$.
    \end{enumerate} 
\end{corollary}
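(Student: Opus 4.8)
The plan is to reduce both statements to the single-edge (parent–child) case, since Theorem~\ref{ancestor finding simple} applies only to pairs that are \emph{adjacent} in $G$, whereas an ancestor need not be adjacent to its descendant. Once the claim is established for a parent and its child, I would propagate it along a directed path by induction, using acyclicity of $G$ at each step.

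For part 1, I would first prove the following local claim: if $x_i \sim \mathcal{G}$ and $x_k \in Pa_i$, then $x_k \sim \mathcal{G}$. The argument is by contradiction. Suppose $x_k \sim \mathcal{NG}$. Since $x_k \in Pa_i$, the vertices $x_k$ and $x_i$ are adjacent, with $x_k \to x_i \in E$. Applying Theorem~\ref{ancestor finding simple} to the adjacent pair consisting of the Gaussian variable $x_i$ and the non-Gaussian variable $x_k$ forces $x_i \to x_k \in E$. This contradicts $x_k \to x_i \in E$ because $G$ is acyclic, so $x_k \sim \mathcal{G}$. To extend this to an arbitrary ancestor $x_k \in Anc_i$, take a directed path $x_k = y_0 \to y_1 \to \cdots \to y_n = x_i$. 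Starting from $y_n = x_i$, which is Gaussian by hypothesis, the local claim applied to the edge $y_{n-1} \to y_n$ gives $y_{n-1} \sim \mathcal{G}$; iterating backward along the path yields $y_{n-1}, y_{n-2}, \ldots, y_0 \sim \mathcal{G}$, and in particular $x_k = y_0 \sim \mathcal{G}$.

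For part 2, I would obtain the statement as the contrapositive of part 1 rather than repeating the induction. Let $x_i \sim \mathcal{NG}$ and $x_k \in Des_i$, so that $x_i \in Anc_k$. If $x_k$ were Gaussian, then part 1 applied to $x_k$ would force every element of $Anc_k$, and in particular $x_i$, to be Gaussian, contradicting $x_i \sim \mathcal{NG}$. Hence $x_k \sim \mathcal{NG}$. (Equivalently, one could mirror the single-edge argument: a non-Gaussian $x_i$ with a Gaussian child $x_k$ would be reoriented by Theorem~\ref{ancestor finding simple} into $x_k \to x_i$, again contradicting acyclicity, and then chain along a directed path from $x_i$ to $x_k$.)

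The only point requiring care — and the one I would flag as the main obstacle — is the reduction from ancestors to adjacent parents: Theorem~\ref{ancestor finding simple} is stated only for adjacent pairs, so it cannot be invoked directly on a non-adjacent ancestor–descendant pair. The induction along a directed path, together with acyclicity of $G$ to exclude the reversed orientation at every edge, is precisely what bridges this gap.
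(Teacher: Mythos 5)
Your proof is correct and takes essentially the same approach as the paper: the paper gives no separate proof, asserting only that the corollary ``follows directly'' from Theorem~\ref{ancestor finding simple}, and your argument---the parent--child contradiction via acyclicity, chained inductively along a directed path, with part~2 obtained as the contrapositive of part~1---is exactly that derivation made explicit.
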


We generalize the ancestor-finding by Maeda and Shimizu \cite{Maeda2020} to the case where the model may contain Gaussian disturbances. 
We can obtain the following theorem, which generalizes Proposition \ref{Prop:1} to the case with Gaussian disturbances.

\begin{theorem}
    \label{ancestor finding}
    One of the following three conditions holds for the ancestral relationship between $x_i$ and $x_j$. 
    \begin{compactenumi}
        \item If $x_i \indep x_j$, $x_i \notin Anc_j$ and $x_j \notin Anc_i$. 
        \item $(x_i \notindep x_j) \wedge (x_i, x_j \sim \mathcal{NG}) \wedge (r^{(i)}_j \indep x_i) \wedge (r^{(j)}_i\notindep x_j) \Rightarrow (BCA_{ij} = \emptyset) \wedge (x_i \in Anc_j)$.
        \item $(x_i \notindep x_j )\wedge (x_i, x_j \sim \mathcal{NG}) \wedge (r^{(i)}_j \notindep x_i) \wedge (r^{(j)}_i\notindep x_j) \Rightarrow BCA_{ij} \neq \emptyset$.
    \end{compactenumi}
\end{theorem}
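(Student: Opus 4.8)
\noindent\emph{Proof strategy.} The plan is to pass to the reduced form $\bm{X}=(I-B)^{-1}\bm{\epsilon}$ and write each variable as a linear combination $x_k=\sum_l a_{kl}\epsilon_l$ of the independent disturbances, where generically (off a measure-zero set of the coefficients) $a_{kl}\neq 0$ exactly when $x_l\in Anc_k\cup\{x_k\}$. Both residuals are then linear combinations of the $\epsilon_l$ as well; for instance $r^{(i)}_j=x_j-\beta_{ji}x_i=\sum_l\bigl(a_{jl}-\beta_{ji}a_{il}\bigr)\epsilon_l$ with $\beta_{ji}=\mathrm{Cov}(x_i,x_j)/\mathrm{Var}(x_i)$. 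The central tool is a refined version of the Darmois--Skitovich theorem: since an OLS residual is uncorrelated with its regressor by construction, one has $r^{(i)}_j\indep x_i$ if and only if every disturbance $\epsilon_l$ appearing with a nonzero coefficient in both $r^{(i)}_j$ and $x_i$ is Gaussian. In words, independence of the residual is broken \emph{only} by a shared non-Gaussian component. Condition (i) is then immediate: under faithfulness a directed path between $x_i$ and $x_j$ would create dependence, so $x_i\indep x_j$ forces the absence of any ancestral relation.

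First I would isolate the ``clean'' case. If $x_i\in Anc_j$ and $BCA_{ij}=\emptyset$, then every ancestral influence reaching $x_j$ from the $x_i$-side is fully mediated by $x_i$, so $x_j=\gamma x_i+w_j$ with $w_j\indep x_i$ and total effect $\gamma$; consequently $\beta_{ji}=\gamma$ and $r^{(i)}_j=w_j\indep x_i$, while the anti-causal residual $r^{(j)}_i$ retains the shared non-Gaussian disturbance of $x_i$ (this is where $x_i\sim\mathcal{NG}$ enters), so that $r^{(j)}_i\notindep x_j$. This reproduces exactly the hypotheses of (ii). The converse, and the heart of the argument, is to show that any backdoor common ancestor shifts the regression coefficient away from the total effect: if $BCA_{ij}\neq\emptyset$ the confounding covariance makes $w_j\notindep x_i$ and hence $\beta_{ji}\neq\gamma$ generically. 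Because $x_i\sim\mathcal{NG}$ supplies a non-Gaussian disturbance $\epsilon_m$ of $x_i$ whose pass-through coefficient into $x_j$ is $\gamma a_{im}$, its net coefficient in $r^{(i)}_j$ becomes $(\gamma-\beta_{ji})a_{im}\neq 0$, a shared non-Gaussian component, so $r^{(i)}_j\notindep x_i$.

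With these two facts I would assemble the implications by exhaustion over the possible structures (no relation, $x_i\in Anc_j$ only, $x_j\in Anc_i$ only, confounding), all taken generically. For (ii), the assumption $r^{(i)}_j\indep x_i$ rules out confounding and rules out the $x_j\in Anc_i$ configuration (the latter would force $r^{(j)}_i\indep x_j$, contradicting the hypothesis), leaving $x_i\in Anc_j$ with $BCA_{ij}=\emptyset$. For (iii), $r^{(i)}_j\notindep x_i$ and $r^{(j)}_i\notindep x_j$ simultaneously exclude both pure-ancestor configurations, while $x_i\notindep x_j$ excludes the no-relation case, so only a backdoor common ancestor remains, giving $BCA_{ij}\neq\emptyset$. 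The main obstacle is the coefficient-shift step above in the presence of \emph{Gaussian} disturbances: I must rule out that a backdoor common ancestor carrying only Gaussian disturbances stays invisible to the independence test. The resolution is that such a confounder still perturbs $\beta_{ji}$ away from $\gamma$, which then reveals itself through the genuinely non-Gaussian disturbance guaranteed by $x_i,x_j\sim\mathcal{NG}$; the genericity (measure-zero) hypothesis is precisely what prevents the confounding covariance, or the resulting residual coefficient, from cancelling by accident. This is the point at which the generalization of Maeda and Shimizu's Proposition~\ref{Prop:1} genuinely requires the new non-Gaussianity bookkeeping.
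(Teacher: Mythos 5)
Your proposal is correct and takes essentially the same route as the paper: the paper also works in the reduced form $\bm{X}=(I-B)^{-1}\bm{\epsilon}$, shows that absence of backdoor common ancestors makes the regression coefficient equal the total effect (Lemma \ref{lem:noBCA}), and then combines the contrapositive of the Darmois--Skitovich theorem with the fact that uncorrelated Gaussian components are independent, in an exhaustive case analysis over the structural configurations (Lemmas \ref{lemma: v-structure}, \ref{lemma: complex chain}, \ref{lemma: complex fork}, and \ref{lemma: complex triple}), invoking genericity exactly where you do, namely to keep the shifted residual coefficients nonzero. The differences are only presentational: the paper partitions the disturbance indices into explicit sets $K_A$, $K_B$, $K_C$ (and $K_D$) instead of packaging the shared-non-Gaussian-component criterion as a single if-and-only-if statement.
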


(ii) and (iii) in Theorem \ref{ancestor finding} focus on the non-Gaussianity of $x_i$ and $x_j$. Even without assuming that all the disturbances are non-Gaussian as in Maeda and Shimizu \cite{Maeda2020}, $x_i$ and $x_j$ could be non-Gaussian if some of the disturbances for their ancestors are non-Gaussian.
Therefore, Theorem \ref{ancestor finding} is a generalization of Proposition \ref{Prop:1} to the case where the model may contain a Gaussian disturbance.
Although Theorem \ref{ancestor finding} holds for any pair $x_i$ and $x_j$, the proposed method applies the theorem only to the two adjacent variables in a DSEP. In this case, $x_i \in Anc_j$ implies $x_i \to x_j \in E$. 
Since $x_i$ and $x_j$ are adjacent and hence dependent, the proposed method does not use the condition (i) in Theorem \ref{ancestor finding}. 
Under the conditions (ii) and (iii), if both $x_i$ and $x_j$ are non-Gaussian, it is determined that either $x_{i} \rightarrow x_{j}\in E$, $x_{i} \leftarrow x_{i} \in E$, or $BCA_{ij} \ne \emptyset$.
If only one of $x_{i}$ or $x_{j}$ is Gaussian, the orientation is determined by Theorem \ref{ancestor finding simple}.
If both $x_{i}$ and $x_{j}$ are Gaussian, the directions between $x_{i}$ and $x_{j}$ are not identifiable.  

Similarly to ancestor-finding in RCD in Section \ref{sec: rcd}, the determination of the orientation between $x_i$ and $x_j$ that satisfies condition (iii) in Theorem \ref{ancestor finding} is withheld. 
Let $BCA^*_{ij}$ denote the set of backdoor common ancestors of $x_i$ and $x_j$ found while checking the conditions (ii) and (iii) in Theorem \ref{ancestor finding}. 
$BCA^*_{ij}$ is also considered as a vector. 
As in the ancestor-finding in RCD, to remove the influence of $BCA^*_{ij}$ on $x_i$ and $x_j$, we consider the following linear regression model corresponding to (\ref{model: no ancestors}), 
\begin{equation}
    \begin{aligned}
        \label{model: no ancestors BCA star}
        x_i &= {BCA^{*}_{ij}}^\top \cdot \bm{\beta}_{ij} + v_i, \\
        x_j &= {BCA^{*}_{ij}}^\top \cdot \bm{\beta}_{ji} + v_j,
    \end{aligned}
\end{equation}
and the regression model for $v_i$ and $v_j$ (\ref{eq: CA1}). 
Then, corresponding to Theorems \ref{ancestor finding simple} and \ref{ancestor finding}, we can obtain the following Theorem. 
\begin{theorem}
\label{ancestor finding simple BCA star}
        Assume that $v_i \sim \mathcal{G}$ and $v_j \sim \mathcal{NG}$ and $x_i$ and $x_j$ are adjacent in $G$. 
    Then $x_i \to x_j \in E$. 
\end{theorem}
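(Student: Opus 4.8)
The plan is to imitate the proof of Theorem~\ref{ancestor finding simple}, but with the residuals $v_i,v_j$ from~(\ref{model: no ancestors BCA star}) playing the roles of $x_i,x_j$, and to argue by contradiction with the L\'evy--Cram\'er decomposition theorem as the main tool: a finite linear combination of mutually independent random variables is Gaussian if and only if every summand whose coefficient is nonzero is itself Gaussian. First I would pass to the reduced form $\bm{X}=(I-B)^{-1}\bm{\epsilon}$, so that each $x_k=\sum_{x_l\in \mathrm{An}_k}a_{kl}\,\epsilon_l$ is a linear combination of the independent disturbances, where $\mathrm{An}_k:=Anc_k\cup\{x_k\}$ and, under faithfulness, $a_{kl}\neq 0$ for every $x_l\in \mathrm{An}_k$. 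Substituting this into~(\ref{model: no ancestors BCA star}) and using linearity of the population projection onto $BCA^{*}_{ij}$, I would write the residuals as linear combinations $v_i=\sum_l d^{(i)}_l\epsilon_l$ and $v_j=\sum_l d^{(j)}_l\epsilon_l$ of the same disturbances.

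Next I would record the structure of these two expansions. Because every element of $BCA^{*}_{ij}$ is a common ancestor of $x_i$ and $x_j$, the disturbances $\epsilon_i,\epsilon_j$ are independent of $BCA^{*}_{ij}$, so the projection leaves them untouched and $d^{(i)}_i=d^{(j)}_j=1$; moreover the projection onto $BCA^{*}_{ij}$ can only introduce disturbances of ancestors of its regressors, so $\epsilon_l$ appears in $v_i$ only if $x_l\in \mathrm{An}_i$ and in $v_j$ only if $x_l\in \mathrm{An}_j$. Applying L\'evy--Cram\'er to the hypothesis $v_i\sim\mathcal{G}$ then forces every $\epsilon_l$ with $d^{(i)}_l\neq 0$ to be Gaussian, whereas the hypothesis $v_j\sim\mathcal{NG}$ supplies at least one index $l^{*}$ with $d^{(j)}_{l^{*}}\neq 0$ and $\epsilon_{l^{*}}\sim\mathcal{NG}$; in particular $x_{l^{*}}\in \mathrm{An}_j$.

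To close the argument I would suppose, for contradiction, that $x_i\to x_j\notin E$; since $x_i$ and $x_j$ are adjacent in $G$ this forces $x_j\to x_i\in E$, whence $\mathrm{An}_j\subseteq Anc_i\subseteq \mathrm{An}_i$ and therefore $x_{l^{*}}\in \mathrm{An}_i$. Thus $\epsilon_{l^{*}}$ occurs in $x_i$ with nonzero reduced-form coefficient $a_{i,l^{*}}$, and the remaining task is to show that it is not annihilated by the projection onto $BCA^{*}_{ij}$, i.e. that $d^{(i)}_{l^{*}}\neq 0$. Granting this, $v_i$ would contain the non-Gaussian independent component $\epsilon_{l^{*}}$, so L\'evy--Cram\'er would give $v_i\sim\mathcal{NG}$, contradicting $v_i\sim\mathcal{G}$; hence $x_i\to x_j\in E$, exactly the conclusion of Theorem~\ref{ancestor finding simple}.

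I expect this non-annihilation step to be the main obstacle. The coefficient $d^{(i)}_{l^{*}}$ equals $a_{i,l^{*}}$ minus the matching projection coefficient and is a rational function of the entries of $B$, so it could in principle vanish by an exact cancellation even when $a_{i,l^{*}}\neq 0$. I would dispose of this exactly as the paper controls its other identifiability claims, namely generically: the event $d^{(i)}_{l^{*}}=0$ cuts out an algebraic subset of the parameter space and hence has Lebesgue measure zero unless it is forced to hold identically, and the identically-vanishing case is excluded because there is a directed path from $x_{l^{*}}$ to $x_i$ that is not intercepted by $BCA^{*}_{ij}$, so $\epsilon_{l^{*}}$ genuinely survives the projection. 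This argument also recovers Theorem~\ref{ancestor finding simple} as the degenerate case $BCA^{*}_{ij}=\emptyset$, where no projection is performed and the cancellation issue is vacuous.
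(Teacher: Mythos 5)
Your proof is correct and follows essentially the route the paper intends: the paper omits this proof, saying it is ``much the same'' as that of Theorem~\ref{ancestor finding simple}, and your argument is exactly that adaptation --- expand the residuals $v_i$ and $v_j$ in the independent disturbances, apply Cram\'er's decomposition theorem (Theorem~\ref{thm: Cramer}), and derive a contradiction from the assumption $x_j \to x_i \in E$ using faithfulness. Your explicit generic treatment of the possible cancellation $d^{(i)}_{l^*}=0$ correctly isolates the one step with no analogue in the simple case (where faithfulness alone gives $d_{i,k}\neq 0$), and it matches how the paper handles the identical issue in its own proofs, namely the ``generically holds'' conditions~(\ref{eq:cond_nonzero}) in Lemmas~\ref{lemma: complex fork} and~\ref{lemma: complex triple} and the generic statement of Theorem~\ref{thm:identifiability}.
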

\begin{theorem}
    \label{ancestor finding BCA star}
    One of the following three conditions holds for the ancestral relationship between $x_i$ and $x_j$. 
    \begin{compactenumi}
%        \item $(v_i \sim \mathcal{G}, v_j \sim \mathcal{NG}) \Rightarrow (x_{i}, x_{j}) \in E$.
        \item $v_i \indep v_j \Rightarrow v_i \notin Anc_j \wedge v_j \notin Anc_i$.   
        \item $(v_i \notindep v_j) \wedge (v_i, v_j \sim \mathcal{NG}) \wedge (v_i \indep u_j) \wedge (v_j \notindep u_i) \Rightarrow x_{i} \in Anc_j \in E$.
        \item $(v_i \notindep v_j) \wedge (v_i, v_j \sim \mathcal{NG}) \wedge (v_j \notindep u_i) \wedge (v_i\notindep u_j) \Rightarrow BCA \setminus BCA^{*}_{ij} \neq \emptyset$.
\end{compactenumi}
\end{theorem}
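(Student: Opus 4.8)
The plan is to reduce Theorem~\ref{ancestor finding BCA star} to the already-established Theorem~\ref{ancestor finding} by showing that regressing $x_i$ and $x_j$ on $BCA^*_{ij}$ in (\ref{model: no ancestors BCA star}) produces residuals $v_i,v_j$ obeying a linear structural relationship of exactly the same form as the original pair, but in a model whose backdoor confounding through $BCA^*_{ij}$ has been removed. First I would write the reduced form $\bm{X}=(I-B)^{-1}\bm{\epsilon}=A\bm{\epsilon}$, so that each of $x_i,x_j$ is a linear combination of the disturbances $\epsilon_k$ with the coefficient $a_{ik}$ nonzero exactly when $x_k\in Anc_i\cup\{x_i\}$. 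Because the regression in (\ref{model: no ancestors BCA star}) is linear, $v_i$ and $v_j$ are again linear combinations of $\bm{\epsilon}$, and the content of the theorem is to read off their dependence pattern through the auxiliary regression (\ref{eq: CA1}).

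The core structural step is a blocking lemma: since $BCA^*_{ij}\subseteq Anc_i\cap Anc_j$ consists of backdoor common ancestors, the linear projection onto $\mathrm{span}(BCA^*_{ij})$ removes precisely the confounding contributions that travel through those nodes, while leaving intact (a) the direct causal contribution along $x_i\to\cdots\to x_j$ when $x_i\in Anc_j$, and (b) the confounding through any remaining common ancestor in $BCA\setminus BCA^*_{ij}$. I would establish this by expressing $\mathrm{Cov}(v_i,v_j)$ in terms of $A$ and the projection, and verifying that the disturbances $\epsilon_k$ with $x_k\in BCA^*_{ij}$ no longer appear jointly in $v_i$ and $v_j$, whereas $\epsilon_i$ (and the disturbance of any unblocked common ancestor) still appears in both. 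Crucially, one must check that no node of $BCA^*_{ij}$ lies on the directed path from $x_i$ to $x_j$, so the direct effect is not erased; this follows from the definition of a backdoor common ancestor together with faithfulness.

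Given this representation, the three cases follow from the Darmois--Skitovich theorem applied to $(v_i,v_j)$ and to their residuals $(u_i,u_j)$ in (\ref{eq: CA1}), exactly mirroring the argument behind Theorem~\ref{ancestor finding}. For case~(i), $v_i\indep v_j$ forces (by faithfulness) the absence of both a directed path and of shared non-Gaussian disturbances, giving $v_i\notin Anc_j\wedge v_j\notin Anc_i$. For case~(ii), the hypotheses $v_i,v_j\sim\mathcal{NG}$, $v_i\indep u_j$, and $v_j\notindep u_i$ combine as follows: $v_i\indep u_j$ forces every disturbance shared between $v_i$ and $u_j$ to be Gaussian, while $v_i\sim\mathcal{NG}$ guarantees that $v_i$ carries a non-Gaussian disturbance; since $v_j\notindep u_i$ rules out independence and the reverse orientation, the only structural configuration consistent with this asymmetric dependence is $x_i\in Anc_j$ with no remaining backdoor common ancestor. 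Case~(iii) is complementary: two-sided dependence $v_i\notindep u_j\wedge v_j\notindep u_i$ cannot arise from a purely directed relationship once $v_i,v_j$ are non-Gaussian, and therefore signals a non-Gaussian common-ancestor contribution surviving the projection, i.e.\ $BCA\setminus BCA^*_{ij}\neq\emptyset$.

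The main obstacle I anticipate is the blocking lemma of the second paragraph: one has to show rigorously that linear regression on $BCA^*_{ij}$ annihilates exactly the backdoor contributions through those nodes without cancelling the direct effect or inadvertently Gaussianizing the disturbances that Darmois--Skitovich must detect. This requires tracking the coefficient matrix $A$ through the projection and invoking genericity (exclusion of a measure-zero set of parameters) to rule out accidental cancellations in the covariance structure, so that the non-Gaussian signals needed in cases~(ii) and~(iii) are preserved.
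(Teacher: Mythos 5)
Your overall strategy---writing $v_i,v_j$ as linear combinations of the disturbances and rerunning the Darmois--Skitovich case analysis of Theorem~\ref{ancestor finding} on the pair $(v_i,v_j)$ with residuals $(u_i,u_j)$---is exactly what the paper intends: it omits this proof on the grounds that it is ``much the same'' as the proofs of Theorems~\ref{ancestor finding simple} and~\ref{ancestor finding}, which proceed by splitting the disturbance indices into sets $K_A,K_B,K_C$ and applying Darmois--Skitovich together with faithfulness and genericity. So the skeleton of your reduction matches the paper's route.

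However, your pivotal ``blocking lemma'' is false as stated, and this is a genuine gap. Regressing $x_i$ on $BCA^*_{ij}$ does \emph{not} in general annihilate the contributions of the disturbances $\epsilon_k$ with $x_k \in BCA^*_{ij}$ from the residual $v_i$. Concretely, take the DAG with edges $x_m \to x_k$, $x_m \to x_i$, $x_k \to x_i$, $x_k \to x_j$, $x_i \to x_j$, and suppose $BCA^*_{ij}=\{x_k\}$. Writing $b=b_{k,m}$, $c=b_{i,k}$, $a=b_{i,m}$, the loading of $\epsilon_k$ on $v_i = x_i - \bigl(\mathrm{Cov}(x_i,x_k)/\mathrm{Var}(x_k)\bigr)x_k$ equals $-ab\,\mathrm{Var}(\epsilon_m)/\mathrm{Var}(x_k) \neq 0$: the regression coefficient differs from the total effect $d_{i,k}$ precisely because $x_m$ is a backdoor common ancestor of the pair $(x_k,x_i)$, so the hypothesis of Lemma~\ref{lem:noBCA} fails, and both $\epsilon_k$ and $\epsilon_m$ survive jointly in $v_i$ and $v_j$. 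Exact annihilation holds only when the regressor set is ancestrally closed, in which case $\mathrm{span}\{x_k : x_k \in BCA^*_{ij}\}$ coincides with $\mathrm{span}\{\epsilon_k : x_k \in BCA^*_{ij}\}$ and the projection kills exactly those coordinates; this closure is the device the paper uses (the set $\overline{BCA}_{ij}$ in Lemmas~\ref{lemma: complex fork} and~\ref{lemma: complex triple}), and it is also what the algorithm secures in practice by regressing only on source nodes of the current component. To repair your argument you must either (a) establish or impose ancestral closure of $BCA^*_{ij}$ before projecting, or (b) show that whenever blocking fails, the node responsible (such as $x_m$ above) is itself an element of $BCA_{ij}\setminus BCA^*_{ij}$, so the surviving joint non-Gaussian dependence generically produces pattern (iii), whose conclusion is then true anyway. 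Without one of these two ideas, the ``model with backdoor confounding removed'' that you feed into Theorem~\ref{ancestor finding} is simply not what the projection produces, and the reduction does not go through.
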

Theorems \ref{ancestor finding simple} and \ref{ancestor finding} is the case where $BCA_{ij} = \emptyset$. 
For each adjacent pair $x_i$ and $x_j$, repeatedly check Theorem \ref{ancestor finding simple BCA star} and (ii) and (iii) in Theorem \ref{ancestor finding BCA star} until no further ancestral relationships or orientations can be identified. 
In the proposed method, Theorem \ref{ancestor finding BCA star} is also applied to adjacent $x_i$ and $x_j$, so $x_i \in Anc_j$ implies $x_i \to x_j \in E$. 

The proofs of Theorems \ref{ancestor finding simple BCA star} and \ref{ancestor finding BCA star} are much the same as the proofs of Theorems \ref{ancestor finding simple} and \ref{ancestor finding} and are omitted in the Appendix.

\subsection{Algorithm for finding DEP}
\label{sec:proposed}
In this section, we describe the details of the proposed algorithm based on the discussion in the previous subsection. The proposed algorithm is shown in Algorithm \ref{alg: proposed}.
In the algorithm, 
$G_{\mathrm{dsep}}=(\bm{X},E_{\mathrm{di}} \cup E_{\mathrm{ud}})$ denotes the DSEP of $G$, where $E_{\mathrm{di}}$ is the set of directed edges and $E_{\mathrm{ud}}$ is the set of undirected edges in $G_{\mathrm{dsep}}$. 
Let $G_{\mathrm{ud}} = (\bm{X}_{\mathrm{ud}},E_{\mathrm{ud}})$ be the undirected induced subgraph of $G_{\mathrm{dsep}}$, where 
$\bm{X}_{\mathrm{ud}} = \bigcup_{e \in E_{\mathrm{ud}}} e$. 
We note that $G_{\mathrm{ud}}$ is not necessarily connected. 
Let $G_{\mathrm{dep}}$ be the DEP of $G$.
In the following, BCA stands for a backdoor common ancestor. 

\begin{breakablealgorithm}
	\caption{Finding DEP based on ancestral relationship} 
	\label{alg: proposed} 
	\begin{algorithmic}[1]
		\Require $\bm{X}=(x_1,\ldots,x_p)^\top$
		\Ensure A DEP 
                $G_{\mathrm{dep}}=(\bm{X},\tilde{E}_{\mathrm{di}} \cup \tilde{E}_{\mathrm{ud}})$
            \State Apply PC algorithm to $\bm{X}$ and obtain a DSEP 
            $G_{\mathrm{dsep}}=(\bm{X},E_{\mathrm{di}} \cup E_{\mathrm{ud}})$ 
            \State $\tilde{E}_{\mathrm{di}} \leftarrow E_{\mathrm{di}}$, 
                   $\tilde{E}_{\mathrm{ud}} \leftarrow E_{\mathrm{ud}}$ 
            \ForAll{$x_i-x_j \in E_{\mathrm{ud}}$}
                \State $BCA^{*}_{ij} \leftarrow$ BCA of $\{x_i,x_j\}$ in $G_{\mathrm{dsep}}$ \Comment{Initialize $BCA^{*}_{ij}$}
            \EndFor
            \State Find all the connected components $\mathcal{C}$ of 
                $G_{\mathrm{ud}}=(\bm{X}_{\mathrm{ud}},E_{\mathrm{ud}})$% \par\hskip\algorithmicindent
                %\Comment{Find connected components}
            \ForAll{$G'=(\bm{X}',E'_{\mathrm{ud}}) \in \mathcal{C}$ such that $E'_{\mathrm{ud}} \ne \emptyset$}
                \State $E'_{\mathrm{di}}=\emptyset$
                \State Perform Gaussianity tests for each variable in $\bm{X}'$
                \State Split $\bm{X}'$ into Gaussian variables $\bm{X}^\prime_{\mathrm{ga}}$ and non-Gaussian variables $\bm{X}^\prime_{\mathrm{ng}}$
		      \If{$\bm{X}^\prime_{\mathrm{ng}} \ne \emptyset$}
                    \ForAll{$x_i-x_j \in E^\prime_{\mathrm{ud}}$} \Comment{Apply Theorem \ref{ancestor finding simple BCA star}}
                    \If{$x_i \in \bm{X}^\prime_{\mathrm{ga}}$ and $x_j \in \bm{X}^\prime_{\mathrm{ng}}$}
                        \State $\tilde{E}_{\mathrm{di}} \leftarrow \tilde{E}_{\mathrm{di}} \cup 
                        \{x_i \to x_j\}$, 
                               $\tilde{E}_{\mathrm{ud}} \leftarrow \tilde{E}_{\mathrm{ud}} \setminus \{x_i - x_j\}$
                        \State $E'_{\mathrm{di}} \leftarrow E^\prime_{\mathrm{di}} \cup \{x_i \rightarrow x_j\}$, 
                               $E'_{\mathrm{ud}} \leftarrow E'_{\mathrm{ud}} \setminus \{x_i - x_j\}$ 
                    \EndIf
                    \If{$x_i \in \bm{X}^\prime_{\mathrm{ng}}$ and $x_j \in \bm{X}^\prime_{\mathrm{ga}}$}
                        \State $\tilde{E}_{\mathrm{di}} \leftarrow \tilde{E}_{\mathrm{di}} \cup \{x_j \to x_i\}$, 
                               $\tilde{E}_{\mathrm{ud}} \leftarrow \tilde{E}_{\mathrm{ud}} \setminus \{x_j - x_i\}$
                        \State $E'_{\mathrm{di}} \leftarrow E^\prime_{\mathrm{di}} \cup \{x_j \rightarrow x_i\}$,  
                        $E'_{\mathrm{ud}} \leftarrow E'_{\mathrm{ud}} \setminus \{x_j - x_i\}$
                    \EndIf
                \EndFor
                \ForAll{$x_i-x_j \in E'_{\mathrm{ud}}$} \Comment{Update $BCA^{*}_{ij}$}
                    \State $BCA^{*}_{ij} \leftarrow BCA^{*}_{ij} \cup$ BCAs of $\{x_i,x_j\}$ in $G''=(\bm{X}',E'_{\mathrm{di}} \cup E'_{\mathrm{ud}})$
                \EndFor
                \State Find the induced subgraph $G'(\bm{X}^\prime_{\mathrm{ng}})$ 
                \State Find all the connected components $\mathcal{C}(\bm{X}^\prime_{\mathrm{ng}})$ of 
                $G^\prime(\bm{X}^\prime_{\mathrm{ng}})$
                \If {$|\mathcal{C}(\bm{X}^\prime_{\mathrm{ng}})| \ne 1$} %\Comment{Disconnected DMGs}
                    \State Append $\mathcal{C}(\bm{X}^\prime_{\mathrm{ng}})$ into $\mathcal{C}$, then go to line 10
                \EndIf
%                \STATE Perform Gaussianity tests for each element of $X'_{\mathrm{ng}}$
                \State {Flag $\leftarrow \mathrm{TRUE}$}
                \While{Flag}
                    \State {Flag $\leftarrow$ FALSE}
                    \ForAll{$x_i-x_j \in E'_{\mathrm{ud}}$} \Comment{Applying Theorem \ref{ancestor finding BCA star}}
                        \If{$BCA^{*}_{ij} \ne \emptyset$} 
                            \State {Regress $x_i$ and $x_j$ on $BCA^{*}_{ij}$ 
                            \par \hspace{\algorithmicindent} \hspace{\algorithmicindent} \hspace{\algorithmicindent}
                            and compute residuals $v_i$ and $v_j$}
                            \State $x_i \leftarrow v_i$, $x_j \leftarrow v_j$
                            \State $BCA^{*}_{ij} \leftarrow \emptyset$
                        \EndIf
    %                    \State Orient $x_i-x_j$ %\Comment{Theorem \ref{ancestor finding BCA star} (i)-(iii)}
                        \If{$x_i \to x_j$ is determined by (ii) in Theorem \ref{ancestor finding BCA star}}
                            \State $\tilde{E}_{\mathrm{di}} \leftarrow \tilde{E}_{\mathrm{di}} \cup \{x_i \to x_j\}$, 
                            $\tilde{E}_{\mathrm{ud}} \leftarrow \tilde{E}_{\mathrm{ud}} \setminus \{x_i - x_j\}$
                            \State $E'_{\mathrm{di}} \leftarrow E'_{\mathrm{di}} \cup\{x_i \rightarrow x_j\}$, 
                                   $E'_{\mathrm{ud}} \leftarrow E'_{\mathrm{ud}} \setminus \{x_i - x_j\}$
                        \ElsIf{$x_j \to x_i$ is determined by (ii) in Theorem \ref{ancestor finding BCA star}}
                            \State $\tilde{E}_{\mathrm{di}} \leftarrow \tilde{E}_{\mathrm{di}} \cup\{x_j \to x_i\}$, 
                            $\tilde{E}_{\mathrm{ud}} \leftarrow 
                            \tilde{E}_{\mathrm{ud}} \setminus \{x_j - x_i\}$
                            \State $E'_{\mathrm{di}} \leftarrow E'_{\mathrm{di}} \cup\{x_j \rightarrow x_i\}$, 
                                   $E'_{\mathrm{ud}} \leftarrow E'_{\mathrm{ud}} \setminus \{x_j - x_i\}$
                        \EndIf
                    \EndFor
                    \ForAll{$x_i-x_j \in E'_{\mathrm{ud}}$} \Comment{Update $BCA^{*}_{ij}$}
                        \State $BCA^{*}_{ij} \leftarrow$ BCA of $x_i$ and $x_j$ in $G''$ 
                    
                    \If{$BCA^{*}_{ij} \ne \emptyset$} 
                        \State {Flag $\leftarrow$ TRUE}
                        % \State Regress $x_i$ and $x_j$ on $BCA^{*}_{ij}$ and compute residuals $v_i$ and $v_j$
                        % % \par \hskip\algorithmicindent 
                        % \State $x_i \leftarrow v_i$, $x_j \leftarrow v_j$
                        % \State $BCA^{*}_{ij} \leftarrow \emptyset$
                    \EndIf  
                    \EndFor
                \EndWhile
                \State Find the induced subgraph 
                $G'(\bm{X}'_{\mathrm{ud}})=(\bm{X}'_{\mathrm{ud}},E'_{\mathrm{ud}})$, 
                \par \hskip\algorithmicindent 
                where 
                $\bm{X}'_{\mathrm{ud}}:= \bigcup_{e \in E'_{\mathrm{ud}}} e$
                \State Find all the connected components $\mathcal{C}(\bm{X}^\prime_{\mathrm{ud}})$ of 
                $G^\prime(\bm{X}^\prime_{\mathrm{ud}})$
                % \If {$|\mathcal{C}(\bm{X}^\prime_{\mathrm{ud}})| \ne 1$} %\Comment{Find connected components}
                \State Append $\mathcal{C}(\bm{X}^\prime_{\mathrm{ud}})$ into $\mathcal{C}$
                % \EndIf
            \EndIf
            \EndFor
            \State Apply the orientation rule to $G_{\mathrm{dep}}$ \\
            \Return $G_{\mathrm{dep}}$
	\end{algorithmic} 
\end{breakablealgorithm} 

The flow of Algorithm \ref{alg: proposed} is summarized below. 

After obtaining a $G_{\mathrm{dsep}}$ by the PC algorithm, the first step is to find its undirected subgraph $G_{\mathrm{ud}}$. Then, for adjacent $x_i$ and $x_j$ in $G_{\mathrm{ud}}$, initialize $BCA^{*}_{ij}$ to the set of BCA of $x_i$ and $x_j$ in $G_{\mathrm{dsep}}$. 
Line 6 finds the connected components ${\cal C}$ of $G_{\mathrm{ud}}$. %As shown in the Appendix, each connected component of the induced sub-DAG $G(\bm{X}_{\mathrm{ud}})$ has only one source node. 

Below line 7, orient the undirected edges of each connected component $G'=(\bm{X}', E'_{\mathrm{ud}})$ of $G_{\mathrm{ud}}$. 
Lines 12-24 orient undirected edges $x_{i} - x_{j} \in E'_{\mathrm{ud}}$ by using Theorem \ref{ancestor finding simple BCA star}, based on the results of Gaussianity tests for each variable in $\bm{X}'$ in line 9 and update $BCA_{ij}^*$. 
Lines 25-29 find the undirected subgraph of $G'$ induced by non-Gaussian variables. 
The Flag in line 30 is a binary variable that controls the loop starting from line 31.
Lines 31-53 use Theorem \ref{ancestor finding BCA star} to orient undirected edges in $G'$. 
If $x_i$ and $x_j$ satisfy (iii) in Theorem \ref{ancestor finding BCA star}, the determination of the orientation between them is withheld. 
Lines 47-52 update $BCA^{*}_{ij}$ and $\bm{X}'$ from the information of the newly identified directed edge.
If new directions are identified during Lines 39-45, the flag variable is updated to TRUE, which has been toggled to FALSE in line 32.
Lines 54 to 56 find the undirected subgraph of $G'$ and update ${\cal C}$. 
Finally, apply the orientation rule in Algorithm \ref{alg:PC2} in line 59 and return $G_{\mathrm{dep}}$. 

\begin{figure}[!ht]
    \centering
    \scalebox{0.8}{
    \begin{tikzpicture} 
        \node[draw, circle] (x_1) at (0, 1) {$x_1$};
        \node[draw, diamond, fill=gray] (x_2) at (-1, 0) {$x_2$};
        \node[draw, circle, fill=gray] (x_3) at (1, 0) {$x_3$};
        \node[draw, circle, fill=gray] (x_4) at (0, -1) {$x_4$};
        \draw[thick, ->] (x_1) -- (x_2);
        \draw[thick, ->] (x_1) -- (x_3);
        \draw[thick, ->] (x_1) -- (x_4);
        \draw[thick, ->] (x_2) -- (x_3);
        \draw[thick, ->] (x_2) -- (x_4);
        \draw[thick, ->] (x_3) -- (x_4);
        \node[] (X) at (0, -2) {(a) true DAG $G$};

        \node[] (s) at (1.5, 0) {};
        \node[] (e) at (2.5, 0) {};
        \draw[thick, ->] (s) -- (e);
        
        \node[draw, circle] (x_1) at (4, 1) {$x_1$};
        \node[draw, diamond, fill=gray] (x_2) at (3, 0) {$x_2$};
        \node[draw, circle, fill=gray] (x_3) at (5, 0) {$x_3$};
        \node[draw, circle, fill=gray] (x_4) at (4, -1) {$x_4$};
        \draw[thick, -] (x_1) -- (x_2);
        \draw[thick, -] (x_1) -- (x_3);
        \draw[thick, -] (x_1) -- (x_4);
        \draw[thick, -] (x_2) -- (x_3);
        \draw[thick, -] (x_2) -- (x_4);
        \draw[thick, -] (x_3) -- (x_4);
        \node[] (X) at (4, -2) {\makecell{(b) DSEP}};

        \node[] (s) at (5.5, 0) {};
        \node[] (e) at (6.5, 0) {};
        \draw[thick, ->] (s) -- (e);
        \node[] (X) at (6, 1) {\makecell{Based on \\ Theorem \\ \ref{ancestor finding simple BCA star}}};
        
        \node[draw, circle] (x_1) at (8, 1) {$x_1$};
        \node[draw, diamond, fill=gray] (x_2) at (7, 0) {$x_2$};
        \node[draw, circle, fill=gray] (x_3) at (9, 0) {$x_3$};
        \node[draw, circle, fill=gray] (x_4) at (8, -1) {$x_4$};
        \draw[thick, ->] (x_1) -- (x_2);
        \draw[thick, ->] (x_1) -- (x_3);
        \draw[thick, ->] (x_1) -- (x_4);
        \draw[thick, -] (x_2) -- (x_3);
        \draw[thick, -] (x_2) -- (x_4);
        \draw[thick, -] (x_3) -- (x_4);
        \node[] (X) at (8, -2) {\makecell{(c)}};

        \node[] (s) at (9.5, 0) {};
        \node[] (e) at (10.5, 0) {};
        \draw[thick, ->] (s) -- (e);
        \node[] (X) at (10, 1) {\makecell{Regress on \\ $x_{1}$}};
        
        \node[draw, circle] (x_1) at (12, 1) {$x_1$};
        \node[draw, diamond, fill=gray] (x_2) at (11, 0) {$x_2$};
        \node[draw, circle, fill=gray] (x_3) at (13, 0) {$x_3$};
        \node[draw, circle, fill=gray] (x_4) at (12, -1) {$x_4$};
        \draw[thick, dashed, ->] (x_1) -- (x_2);
        \draw[thick, dashed, ->] (x_1) -- (x_3);
        \draw[thick, dashed, ->] (x_1) -- (x_4);
        \draw[thick, -] (x_2) -- (x_3);
        \draw[thick, -] (x_2) -- (x_4);
        \draw[thick, -] (x_3) -- (x_4);
        \node[] (X) at (12, -2) {\makecell{(d) }};

        % \node[] (s) at (12, -2.5) {};
        % \node[] (m) at (12, -5) {};
        % \node[] (e) at (9.5, -5) {};
        \draw[thick, ->] (12, -2.5) -- (12, -5) -- (9.5, -5);
        \node[] (X) at (10.75, -4) {\makecell{Based on \\ Theorem \\ \ref{ancestor finding BCA star}}};

        % \node[draw, circle] (x_1) at (12, -4) {$x_1$};
        % \node[draw, diamond] (x_2) at (11, -5) {$x_2$};
        % \node[draw, circle] (x_3) at (13, -5) {$x_3$};
        % \node[draw, circle] (x_4) at (12, -6) {$x_4$};
        % \draw[thick, dashed, ->] (x_1) -- (x_2);
        % \draw[thick, dashed, ->] (x_1) -- (x_3);
        % \draw[thick, dashed, ->] (x_1) -- (x_4);
        % \draw[thick, -] (x_2) -- (x_3);
        % \draw[thick, -] (x_2) -- (x_4);
        % \draw[thick, -] (x_3) -- (x_4);
        % \node[] (X) at (12, -7) {\makecell{(e) }};

        % \node[] (s) at (10.5, -5) {};
        % \node[] (e) at (9.5, -5) {};
        % \draw[thick, ->] (s) -- (e);

        \node[draw, circle] (x_1) at (8, -4) {$x_1$};
        \node[draw, diamond, fill=gray] (x_2) at (7, -5) {$x_2$};
        \node[draw, circle, fill=gray] (x_3) at (9, -5) {$x_3$};
        \node[draw, circle, fill=gray] (x_4) at (8, -6) {$x_4$};
        \draw[thick, dashed, ->] (x_1) -- (x_2);
        \draw[thick, dashed, ->] (x_1) -- (x_3);
        \draw[thick, dashed, ->] (x_1) -- (x_4);
        \draw[thick, ->] (x_2) -- (x_3);
        \draw[thick, ->] (x_2) -- (x_4);
        \draw[thick, -] (x_3) -- (x_4);
        \node[] (X) at (8, -7) {\makecell{(f)}};

        \node[] (s) at (6.5, -5) {};
        \node[] (e) at (5.5, -5) {};
        \draw[thick, ->] (s) -- (e);
        \node[] (X) at (6, -4) {\makecell{Regress on \\ $BCA^{*}_{34}$}};
        
        \node[draw, circle] (x_1) at (4, -4) {$x_1$};
        \node[draw, diamond, fill=gray] (x_2) at (3, -5) {$x_2$};
        \node[draw, circle] (x_3) at (5, -5) {$x_3$};
        \node[draw, circle] (x_4) at (4, -6) {$x_4$};
        \draw[thick, dashed, ->] (x_1) -- (x_2);
        \draw[thick, dashed, ->] (x_1) -- (x_3);
        \draw[thick, dashed, ->] (x_1) -- (x_4);
        \draw[thick, dashed, ->] (x_2) -- (x_3);
        \draw[thick, dashed, ->] (x_2) -- (x_4);
        \draw[thick, -] (x_3) -- (x_4);
        \node[] (X) at (4, -7) {\makecell{(g) }};

        \node[] (s) at (2.5, -5) {};
        \node[] (e) at (1.5, -5) {};
        \draw[thick, ->] (s) -- (e);
        \node[] (X) at (2, -4) {\makecell{Apply Alg \ref{alg:PC2} \\ and output}};

        \node[draw, circle] (x_1) at (0, -4) {$x_1$};
        \node[draw, diamond] (x_2) at (-1, -5) {$x_2$};
        \node[draw, circle] (x_3) at (1, -5) {$x_3$};
        \node[draw, circle] (x_4) at (0, -6) {$x_4$};
        \draw[thick, ->] (x_1) -- (x_2);
        \draw[thick, ->] (x_1) -- (x_3);
        \draw[thick, ->] (x_1) -- (x_4);
        \draw[thick, ->] (x_2) -- (x_3);
        \draw[thick, ->] (x_2) -- (x_4);
        \draw[thick, -] (x_3) -- (x_4);
        \node[] (X) at (0, -7) {\makecell{(h) DEP}};
    \end{tikzpicture} 
    }
    \caption{An example to illustrate the proposed method when handling a directed complete DAG over four variables.
    }
    \label{fig: example for proposal}
\end{figure}
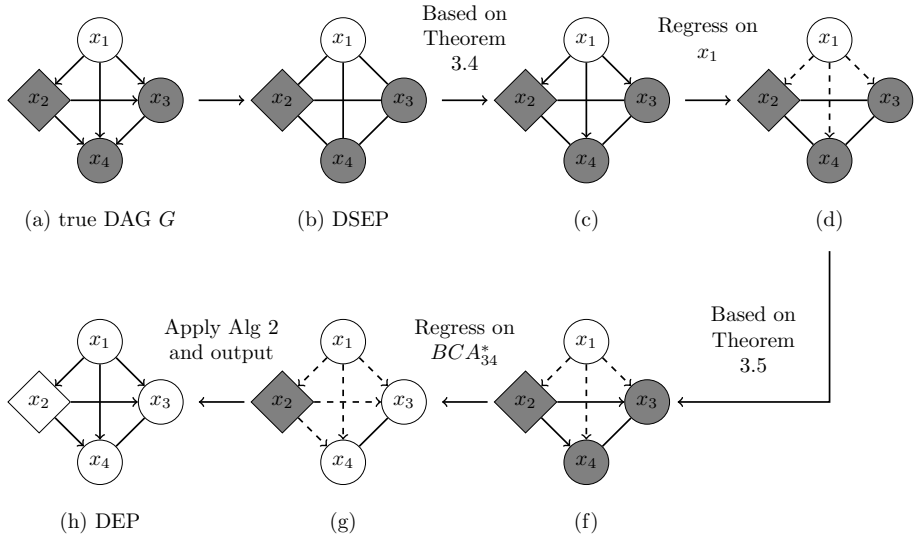

Figure \ref{fig: example for proposal} illustrates how the proposed method identifies a DEP. 
In Figure \ref{fig: example for proposal}, diamond nodes represent variables with non-Gaussian disturbances, while circle nodes represent variables with Gaussian disturbances.
Gray nodes represent non-Gaussian variables, and white nodes represent Gaussian variables. 
The dashed lines represent the directed edges that have been removed when finding induced subgraphs at lines 25 and 47 in Algorithm \ref{alg: proposed}. The true DAG (a) and its DSEP (b) are the same as in Figure \ref{fig: example for PC-LiNGAM} (a) and (b), respectively. 
Since $x_2$ has a non-Gaussian disturbance, $x_2$ is also non-Gaussian. Therefore, by Corollary \ref{cor: ancestor finding simple} $x_3$ and $x_4$ are also non-Gaussian. (c) represents that the undirected edges connected to $x_{1}$ are oriented since only $x_{1}$ is Gaussian and $\{x_{2}, x_{3}, x_{4}\}$ are non-Gaussian according to Theorem \ref{ancestor finding simple BCA star}. The induced subgraph when $x_{1}$ is removed is shown in the solid part of (d). From (ii) and (iii) in Theorem \ref{ancestor finding BCA star}, $x_{2} \to x_{4}$, $x_{2} \to x_{3}$ are identified and $x_2 \in BCA^{*}_{34}$ is detected as in (f). Removing $x_2$, and we have (g). In (g), both residuals $v_{3}$ and $v_{4}$ in line 48 are Gaussian. Therefore, no further direction is identifiable by Theorem \ref{ancestor finding BCA star}. After applying Algorithm \ref{alg:PC2} to (g), Algorithm \ref{alg: proposed} returns a DEP in (h). 

About Algorithm \ref{alg: proposed}, we have the following theorem. 
\begin{theorem}
    \label{thm:identifiability}
    Algorithm \ref{alg: proposed} generically identifies a true causal DAG up to the distribution-equivalence pattern. 
\end{theorem}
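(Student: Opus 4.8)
The plan is to establish soundness and completeness of Algorithm \ref{alg: proposed} separately, and then dispose of the ``generically'' qualifier. Throughout I would take the PC-algorithm output $G_{\mathrm{dsep}}$ as correct: under faithfulness it recovers the true skeleton and all V-structures, and faithfulness itself fails only on a measure-zero set of parameters. The target is to show that, off a measure-zero set, the returned $G_{\mathrm{dep}}$ equals the DEP of $G$.

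For soundness I would argue by induction on the order in which directed edges are added. Every orientation is produced either by Theorem \ref{ancestor finding simple BCA star} (lines 12--24, when one endpoint is Gaussian and the other non-Gaussian) or by condition (ii) of Theorem \ref{ancestor finding BCA star} (lines 39--45). In both cases the conclusion is $x_i \in Anc_j$, and since the pair is adjacent in the correct skeleton, $x_i \in Anc_j$ forces $x_i \to x_j \in E$. The one subtlety is that Theorem \ref{ancestor finding BCA star} is applied to residuals $v_i, v_j$ obtained after regressing out the current $BCA^*_{ij}$; I would maintain as an invariant that $BCA^*_{ij}$ is always a subset of the true backdoor common ancestors, so the hypotheses of the theorem are genuinely met and the identified direction agrees with $E$. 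Hence every edge the algorithm orients matches the true DAG.

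For termination and completeness I would adapt the inductive structure of Maeda and Shimizu (Proposition \ref{Prop:2}) to the mixed-Gaussian setting. The key invariant is that, within a connected component, whenever an undirected edge $x_i - x_j$ survives with its determination withheld under condition (iii), there exists a not-yet-removed backdoor common ancestor; because the model has no latent confounders, each such ancestor is itself a vertex of the component and is oriented and regressed out on some pass. Thus each iteration of the while loop (lines 31--53) either strictly decreases the number of unoriented edges or enlarges some $BCA^*_{ij}$, so the loop terminates with every ancestrally identifiable edge oriented. The recursion onto connected components (lines 25--29 and 54--56) is justified because deleting an already-oriented edge or a Gaussian source vertex neither creates nor destroys an identifiable ancestral relation among the remaining undirected edges.

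The heart of the argument, and the step I expect to be the main obstacle, is to show that the edges the algorithm leaves undirected coincide exactly with the undirected edges of the DEP. An undirected edge survives precisely when both endpoints remain Gaussian after regressing out all identified backdoor common ancestors, i.e.\ when both effective residuals $v_i, v_j$ are Gaussian. I would link this to distribution-equivalence by showing that such an edge is reversible without changing the joint law: the induced $2\times2$ sub-model driven by two Gaussian effective disturbances is a Gaussian system whose direction is unidentifiable, matching Definition \ref{def: ngDAG} and the DEP construction of Hoyer et al.\ \cite{hoyer2008b}. Conversely, Theorem \ref{ancestor finding simple BCA star} and condition (ii) of Theorem \ref{ancestor finding BCA star} orient any edge with a non-Gaussian effective endpoint, so no identifiable edge is missed. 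A final application of the Meek rules (Algorithm \ref{alg:PC2}, line 59) only propagates orientations already forced by the V-structures and identified directions, yielding exactly $G_{\mathrm{dep}}$. For the ``generically'' claim I would then collect the finitely many exceptional sets: the failure set of faithfulness; the parameter values at which an OLS residual is accidentally independent of a regressor despite an ancestral path (so the conditions in Theorems \ref{ancestor finding} and \ref{ancestor finding BCA star} misfire); and the values at which a variable that should be non-Gaussian becomes Gaussian through exact cancellation of its non-Gaussian contributions. Each is an algebraic condition on $(B,\text{disturbance parameters})$, hence measure zero, and off their union the Gaussianity and independence tests return the theoretically correct answers, giving exact recovery of the DEP.
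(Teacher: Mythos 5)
Your proposal is correct and takes essentially the same route as the paper's proof: both treat the PC output and Hoyer et al.'s PC-LiNGAM/DEP identifiability result as fixing the target, and then show that every DEP-orientable edge is generically oriented by the generalized ancestor-finding theorems together with iterative removal of backdoor common ancestors, with ``generically'' accounted for by the measure-zero parameter sets on which faithfulness or the non-degeneracy conditions in the lemmas fail. The difference is organizational: the paper argues edge-by-edge via a case analysis on the Gaussianity of the disturbance pair $(\epsilon_i,\epsilon_j)$, invoking Corollary \ref{cor: ancestor finding simple} when a disturbance is Gaussian, whereas you package the same content as soundness, termination, and DEP-correspondence steps, making explicit some algorithmic invariants the paper leaves implicit.
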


\subsection{Complexity Analysis}
\label{sec: complexity}
In this section, we evaluate the time complexities of the proposed method compared to the PC-LiNGAM. 
Both are the same until a DSEP is obtained using the PC algorithm. 
The main difference is found in the procedure for obtaining a DEP from a DSEP.

The worst case, both for the PC-LiNGAM and for the proposed method, is when a DSEP is an undirected complete graph. In this case, the MEC is the set of any directed complete DAGs. The PC-LiNGAM needs to enumerate all directed complete DAGs. The number of directed complete DAGs with $p$ vertices is $p!$. For each model, the PC-LiNGAM also needs to compute the residuals of the structural equations corresponding to each variable using the OLS. The time complexity of finding the residuals by the OLS is $O(np^3+p^4)$, where $n$ is the sample size. When we use the Shapiro–Wilk test \cite{Shapiro} to test the Gaussianity of residuals, its time complexity is $O(n\log{n})$. Therefore, the time complexity of PC-LiNGAM for an undirected complete graph is $O(p! \cdot (pn \log n + np^3 + p^4))$, which shows that the PC-LiNGAM is infeasible when $p$ is large.

Next, we consider applying the proposed method to a complete graph. Algorithm \ref{alg: proposed} starts from a complete graph $G_{\mathrm{dsep}}$, as shown in Figure \ref{fig: example for proposal}, and while deleting nodes from $G_{\mathrm{dsep}}$ according to the causal order in $G$, it performs a Gaussian test for each variable and checks whether each edge satisfies either (ii) or (iii) of Theorem \ref{ancestor finding BCA star}. 
Since $B_{ij}$ for all $(i,j)$ consists only of a source node in each step, the models (\ref{model: no ancestors BCA star}) are always simple regression models. 
Therefore, the total numbers of Gaussianity tests, regressions in (\ref{model: no ancestors BCA star}), and independence tests are
\begin{align*}          
    p + (p - 1) + \cdots 2 = O(p^2),
\end{align*}
\begin{align*}          
    (p - 1) + (p - 2) + \cdots 2 = O(p^2),
\end{align*}
\begin{align*}          
    &
    2 \cdot \binom{p}{2} + 2 \cdot \binom{p-1}{2} + \cdots + 2 \cdot \binom{2}{2} =O(p^3), 
\end{align*}
respectively. 
Suppose that we use the Hilbert–Schmidt independence criterion (\cite{Gretton2007}, HSIC) as independence tests and the Shapiro–Wilk test \cite{Shapiro} as the Gaussianity tests. 
The time complexity of HSIC is known to be $O(n^{2})$ \cite{Gretton2007} and that of the OLS (\ref{model: no ancestors BCA star}) is $O(n)$. 
In summary, the time complexity of the proposed method is $O(n\log{n}\cdot p^2 + n\cdot p^2 + n^{2}\cdot p^3) = O(n^{2}\cdot p^3)$. 
Therefore, the proposed algorithm is in polynomial time even when $G$ is a complete DAG. 

We also discuss the case where $G$ is a directed tree. 
Then, the DSEP is an undirected tree, and the MEC consists of $p$ directed trees.  
In the PC-LiNGAM, the number of OLS operations is $p \cdot (p-1)$, and the number of Gaussianity tests is $p$. 
The proposed method requires only $p$ Gaussianity tests and $2(p-1)$ independence tests.
Therefore, while that of the PC-LiNGAM is $O(n\log{n} \cdot p^{2})$, the time complexity of the proposed method is $O(n^{2}\cdot p)$. 
In the case of $n \ll p^2$, the proposed method has a lower time complexity than the PC-LiNGAM. 

\subsection{Exception handlings}
\label{sec: exception}
In Section \ref{sec:proposed}, we showed that Algorithm \ref{alg: proposed} can identify a DEP. However, if there are errors in the Gaussianity or independence tests during implementation, $G_{\mathrm{dep}}$ may not be consistent with the DSEP obtained by the PC algorithm.
Consider the examples shown in Figure \ref{fig: both}. 

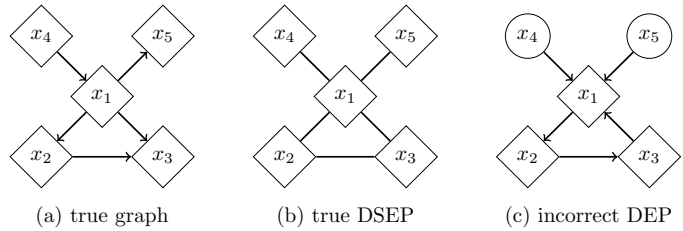
\begin{figure}[b]
    \centering
    \scalebox{0.8}{
    \begin{tikzpicture} 
        \node[draw, diamond] (x_1) at (0, 1) {$x_1$};
        \node[draw, diamond] (x_2) at (-1, 0) {$x_2$};
        \node[draw, diamond] (x_3) at (1, 0) {$x_3$};
        \node[draw, diamond] (x_4) at (-1, 2) {$x_4$};
        \node[draw, diamond] (x_5) at (1, 2) {$x_5$};
        \draw[thick, ->] (x_1) -- (x_2);
        \draw[thick, ->] (x_1) -- (x_3);
        \draw[thick, ->] (x_2) -- (x_3);
        \draw[thick, ->] (x_4) -- (x_1);
        \draw[thick, ->] (x_1) -- (x_5);
        \node[] (X) at (0, -1) {(a) true graph};

        \node[draw, diamond] (x_1) at (4, 1) {$x_1$};
        \node[draw, diamond] (x_2) at (3, 0) {$x_2$};
        \node[draw, diamond] (x_3) at (5, 0) {$x_3$};
        \node[draw, diamond] (x_4) at (3, 2) {$x_4$};
        \node[draw, diamond] (x_5) at (5, 2) {$x_5$};
        \draw[thick, -] (x_1) -- (x_2);
        \draw[thick, -] (x_1) -- (x_3);
        \draw[thick, -] (x_2) -- (x_3);
        \draw[thick, -] (x_1) -- (x_4);
        \draw[thick, -] (x_1) -- (x_5);
        \node[] (X) at (4, -1) {(b) true DSEP};
        
        % \node[draw, diamond] (x_1) at (8, 1) {$x_1$};
        % \node[draw, circle] (x_2) at (7, 0) {$x_2$};
        % \node[draw, circle] (x_3) at (9, 0) {$x_3$};
        % \draw[thick, <-] (x_1) -- (x_2);
        % \draw[thick, <-] (x_1) -- (x_3);
        % \node[] (X) at (8, -1) {\makecell{(c) incorrect graph}};

        \node[draw, diamond] (x_1) at (8, 1) {$x_1$};
        \node[draw, diamond] (x_2) at (7, 0) {$x_2$};
        \node[draw, diamond] (x_3) at (9, 0) {$x_3$};
        \node[draw, circle] (x_4) at (7, 2) {$x_4$};
        \node[draw, circle] (x_5) at (9, 2) {$x_5$};
        \draw[thick, ->] (x_1) -- (x_2);
        \draw[thick, <-] (x_1) -- (x_3);
        \draw[thick, ->] (x_2) -- (x_3);
        \draw[thick, ->] (x_4) -- (x_1);
        \draw[thick, ->] (x_5) -- (x_1);
        \node[] (X) at (8, -1) {\makecell{(c) incorrect DEP}};

    \end{tikzpicture} 
    }
    \caption{
    An example to illustrate how Algorithm \ref{alg: proposed} outputs an incorrect graph with a V-structure detectable by the PC algorithm and a cycle due to errors in Gaussianity tests and independence tests. 
    }
    \label{fig: both}
\end{figure}

In Figure \ref{fig: both}, all disturbances are non-Gaussian, as shown in (a).
(b) shows the corresponding DSEP for (a), in which none of the orientations are identifiable.
We assume that $x_4$ and $x_5$ are incorrectly detected to be Gaussian due to type II errors of the Gaussian test. 
Consequently, the direction $x_5 \to x_1 \leftarrow x_4$ is incorrectly estimated.
Furthermore, if the ancestral relationships among $x_1, x_2$ and $x_3$ are estimated as $x_3 \in Anc_{1}$, $x_1 \in Anc_{2}$, and $x_2 \in Anc_{3}$, the direction $x_1 \leftarrow x_3$ is incorrectly estimated, forming a cycle $x_1 \to x_2 \to x_3 \to x_1$.  
The mixed graph (c) is neither a chain graph nor a DEP.

An algorithm that guarantees the output mixture graph is consistent with a DSEP is preferred.
Algorithm \ref{alg: exception} provides an idea for handling these exceptions. 
The input of Algorithm \ref{alg: exception} is $G_{\mathrm{dep}}$ at line 58 in Algorithm \ref{alg: proposed}. 
Assume that $G_{\mathrm{dep}}=(\bm{X},\tilde{E}_{\mathrm{di}} \cup \tilde{E}_{\mathrm{ud}})$ is inconsistent with
$G_{\mathrm{dsep}}=(\bm{X},E_{\mathrm{di}} \cup E_{\mathrm{di}})$. 
Let $\bm{X}^\prime_{\mathrm{di}}:= \bigcup_{e \in \tilde{E}_{\mathrm{di}} \setminus E_{\mathrm{di}}} e$. 
Then, there exists a weakly connected component $G'$ of the induced subgraph $G_{\mathrm{dep}}(\bm{X}^\prime_{\mathrm{di}})$ that contains a V-structure detectable by the PC algorithm or a cycle. 
In other words, a weakly connected component $G'$ (including $G'$ itself) exists that has either no single source node or two or more source nodes. 
As shown in Section \ref{sec:DMG}, if $G_{\mathrm{dep}}$ is consistent with $G_{\mathrm{dsep}}$, any weakly connected component $G'$ should have only one source node. 
Algorithm \ref{alg: exception} modifies an inconsistent $G'$ into a chain graph such that all the weakly connected component has only one source node. 

\begin{breakablealgorithm}
	\caption{Handling the exceptions} 
	\label{alg: exception} 
	\begin{algorithmic}[1]
		\Require $G_{\mathrm{dep}}$ at line 59 in Algorithm \ref{alg: proposed} and $G_{\mathrm{dsep}}=(\bm{X},E_{\mathrm{di}} \cup E_{\mathrm{ud}})$
		\Ensure A partially DAG 
                $G_{\mathrm{pd}}=(\bm{X},\tilde{E}_{\mathrm{di}} \cup \tilde{E}_{\mathrm{ud}})$ 
                 consistent with $G_{\mathrm{dsep}}$
            \State {$G_{\mathrm{pd}} \leftarrow G_{\mathrm{dep}}(\bm{X},\tilde{E}_{\mathrm{di}} \cup \tilde{E}_{\mathrm{ud}})$}
            % \If {$\exists x_{i} \rightarrow x_{k} \leftarrow x_{j} \in (G_{\mathrm{pd}} \setminus G_{\mathrm{dsep}})$ where $x_{i}$ and $x_j$ are not adjacent}
            \If {
            \hspace{0.2em}
            $\exists$ a cycle in $G_{\mathrm{pd}}$
            \par 
            or 
            \par 
            $\exists x_{i} \rightarrow x_{k} \leftarrow x_{j} \in G_{\mathrm{pd}}$ where $x_{i}$ and $x_j$ are not adjacent, 
            \par 
            and $x_{i} \rightarrow x_{k} \leftarrow x_{j} \notin G_{\mathrm{dsep}}$
            }
%            \State {$\bm{X}_{\mathrm{di}} \leftarrow \{x_{i} \mid (\{x_{i} \to x_{j}\} \in \tilde{E}_{\mathrm{di}} \setminus E_{\mathrm{di}}) \vee (\{x_{j} \to x_{i}\} \in \tilde{E}_{\mathrm{di}} \setminus E_{\mathrm{di}})\}$ }
            \State{$\bm{X}_{\mathrm{di}} \leftarrow \bigcup_{e \in \tilde{E}_\mathrm{di} \setminus E_\mathrm{di}} e$}
            \State {$G_{\mathrm{di}} \leftarrow (\bm{X}_{\mathrm{di}}, \tilde{E}_{\mathrm{di}} \setminus E_{\mathrm{di}})$}
            \State {Find all weak connected components $\mathcal{C}$ of $G_{\mathrm{di}}$}
            \ForAll{$G^\prime = (\bm{X}^\prime_{\mathrm{di}}, E^{\prime}_{\mathrm{di}}) \in \mathcal{C}$}
                \If {$\exists$ loop in $G_{\mathrm{di}}$
                \par \hspace{1.8em}
                or 
                \par \hspace{1.8em}
                $\exists x_{i} \rightarrow x_{k} \leftarrow x_{j} \in G_{\mathrm{di}}$ where $x_{i}$ and $x_j$ are not adjacent, 
                \par \hspace{1.8em}
                and $x_{i} \rightarrow x_{k} \leftarrow x_{j} \notin G_{\mathrm{dsep}}$}
                    \State{Find all source nodes in $G^{\prime}$ and randomly select one as $x_{0}$}
                    \If{$x_{0} = \mathrm{NULL}$}
                        \State{Randomly select one nodes in $\bm{X}^{\prime}_{\mathrm{di}}$ as $x_{0}$}
                    \EndIf
                    \State{$X_{\mathrm{closed}} \leftarrow \emptyset$}
                    \State{$X_{\mathrm{open}} \leftarrow \{x_{0}\}$}
                    \While{$|X_{\mathrm{closed}}| \ne |\bm{X}^\prime_{\mathrm{di}}|$}
                    \State{$X_{\mathrm{open}} \leftarrow X_{\mathrm{open}} \setminus x_{0}$}
                    \State{Find the set $Adj_0$ of all adjacent nodes of $x_{0}$ in $G^{\prime}$}
                    \State{$X_{\mathrm{open}} \leftarrow X_{\mathrm{open}}\cup (Adj_0\setminus X_{\mathrm{closed}})$}
                    \ForAll{$x_{i} \in Adj_0 \setminus X_{\mathrm{closed}}$}
                        \If{$\{x_0 \leftarrow x_{i}\} \in \tilde{E}_{\mathrm{di}}$}
                            \State {$\tilde{E}_{\mathrm{di}} \leftarrow \tilde{E}_{\mathrm{di}} \cup 
                        \{x_0 \to x_{i}\} \setminus \{x_i \to x_{0}\}$}
                        \EndIf
                    \EndFor
                    \State{$X_{\mathrm{closed}} \leftarrow X_{\mathrm{closed}} \cup x_{0}$}
                    % \State{Randomly select a $x_{i}\in X_{\mathrm{open}} \cap Adj_0\setminus X_{\mathrm{closed}}$ as new $x_{0}$}
                    \State{Randomly select a $x_{i}\in X_{\mathrm{open}} \cap Adj_0\setminus X_{\mathrm{closed}}$ as new $x_{0}$}
                    \If {$x_{0} = \mathrm{NULL}$}
                    \State{Randomly select a $x_{i}\in X_{\mathrm{open}}$ as new $x_{0}$}
                    \EndIf
                    \EndWhile
                \EndIf
            \EndFor
            \EndIf
            \State Apply the orientation rule to $G_{\mathrm{pd}}$
            \\ \Return $G_{\mathrm{pd}}$
    \end{algorithmic}
\end{breakablealgorithm}

In lines 3 and 4, the directed subgraph $G_{\mathrm{di}}$ containing a V-structure or a cycle is extracted.
$G_{\mathrm{di}}$ may be disconnected, and line 5 finds the set of the weakly connected components $\mathcal{C}$ of $G_{\mathrm{di}}$. 
If $G' \in \mathcal{C}$ contains a V-structure or a cycle and has source nodes, randomly select one of them and set it as $x_0$ (line 8). 
If $G' \in \mathcal{C}$ contains a cycle and has no source nodes, randomly select one of the variables in $G'$ and set it as $x_0$ (line 9-11). 
In lines 24-28, some edges of $G'$ are reversed %to modify $G'$ into a chain graph with $x_{0}$ as the only source node. 
so that $G'$ is consistent with $G_{\mathrm{dsep}}$. 
The resulting chain graph neither contains a V-structure that is detectable by the PC algorithm nor a cycle.

Algorithm \ref{alg: exception} is based on the breadth-first search. 

\begin{figure}[htbp]
    \centering
    \scalebox{0.8}{
    \begin{tikzpicture} 
        \node[draw, diamond] (x_1) at (0, 1) {$x_1$};
        \node[draw, diamond] (x_2) at (-1, 0) {$x_2$};
        \node[draw, diamond] (x_3) at (1, 0) {$x_3$};
        \node[draw, circle] (x_4) at (-1, 2) {$x_4$};
        \node[draw, circle] (x_5) at (1, 2) {$x_5$};
        \draw[thick, ->] (x_1) -- (x_2);
        \draw[thick, <-] (x_1) -- (x_3);
        \draw[thick, ->] (x_2) -- (x_3);
        \draw[thick, ->] (x_4) -- (x_1);
        \draw[thick, ->] (x_5) -- (x_1);
        \node[] (X) at (0, -1) {(a) incorrect DEP};

        \node[] (s) at (1.5, 1) {};
        \node[] (e) at (2.5, 1) {};
        \draw[thick, ->] (s) -- (e);

        \node[] (x_0) at (3, 3) {$x_0$};
        \node[draw, diamond] (x_1) at (4, 1) {$x_1$};
        \node[draw, diamond] (x_2) at (3, 0) {$x_2$};
        \node[draw, diamond] (x_3) at (5, 0) {$x_3$};
        \node[draw, circle] (x_4) at (3, 2) {$x_4$};
        \node[draw, circle] (x_5) at (5, 2) {$x_5$};
        
        \draw[thick, ->] (x_0) -- (x_4);
        
        \draw[thick, ->] (x_1) -- (x_2);
        \draw[thick, <-] (x_1) -- (x_3);
        \draw[thick, ->] (x_2) -- (x_3);
        \draw[thick, <-] (x_1) -- (x_4);
        \draw[thick, <-] (x_1) -- (x_5);
        \node[] (X) at (4, -1) {(b)};
        
        \node[] (s) at (5.5, 1) {};
        \node[] (e) at (6.5, 1) {};
        \draw[thick, ->] (s) -- (e);

        \node[] (x_0) at (8, 2) {$x_0$};
        \node[draw, diamond] (x_1) at (8, 1) {$x_1$};
        \node[draw, diamond] (x_2) at (7, 0) {$x_2$};
        \node[draw, diamond] (x_3) at (9, 0) {$x_3$};
        \node[draw, circle] (x_4) at (7, 2) {$x_4$};
        \node[draw, circle] (x_5) at (9, 2) {$x_5$};

        \draw[thick, ->] (x_0) -- (x_1);
        
        \draw[thick, ->] (x_1) -- (x_2);
        \draw[thick, ->] (x_1) -- (x_3);
        \draw[thick, ->] (x_2) -- (x_3);
        \draw[thick, ->] (x_4) -- (x_1);
        \draw[thick, <-] (x_5) -- (x_1);
        \node[] (X) at (8, -1) {\makecell{(c)}};

        \node[] (s) at (8, -1.5) {};
        \node[] (e) at (8, -2.5) {};
        \draw[thick, ->] (s) -- (e);

        \node[] (x_0) at (9, -3) {$x_0$};
        \node[draw, diamond] (x_1) at (8, -5) {$x_1$};
        \node[draw, diamond] (x_2) at (7, -6) {$x_2$};
        \node[draw, diamond] (x_3) at (9, -6) {$x_3$};
        \node[draw, circle] (x_4) at (7, -4) {$x_4$};
        \node[draw, circle] (x_5) at (9, -4) {$x_5$};

        \draw[thick, ->] (x_0) -- (x_5);
        
        \draw[thick, ->] (x_1) -- (x_2);
        \draw[thick, ->] (x_1) -- (x_3);
        \draw[thick, ->] (x_2) -- (x_3);
        \draw[thick, ->] (x_4) -- (x_1);
        \draw[thick, <-] (x_5) -- (x_1);
        \node[] (X) at (8, -7) {\makecell{(d)}};

        \node[] (s) at (6.5, -5) {};
        \node[] (e) at (5.5, -5) {};
        \draw[thick, ->] (s) -- (e);

        \node[] (x_0) at (3, -5) {$x_0$};
        \node[draw, diamond] (x_1) at (4, -5) {$x_1$};
        \node[draw, diamond] (x_2) at (3, -6) {$x_2$};
        \node[draw, diamond] (x_3) at (5, -6) {$x_3$};
        \node[draw, circle] (x_4) at (3, -4) {$x_4$};
        \node[draw, circle] (x_5) at (5, -4) {$x_5$};

        \draw[thick, ->] (x_0) -- (x_2);
        
        \draw[thick, ->] (x_1) -- (x_2);
        \draw[thick, ->] (x_1) -- (x_3);
        \draw[thick, ->] (x_2) -- (x_3);
        \draw[thick, ->] (x_4) -- (x_1);
        \draw[thick, <-] (x_5) -- (x_1);
        \node[] (X) at (4, -7) {\makecell{(e)}};

        \node[] (s) at (2.5, -5) {};
        \node[] (e) at (1.5, -5) {};
        \draw[thick, ->] (s) -- (e);

        \node[] (x_0) at (1, -5) {$x_0$};
        \node[draw, diamond] (x_1) at (0, -5) {$x_1$};
        \node[draw, diamond] (x_2) at (-1, -6) {$x_2$};
        \node[draw, diamond] (x_3) at (1, -6) {$x_3$};
        \node[draw, circle] (x_4) at (-1, -4) {$x_4$};
        \node[draw, circle] (x_5) at (1, -4) {$x_5$};

        \draw[thick, ->] (x_0) -- (x_3);
        
        \draw[thick, ->] (x_1) -- (x_2);
        \draw[thick, ->] (x_1) -- (x_3);
        \draw[thick, ->] (x_2) -- (x_3);
        \draw[thick, ->] (x_4) -- (x_1);
        \draw[thick, <-] (x_5) -- (x_1);
        \node[] (X) at (0, -7) {\makecell{(f) modified DEP}};
        
    \end{tikzpicture} 
    }
    \caption{
    An example to illustrate the flow of Algorithm \ref{alg: exception} to handle the exceptions in Figure \ref{fig: both}.
    }
    \label{fig: handle both exceptions}
\end{figure}
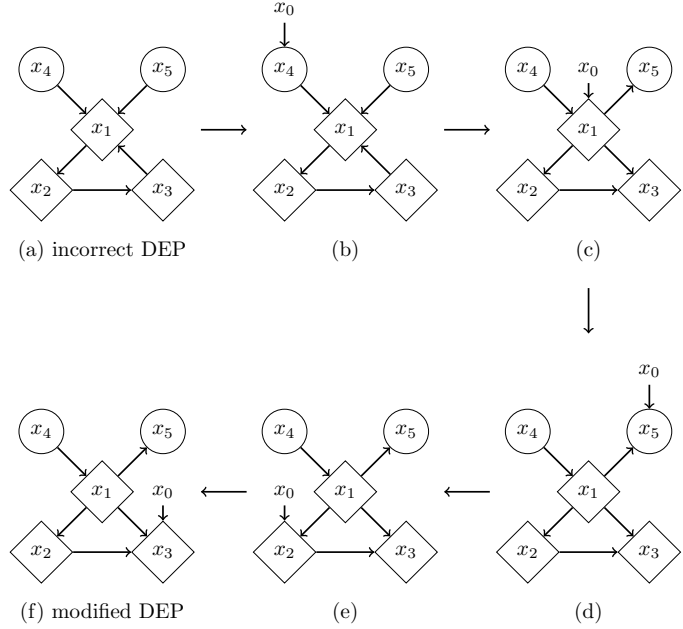

Figure \ref{fig: handle both exceptions} shows the flow of Algorithm \ref{alg: exception} to handle the exceptions depicted in Figure \ref{fig: both}.
Figure \ref{fig: handle both exceptions} (a) depicts the same incorrect DEP as in Figure \ref{fig: both} (c). 
Let $x_4$ be an initial $x_0$ as shown in (b), and 
%Assume that the initially selected source node $x_{0}$ is $x_4$ and 
initialize $Adj_0 = \{x_{1}\}$ and $X_{\mathrm{open}} = \{x_{1}\}$. 
Since $x_{4} \to x_{1}$ is the current direction, there is no need to change the direction of the edge, and $x_{4}$ is added to $X_{\mathrm{closed}}$.
Next, select $x_1$ from $X_{\mathrm{open}} \cap Adj_0 \setminus X_{\mathrm{closed}}$ as the new $x_{0}$. 
%When $x_{1}$ becomes the new $x_{0}$, 
Then update $Adj_0$ and $X_{\mathrm{open}}$ to $\{x_{4}, x_{5}, x_{2}, x_{3}\}$ and $\{x_{5}, x_{2}, x_{3}\}$, respectively. 
Since $Adj_0 \setminus X_{\mathrm{closed}} = \{x_{5}, x_{2}, x_{3}\}$, invert the directions of $x_{5} \to x_{1}$ and $x_{3} \to x_{1}$ as shown in (c).
After adding $\{x_1\}$ to $X_{\mathrm{closed}}$, select $x_{5}$ as the new $x_{0}$ as shown in (d).
Since $Adj_0$ of $x_{5}$ is empty, no directions are modified at this step, and 
$X_{\mathrm{open}}$ is updated to $\{2,3\}$. 
If $x_2$ and $x_3$ are subsequently selected as $x_0$, no further changes occur in edge orientation.
With all vertices traversed, the resulting output is the chain graph in (f).

Since the number of edges in $G_{\mathrm{di}}$ is at most $p(p-1)/2$, the number of operations required to reverse the directions of edges in $G_{\mathrm{pd}}$ is also at most $p(p-1)/2$. Furthermore, the maximum number of operations required to find the source node for all weakly connected components $\mathcal{C}$ is also at most $O(p)$. Therefore, Algorithm \ref{alg: exception} is also a polynomial time.

Algorithm \ref{alg: exception} randomly generates a chain graph that is consistent with the DSEP. 
However, the output chain graph may differ depending on the order of selecting $x_{0}$.
Also, the output of Algorithm \ref{alg: exception} may not be a chain graph that minimizes the changes to the edge orientations of an incorrect DEP. 
Improving the handling of V-structures and cycles in an incorrect DEP remains a topic for future work. 

\section{Numerical Experiments}
\label{sec: experiment}
We performed numerical experiments to confirm the computational efficiency of the proposed method compared to the PC-LiNGAM. In this section, we describe the details of the numerical experiments and present the results of the experiments. Since the proposed method and the PC-LiNGAM have the same procedure for obtaining a DSEP using the PC algorithm, we compare the computation time of the procedure for obtaining a DEP from a DSEP. 
We compare the CPU time of the two methods in the worst case of time complexity where true $G$ is a directed complete DAG, i.e., the DSEP is an undirected complete graph. 
Section \ref{sec: experiment settings} describes the details of the experimental settings. Section \ref{sec: result and discussion} presents the experimental results and discusses the results.

\subsection{Experimental Settings}
\label{sec: experiment settings}
This subsection summarizes the experimental settings. 
The number of variables $p$ in a DAG was set to $\{5, 6, 7\}$.
The sample size $n$ was set to $\{$1500, 2000, 3000, 5000, 10000$\}$. 
Gaussian and non-Gaussian disturbances were generated from $\mathrm{N}(0,1)$ and lognormal distributions $\mathrm{Lognormal}(0,1)$ with expectation standardized to 0, respectively. 
The number of non-Gaussian disturbances was randomly set to more than $\lfloor p/3 \rfloor$ and less than $p$ for each iteration. 
The nonzero elements of the coefficient matrix $B$ were randomly generated from the uniform distribution $U(0.5, 1)$ to satisfy the faithfulness assumption with probability one.
The number of iterations for a fixed $(p, n)$ was set to 50. 

We used the Hilbert–Schmidt independence criterion (HSIC) \cite{Gretton2007} for independence tests in the proposed method. The Shapiro–Wilk test \cite{Shapiro} was used for the Gaussianity tests in both the PC-LiNGAM and the proposed method.
%The time complexity of HSIC is $O(n^2)$, and that of the Shapiro-Wilk test is $O(n\log{n})$.
The significance levels for the HSIC and the Shapiro-Wilk test were set to $0.001$ and $0.05$, respectively. 
In the experiments with the proposed method, HSIC was performed with a random sample of size 1500 out of $n$ for each fixed $(p, n)$ to reduce the computation time. 

In this experiment, Algorithm \ref{alg: exception} was not applied because the sample size was set to be large, and a cycle discussed in Section \ref{sec: exception} is expected to occur only with low probability.

To evaluate the performance of the PC-LiNGAM and the proposed method, for each experimental group $(p, n)$, we recorded the total number of incorrectly estimated DEPs in 50 iterations and the CPU time (in seconds) required for estimating 50 DEPs.

All experiments were conducted on the same workstation equipped with a 3.3GHz Core i9 processor and 128 GB memory.

\subsection{Results and Discussion}
\label{sec: result and discussion}
In this subsection, we present and discuss the experimental results. 

Figures \ref{fig: results0}, \ref{fig: results} (a), (c), and (e) illustrate the CPU time for estimating 50 DEPs using the proposed methods and the PC-LiNGAM with $p=5,6,7$, respectively. 
Figures \ref{fig: results} (b), (d), and (f) show the number of DEPs that were incorrectly estimated by the proposed method and the PC-LiNGAM for $p=5,6,7$, respectively, in the 50 iterations.

From these figures, we observe that both methods do not differ significantly in estimation accuracy, but the proposed method has a far faster computation time when $p=7$. 
Figure \ref{fig: results0} shows that as $p$ increases, the CPU time for the PC-LiNGAM increases rapidly, while the CPU time for the proposed method increases slowly. 
This result is also consistent with the results in Section \ref{sec: complexity}, where the PC-LiNGAM is factorial time, and the proposed method is polynomial time.
The CPU time of the proposed method at $p=7$ is less than $1/10$ of that of the PC-LiNGAM. 
If $p$ exceeds 10, the PC-LiNGAM will not be able to output an estimate of a DEP in a practical amount of time.
When $p=5$ and when $(p,n)=(6,1500), (6,2000)$, the PC-LiNGAM has faster CPU time, but the proposed method is faster in CPU time when $n \ge 3000$, even with $p=6$.

As mentioned in the previous subsection, in this experiment, HSIC was performed using a random sample of size 1500, even when the sample size was larger than 1500.
Since the sample size used for HSIC is fixed, the rate of increase in CPU time for the proposed method against the sample size is moderate. Moreover, the estimation accuracy is not significantly different from that of the PC-LiNGAM.
If the sample size for HSIC is fixed when $G$ is a tree, the proposed method's time complexity is reduced to $O(n\log n \cdot p)$ using the result in Section \ref{sec: complexity}, which is superior to that of the PC-LiNGAM.
Fixing the sample size for HSIC to an appropriate size may reduce the CPU time of the proposed algorithm even when $G$ is sparse and the sample size is large.

In summary, these experiments confirmed that the proposed method can estimate DEPs with reasonably high accuracy and requires far less computation time compared to the PC-LiNGAM.

\begin{figure}[htp]
    \centering
    % 第一行子图
    \scalebox{0.2}{
    \includegraphics{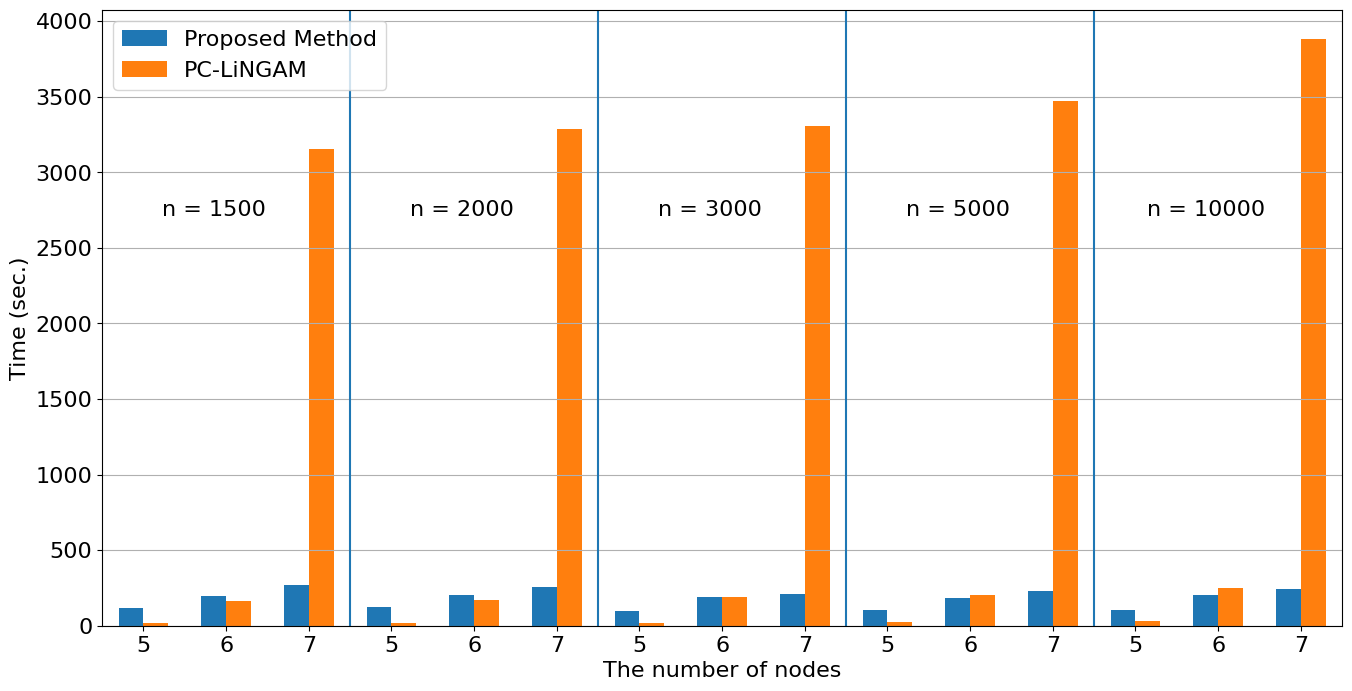}
    }
    \caption{CPU time of the proposed algorithm and the PC-LiNGAM against the dimension of variables.}
    \label{fig: results0}
\end{figure}
\begin{figure}[htp]
    \centering
    % 第一行子图
    \includegraphics[width=\linewidth]{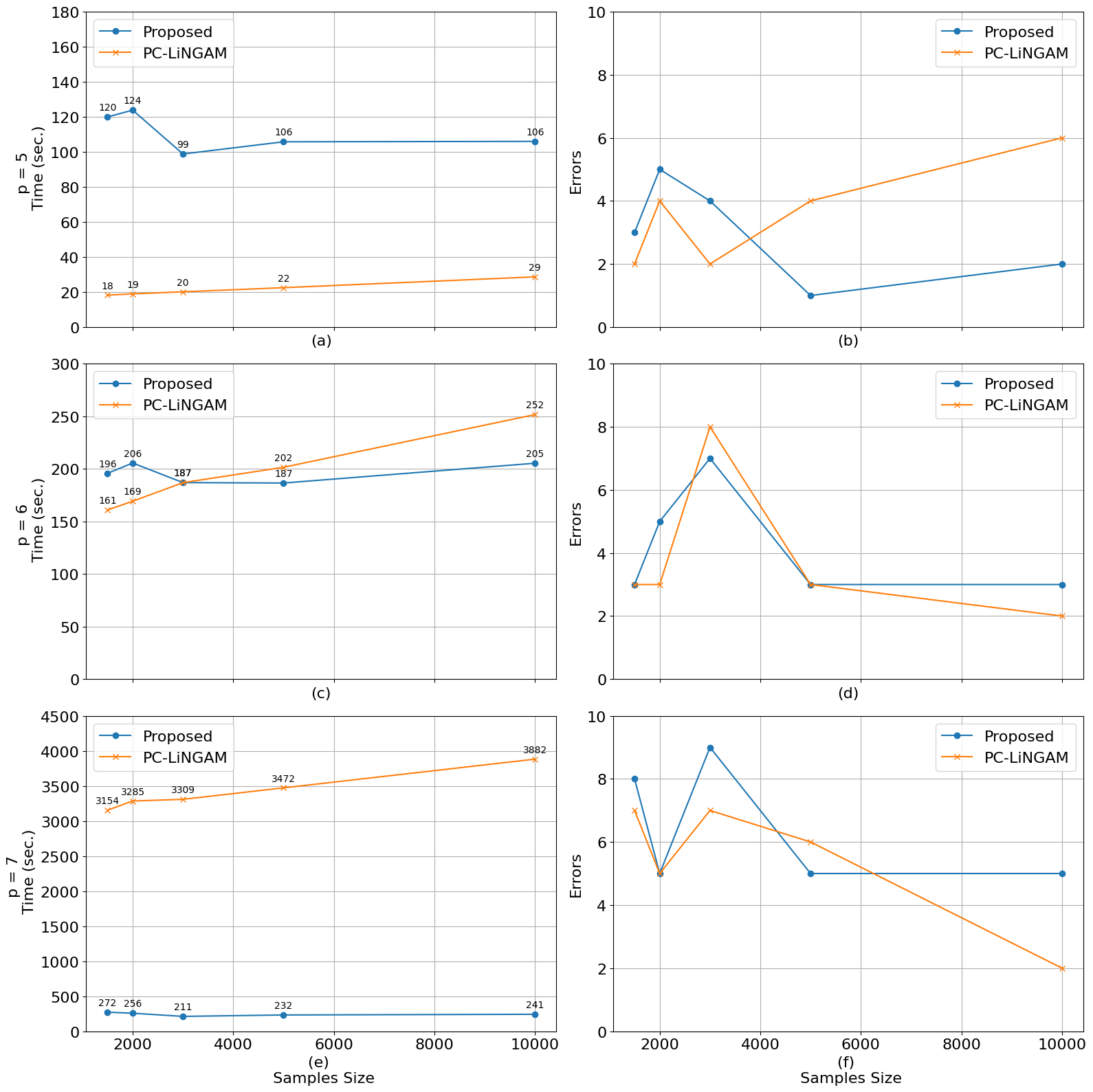}
    \caption{CPU time and estimation accuracy of the proposed algorithm and the PC-LiNGAM for $p=5,6,7$: 
    The figures in the left column show the CPU times in seconds against the sample size. 
    The figures in the right column show the number of incorrectly estimated DEPs in the 50 iterations against the sample size.}
    \label{fig: results}
\end{figure}

\section{Conclusion}
This paper proposes a new algorithm for learning distribution-equivalence patterns of a causal graph in linear causal models. 
We generalized the ancestor-finding proposed by Maeda and Shimizu \cite{Maeda2020} to the case where Gaussian disturbances are included in the linear causal models. We used it to determine the orientation of the undirected edges of the d-separation-equivalence pattern (DSEP).
We showed that the proposed method runs in polynomial time, whereas the PC-LiNGAM runs in factorial time in the worst case.

We assumed that a DSEP estimated by the PC algorithm is correct and then performed numerical experiments to estimate a distribution-equivalence pattern (DEP) from a DSEP in the case where the true causal DAG is a directed complete DAG. 
The results showed that the proposed method and the PC-LiNGAM do not differ significantly in the estimation accuracy, but the proposed method dramatically reduces the computation time. 
When the number of variables is 7, the proposed method exhibited far faster CPU time compared to the PC-LiNGAM. 

We did not perform any experiments implementing the proposed method, including estimating a DSEP using the PC algorithm. 
Since the PC algorithm is exponential in computation time, the entire algorithm of the proposed method is also exponential time. 

When the true causal DAG is sparse, divide-and-conquer approaches (e.g., \cite{Cai2013, Zhang2020, cai2024learning}) might accelerate the PC algorithm. 
The combination of the divide-and-conquer PC algorithm and the proposed method may make it possible to compute DEPs for high-dimensional and sparse causal DAGs in a practical amount of time. 

The problem with Algorithm \ref{alg: proposed} is that the output $G_\mathrm{dep}$ may contain V-structures or directed cycles that are inconsistent with a DSEP. In Section \ref{sec: exception}, we provided Algorithm \ref{alg: exception}, which outputs a chain graph that is consistent with a DSEP by removing inconsistent V-structures and cycles.
Algorithm \ref{alg: exception} randomly generates a chain graph that is consistent with a DSEP. If the choice of $x_0$ changes, the output chain graph may also change, and the plausibility of the output chain graph is not fully evaluated. Especially when the sample size is small compared to the dimension of the variables, the impact of such exception handling on estimation accuracy is expected to be significant. A better exception handling is left as a future task. 

This paper does not assume the existence of latent confounders. As mentioned in Section \ref{sec:pc}, FCI (\cite{Spirtes}) is a generalization of the PC algorithm to cases with the presence of latent confounders. 
RCD is an algorithm for identifying causal DAGs for the model that generalizes LiNGAM to account for the presence of latent confounders. 
Similar to the proposed method, one possible direction is to combine the FCI and RCD to identify the causal graphs that define linear causal models that allow for the presence of Gaussian disturbances and latent confounders. 
This would also be a topic for future research. 
%We also hope that the method proposed in this paper can be extended to handle this case efficiently in future research.

\bibliographystyle{plain}
\bibliography{main}

\appendix
\section{Appendix}
\subsection{Some basic facts on the linear causal model}
This section summarizes some basic facts necessary for the proof of Theorem \ref{ancestor finding simple} and \ref{ancestor finding}.
Consider the linear acyclic model (\ref{eq:SEM}). 
Denote by $b_{j,i}$ the $(j,i)$-element of $B$. 
The model (\ref{eq:SEM}) is rewritten by
\[
\bm{X} = (I-B)^{-1} \bm{\epsilon},
\]
and $(I-B)^{-1}$ is also transformed into a lower triangular matrix with all diagonal elements equal to one by permuting the rows and the columns. 
Let $d_{j,i}$ be the $(j,i)$-element of $(I-B)^{-1}$. 
The following lemma is well known. 
\begin{lemma}[e.g. \cite{dfs2011}]
\label{lemma: relation}
Let $\mathcal{P}(i, j)$ denote the set of directed paths from $x_i$ to $x_j$ in $G$.
Then, $d_{j,i}$ %, $j > i$ 
is written by
\begin{align*}
    d_{j, i} &= 
    \sum_{\pi \in \mathcal{P}(i,j)} \prod_{x_k \to x_l \in \pi} b_{l,k}, 
    \label{equation: d and b}
\end{align*}
which is the total effect from $x_i$ to $x_j$. 
\end{lemma}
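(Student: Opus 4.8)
The plan is to expand $(I-B)^{-1}$ as a (finite) Neumann series and then read off the entries of the powers $B^m$ combinatorially, using acyclicity to identify the surviving terms with directed paths.

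First I would note that since $B$ can be permuted to a strictly lower triangular matrix, $B$ is nilpotent; in particular $B^m = 0$ for all $m \ge p$. Hence the Neumann series terminates, giving
\[
(I-B)^{-1} = \sum_{m=0}^{p-1} B^m,
\]
so that $d_{j,i} = \sum_{m=0}^{p-1} (B^m)_{j,i}$. Next I would expand each power by the definition of matrix multiplication,
\[
(B^m)_{j,i} = \sum_{(k_0,\ldots,k_m):\, k_0=i,\, k_m=j} \;\prod_{t=1}^{m} b_{k_t,k_{t-1}},
\]
where the sum ranges over all index sequences with $k_0 = i$ and $k_m = j$. Because $b_{l,k} \ne 0$ only when $x_k \to x_l \in E$, the only nonvanishing terms correspond to directed walks $x_i = x_{k_0} \to x_{k_1} \to \cdots \to x_{k_m} = x_j$ of length $m$ in $G$, each contributing the product of the weights $b_{k_t,k_{t-1}}$ of its edges.

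The key step is then to invoke acyclicity. In a DAG, any directed walk from $x_i$ to $x_j$ must in fact be a directed path, since a repeated vertex along the walk would force a directed cycle, contradicting acyclicity. Therefore, summing over all lengths $m \ge 1$, the surviving walks are exactly the directed paths in $\mathcal{P}(i,j)$, and each such path $\pi$ contributes $\prod_{x_k \to x_l \in \pi} b_{l,k}$. This yields the claimed formula.

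I do not anticipate a serious obstacle, as this is a standard identity. The only points requiring care are the index-convention bookkeeping (the first subscript of $b_{l,k}$ and $d_{j,i}$ being the target vertex, so that the edge $x_k \to x_l$ carries weight $b_{l,k}$) and the treatment of the $m=0$ term: for $i \ne j$ the identity contributes nothing, while for $i = j$ it corresponds to the empty path of total effect one, so the off-diagonal and diagonal cases are both covered uniformly by the path sum.
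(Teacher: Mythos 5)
Your proof is correct. The paper does not actually prove this lemma --- it states it as well known and cites \cite{dfs2011} --- and your Neumann-series argument (nilpotency of $B$, expansion of $(B^m)_{j,i}$ over index sequences, acyclicity forcing walks to be paths) is precisely the standard proof that such a citation points to, with the index conventions and the $m=0$ term handled correctly.
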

We note that $x_i$ is expressed as
\[
x_i = \sum_{k: x_k \in Anc_i} d_{i,k}\epsilon_k.
\]
From the faithfulness assumption, $d_{i,k} \ne 0$ if $x_k \in Anc_i$. 
\begin{lemma}
    \label{lem:noBCA}
    Assume that $x_i \in Anc_j$ and that $BCA_{ij} = \emptyset$. 
    Then, the following two conditions hold.
    \begin{compactenumi}
        \item For $x_k \in Anc_i$, $d_{j,k} = d_{j,i} d_{i,k}$.
        \item $d_{j,i}$ is expressed as
        \[
        d_{j,i} = \frac{\mathrm{Cov}(x_i,x_j)}{\mathrm{Var}(x_i)}. 
        \]        
    \end{compactenumi}
\end{lemma}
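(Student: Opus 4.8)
The plan is to prove the two claims in sequence, with part (i) feeding directly into part (ii). The crux is a single graph-theoretic observation about what the hypothesis $BCA_{ij}=\emptyset$ buys us, after which both parts reduce to bookkeeping with the total-effect formula of Lemma \ref{lemma: relation} and the independence of the disturbances.

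First I would establish the key structural fact: under $x_i \in Anc_j$ and $BCA_{ij}=\emptyset$, every directed path from a common ancestor $x_k$ to $x_j$ must pass through $x_i$. This applies in particular to every $x_k \in Anc_i$, which lies in $Anc_j$ by transitivity and is therefore a common ancestor. I would argue by contradiction: if some such $x_k$ admitted a directed path to $x_j$ avoiding $x_i$, then concatenating the reverse of a directed path $x_k \to \cdots \to x_i$ with that forward path $x_k \to \cdots \to x_j$ produces a path $x_i \leftarrow \cdots \leftarrow x_k \to \cdots \to x_j$. This is a backdoor path from $x_i$ to $x_j$ with $x_k$ as a fork and no colliders, hence active, so $x_k \in BCA_{ij}$, contradicting the hypothesis. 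Acyclicity guarantees that the two directed segments meet only at $x_k$, so the concatenation is a genuine path.

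Given this, part (i) follows by decomposing paths at $x_i$. Because every $\pi \in \mathcal{P}(k,j)$ passes through $x_i$ exactly once (paths in a DAG are simple), splitting $\pi$ at $x_i$ yields a bijection between $\mathcal{P}(k,j)$ and $\mathcal{P}(k,i) \times \mathcal{P}(i,j)$; acyclicity again ensures the two halves share only $x_i$, so no vertex repeats. Feeding this bijection into the formula of Lemma \ref{lemma: relation} lets the product of edge coefficients factor across the split point, and the sum over $\mathcal{P}(k,j)$ separates into a product, giving $d_{j,k} = d_{j,i}\, d_{i,k}$. For part (ii) I would pass to the reduced form $x_i = \sum_{x_k \in Anc_i} d_{i,k}\epsilon_k$ and likewise for $x_j$, and exploit independence of the $\epsilon_k$. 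Since $x_i \in Anc_j$ gives $Anc_i \subseteq Anc_j$, the disturbances common to both expansions are exactly those indexed by $Anc_i$ (together with $\epsilon_i$ itself), so $\mathrm{Cov}(x_i,x_j) = \sum_{k} d_{i,k}d_{j,k}\,\mathrm{Var}(\epsilon_k)$ over that index set. Substituting $d_{j,k}=d_{j,i}d_{i,k}$ from part (i) factors out $d_{j,i}$ and leaves $d_{j,i}\sum_{k} d_{i,k}^2\,\mathrm{Var}(\epsilon_k) = d_{j,i}\,\mathrm{Var}(x_i)$, which rearranges to the stated identity.

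The main obstacle I anticipate is the first step: translating the hypothesis $BCA_{ij}=\emptyset$ into the clean statement that all common-ancestor-to-$x_j$ paths route through $x_i$, and making the path-splitting rigorous. In particular, confirming that the two directed segments do not overlap is exactly where acyclicity is essential, and without it the bijection and hence the factorization would fail. Once the no-overlap property and the index set of shared disturbances are pinned down, the covariance computation in part (ii) is routine.
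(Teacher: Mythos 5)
Your overall route coincides with the paper's: part (i) by factoring the path-sum of Lemma \ref{lemma: relation} at $x_i$, and part (ii) by expanding $\mathrm{Cov}(x_i,x_j)$ and $\mathrm{Var}(x_i)$ in the reduced form, using independence of the disturbances, and substituting the factorization from (i) to pull out $d_{j,i}$. Indeed you go further than the paper, which simply \emph{asserts} that $BCA_{ij}=\emptyset$ forces every path in $\mathcal{P}(k,j)$, $x_k \in Anc_i$, to pass through $x_i$, whereas you try to prove it; your path-splitting bijection is also correctly justified, since there the two segments are in series (from $x_k$ to $x_i$, then from $x_i$ to $x_j$), and a shared internal vertex $v \ne x_i$ would yield a directed cycle $x_i \to \cdots \to v \to \cdots \to x_i$. (One triviality: the covariance sum runs over $Anc_i \cup \{x_i\}$, so you also need the factorization at $k=i$, which holds because $d_{i,i}=1$; the paper glosses over this as well.)

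However, one step in your structural argument is incorrectly justified. You claim that acyclicity guarantees that the directed path $P_1$ from $x_k$ to $x_i$ and the directed path $P_2$ from $x_k$ to $x_j$ (avoiding $x_i$) meet only at $x_k$. This is false: two directed paths diverging from a common source can share vertices without creating any cycle. For example, with edges $x_k \to v$, $v \to x_i$, $v \to x_j$, the paths $P_1 = x_k \to v \to x_i$ and $P_2 = x_k \to v \to x_j$ share $v$, yet the graph is acyclic; your concatenation would then be a walk, not a path. The repair is standard: let $v$ be the vertex of $P_1$ closest to $x_i$ that also lies on $P_2$ (possibly $v = x_k$). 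Then the segments $P_1$ from $v$ to $x_i$ and $P_2$ from $v$ to $x_j$ share only $v$; moreover $v \ne x_i$ because $P_2$ avoids $x_i$, and $v \ne x_j$ because $x_j \in Anc_i$ together with $x_i \in Anc_j$ would contradict acyclicity. Concatenating these two segments gives a genuine backdoor path $x_i \leftarrow \cdots \leftarrow v \to \cdots \to x_j$ with $v \in CA_{ij}$, hence $v \in BCA_{ij}$, which is the contradiction you want. With this local fix your proof is complete and matches the paper's argument, supplying a detail the paper leaves implicit.
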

\begin{proof}\ 
\begin{compactenumi}
    \item Since $BCA_{ij} = \emptyset$, all path in ${\cal P}(k,j)$ include $x_i$. 
    Therefore
    \begin{align*}
        d_{j,k} &= \sum_{\pi \in \mathcal{P}(k,j)} \prod_{x_h \to x_{l} \in \pi} b_{l,h}\\
        &= \sum_{\substack{\pi \in \mathcal{P}(k,i)\\
        \pi' \in \mathcal{P}(i,j)}} \prod_{x_h \to x_{l} \in \pi} b_{l,h} \cdot
        \prod_{x_{h'} \to x_{l'} \in \pi'} b_{l',h'}\\
        &= \sum_{\pi \in \mathcal{P}(k,i)} \prod_{x_h \to x_{l} \in \pi} b_{l,h} \cdot
        \sum_{\pi' \in \mathcal{P}(i,j)} \prod_{x_{h'} \to x_{l'} \in \pi'} b_{l',h'} 
        = d_{i,k} d_{j,i}
    \end{align*}
    \item Since $x_i$ and $x_j$ is expressed as
        \[
        x_i = \sum_{k : x_k \in Anc_i \cup \{x_i\}} d_{i,k} \epsilon_k, \quad
        x_j = \sum_{l : x_l \in Anc_j \cup \{x_j\}} d_{j,l} \epsilon_l, 
        \]
        $\mathrm{Cov}(x_i,x_j)$ and $\mathrm{Var}(x_i)$ is written by 
        \begin{align*}
            \mathrm{Cov}(x_i,x_j) &= 
            \sum_{k : x_k \in Anc_i \cup \{x_i\}} d_{i,k} d_{j,k} \mathrm{Var}(\epsilon_k)\\
            &=
            d_{j,i} \sum_{k : x_k \in Anc_i \cup \{x_i\}} d_{i,k}^2 \mathrm{Var}(\epsilon_k), 
        \end{align*}
        \[
        \mathrm{Var}(x_i) = \sum_{k : x_k \in Anc_i \cup \{x_i\}} d_{i,k}^2 \mathrm{Var}(\epsilon_k).
        \]
        Therefore, 
        \[
        d_{j,i} = \frac{\mathrm{Cov}(x_i,x_j)}{\mathrm{Var}(x_i)}. 
        \]   
\end{compactenumi}
\end{proof}

Lastly, we quote Darmois-Skitovitch theorem (\cite{Darmois1953, Skitovich1953}) and Cram\'er's decomposition theorem \cite{Cramér}.

\begin{theorem}[Darmois-Skitovitch theorem]
    \label{thm:DS}
    Define two random variables $y_1$ and $y_2$ as linear combinations of independent random variables $w_i$, 
    $i=1,\ldots,m$:
    \[
    y_1 = \sum_{i=1}^m \alpha_i w_i, \quad 
    y_2 = \sum_{i=1}^m \beta_i w_i
    \]
    Then, if $y_1$ and $y_2$ are independent, all variables $w_j$ for which $\alpha_j\beta_j \ne 0$ are Gaussian.
\end{theorem}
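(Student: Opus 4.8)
The plan is to prove the statement by the classical characteristic-function method. Write $\phi_j(u) = E[e^{iu w_j}]$ for the characteristic function of $w_j$. Since the $w_j$ are independent, for any $s,t \in \mathbb{R}$ the joint characteristic function of $(y_1,y_2)$ factorises through the $w_j$, giving $E[e^{i(sy_1+ty_2)}] = \prod_{j} \phi_j(s\alpha_j + t\beta_j)$, while $E[e^{isy_1}] = \prod_j \phi_j(s\alpha_j)$ and $E[e^{ity_2}] = \prod_j \phi_j(t\beta_j)$. Independence of $y_1$ and $y_2$ forces $E[e^{i(sy_1+ty_2)}] = E[e^{isy_1}]E[e^{ity_2}]$. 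Each $\phi_j$ is continuous with $\phi_j(0)=1$, so it is non-vanishing on a neighbourhood $N$ of the origin; on $N$ I set $\psi_j = \log \phi_j$ (the continuous branch with $\psi_j(0)=0$) and take logarithms to obtain the additive functional equation
\[
\sum_{j=1}^m \psi_j(\alpha_j s + \beta_j t) = \sum_{j=1}^m \psi_j(\alpha_j s) + \sum_{j=1}^m \psi_j(\beta_j t),
\]
valid for $(s,t)$ near the origin. A term with $\alpha_j=0$ or $\beta_j=0$ cancels between the two sides (using $\psi_j(0)=0$), so I may discard those indices and assume $\alpha_j\beta_j \ne 0$ for every $j$; it is exactly these $\psi_j$ that I must show are quadratic.

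Next I would solve this functional equation using finite differences, which avoids any a priori moment or smoothness assumption. Applying the mixed second difference $\Delta_{s,h}\Delta_{t,\ell}$ annihilates both right-hand sums (one depends only on $s$, the other only on $t$), leaving $\sum_j \big[\psi_j(\xi_j + h\alpha_j + \ell\beta_j) - \psi_j(\xi_j + h\alpha_j) - \psi_j(\xi_j + \ell\beta_j) + \psi_j(\xi_j)\big] = 0$ with $\xi_j = \alpha_j s + \beta_j t$. To isolate a fixed index $k$, I then difference this identity in the planar directions $(\beta_j,-\alpha_j)$ for each $j \ne k$: such a shift leaves $\xi_j$ invariant, so it kills the $j$-th summand, while it changes $\xi_k$ by a nonzero amount because $(\alpha_k,\beta_k)$ and $(\alpha_j,\beta_j)$ are non-proportional. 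After all these operations only the $k$-th summand survives, and it has become a finite difference of $\psi_k$ of order $m+1$ that vanishes identically for all admissible increments. A standard lemma (a continuous function all of whose differences of some fixed order vanish is a polynomial of degree below that order) then shows $\psi_k$ is a polynomial on $N$.

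Finally I would upgrade \emph{polynomial log-characteristic function} to \emph{Gaussian}. Since $\psi_k$ is a polynomial on a neighbourhood of $0$, $w_k$ possesses moments of every order and its cumulants of order $\ge 3$ coincide with the higher coefficients of the polynomial; propagating the non-vanishing of $\phi_k$ from $N$ to all of $\mathbb{R}$ (Gaussian-type characteristic functions never vanish) lets $\phi_k$ extend to an entire function, and Marcinkiewicz's theorem (an entire characteristic function of the form $\exp(\text{polynomial})$ has polynomial of degree at most two) forces $\psi_k(u) = i\mu_k u - \tfrac12\sigma_k^2 u^2$, which is precisely the log-characteristic function of a (possibly degenerate) Gaussian. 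Hence every $w_j$ with $\alpha_j\beta_j \ne 0$ is Gaussian, completing the proof.

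I expect the functional-equation step to be the main obstacle. The delicate points are: justifying the isolation of a single index $\psi_k$ so that the surviving expression is genuinely a pure high-order difference of $\psi_k$ (this needs the pairwise non-proportionality of the direction vectors $(\alpha_j,\beta_j)$, together with the merging of any proportional pair, which is where the mutual independence of the $w_j$ enters); keeping all arguments inside the neighbourhood $N$ on which $\log\phi_j$ is defined, so that every increment must be taken small; and the local-to-global passage in the last step, namely deducing an entire characteristic function and the global quadratic form from an identity that initially holds only near the origin. The factorisation and the logarithm steps are routine, and the finite-difference polynomial lemma and Marcinkiewicz's theorem may be cited.
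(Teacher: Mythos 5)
The paper itself gives no proof of this statement: it is quoted as a classical theorem with citations to Darmois (1953) and Skitovich (1953), so your attempt can only be judged on its own terms. Your skeleton is the standard classical argument --- factorize the joint characteristic function through independence, take a local logarithm, solve the resulting additive functional equation by finite differences to conclude the log-characteristic functions are polynomials, and finish with Marcinkiewicz --- and that route does prove the theorem. The finite-difference isolation step is sound as you describe it: a shift along $(\beta_j,-\alpha_j)$ indeed leaves $\xi_j=\alpha_j s+\beta_j t$ invariant while moving $\xi_k$ by $(\alpha_k\beta_j-\beta_k\alpha_j)h\ne 0$ for non-proportional pairs, and the vanishing-differences-implies-polynomial lemma is a legitimate citation.

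Two places need real repair rather than elaboration. (a) Proportional pairs: when $(\alpha_j,\beta_j)$ and $(\alpha_k,\beta_k)$ are proportional, your differencing cannot separate the indices, and after merging the conclusion is only that the independent sum $\alpha_j w_j + \alpha_k w_k$ (suitably scaled) has quadratic log-characteristic function, i.e.\ is Gaussian. To descend to the individual $w_j$ and $w_k$ you must invoke Cram\'er's decomposition theorem --- precisely Theorem \ref{thm: Cramer}, which the paper quotes immediately after this statement --- and your proposal names the merging as a delicate point but never performs the un-merging. (b) Local to global: your stated mechanism, ``propagating the non-vanishing of $\phi_k$,'' does not extend $\phi_k$ to an entire function; non-vanishing is a consequence of Gaussianity, not a tool for establishing it. The correct bridge is that $\psi_k$ polynomial of degree $d$ on a real neighbourhood of $0$ means all cumulants of $w_k$ exist and vanish beyond order $d$; expanding moments in cumulants via Bell polynomials then yields bounds of the form $|E[w_k^n]| \le C^n (n!)^{1-1/d}$, so the Taylor series of $\phi_k$ converges everywhere, $\phi_k$ is entire, agrees with $e^{P}$ on a real interval and hence on all of $\mathbb{C}$ by analyticity --- and only then does Marcinkiewicz force $\deg P \le 2$ (with the degenerate case $\sigma_k^2=0$ allowed, as the theorem's conclusion should be read to include point masses). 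With these two patches your outline becomes the complete standard proof.
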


\begin{theorem}[Cram\'er’s decomposition theorem]
\label{thm: Cramer}
    For two independent random variables $\epsilon_{i}$ and $\epsilon_{j}$,
    $\epsilon = \epsilon_{i} + \epsilon_{j}$ is Gaussian, if and only if $\epsilon_{i}$ and $\epsilon_{j}$ are Gaussian.
\end{theorem}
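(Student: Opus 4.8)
The plan is to prove correctness by first fixing the output of the PC algorithm and then arguing that the edge-orientation loop reproduces exactly the distribution-equivalence pattern of $G$ on each undirected connected component. Under the faithfulness assumption the PC algorithm returns the correct DSEP $G_{\mathrm{dsep}}$, whose directed edges $E_{\mathrm{di}}$ already agree with the DEP; hence it suffices to show that Algorithm \ref{alg: proposed} orients the edges in $E_{\mathrm{ud}}$ precisely as the DEP does, and that the final application of Algorithm \ref{alg:PC2} introduces no spurious orientations. First I would record the structural consequence of Corollary \ref{cor: ancestor finding simple}: writing $x_i = \sum_{k : x_k \in Anc_i \cup \{x_i\}} d_{i,k}\epsilon_k$ as in Lemma \ref{lemma: relation}, Cram\'er's theorem (Theorem \ref{thm: Cramer}) together with the generic nonvanishing of the $d_{i,k}$ shows that $x_i \sim \mathcal{G}$ iff $\epsilon_i$ and every ancestral disturbance is Gaussian. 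Thus the Gaussian variables form an ancestrally closed set, and within any undirected component the Gaussian vertices sit upstream of the non-Gaussian ones. This is the fact that makes the two-phase structure of the loop (first Theorem \ref{ancestor finding simple BCA star}, then Theorem \ref{ancestor finding BCA star}) exhaustive.

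The core of the argument is an induction on a causal order of $G$ restricted to each undirected component $G' = (\bm{X}', E'_{\mathrm{ud}})$. I would first handle the phase on lines 12--24: since the Gaussian vertices are ancestrally closed, every edge with one Gaussian and one non-Gaussian endpoint is oriented Gaussian $\to$ non-Gaussian, which Theorem \ref{ancestor finding simple BCA star} certifies to be correct, and afterwards every remaining undirected edge joins two non-Gaussian vertices. For the phase on lines 31--53 I would argue, by induction on the causal order, that when the algorithm processes a current source pair and regresses out the already-identified backdoor common ancestors $BCA^{*}_{ij}$, the residuals $v_i, v_j$ coincide generically with the variables obtained after projecting off all true common ancestors, so that the hypotheses of Theorem \ref{ancestor finding BCA star} are exactly the testable quantities. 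Condition (ii) then orients $x_i \to x_j$ whenever the relationship is direct with no unresolved common ancestor, while condition (iii) correctly defers the decision and enlarges $BCA^{*}_{ij}$; the absence of latent confounders guarantees, as in Maeda and Shimizu, that repeated application strictly shrinks the set of deferred pairs, so the while-loop terminates with every non-Gaussian--non-Gaussian edge whose two residuals are not both Gaussian correctly oriented.

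Having shown that the loop orients an undirected edge exactly when at least one of the two local residuals is non-Gaussian and leaves it undirected exactly when both residuals are Gaussian, I would then match this rule to the definition of the DEP. An edge of $G$ is reversible within the distribution-equivalence class iff the corresponding pair of local disturbances is jointly Gaussian, so the set of edges the algorithm leaves undirected is precisely the undirected part $\tilde{E}_{\mathrm{ud}}$ of $G_{\mathrm{dep}}$, and the directed edges recovered by the loop together with those inherited from $G_{\mathrm{dsep}}$ give the directed part. It remains to check that Algorithm \ref{alg:PC2} applied at line 59 only completes orientations forced in every distribution-equivalent model, which follows because the Meek rules are sound for chain graphs and the partially oriented graph produced by the loop is already consistent with $G$.

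Finally, the word \emph{generically} is carried by collecting the measure-zero exceptional sets encountered above: the faithfulness violations that would make some $d_{i,k}$ vanish, the coefficient hypersurfaces on which an OLS projection accidentally annihilates a non-Gaussian contribution (so that a residual becomes Gaussian through cancellation rather than through true Gaussianity of the relevant disturbances), and the degenerate configurations in which Darmois--Skitovitch (Theorem \ref{thm:DS}) fails to separate the independent from the dependent cases. Each of these is a proper algebraic subset of the parameter space $\{B, \mathrm{Var}(\epsilon_k)\}$ and hence has Lebesgue measure zero; off their union the preceding deterministic argument applies verbatim. The hard part will be the inductive identification, in the second phase, of the algorithm's sequentially computed residuals $v_i, v_j$ with the project-off-all-common-ancestors residuals for which Theorem \ref{ancestor finding BCA star} is stated, because the algorithm only ever regresses on the backdoor common ancestors it has discovered so far, and establishing that this partial conditioning already exposes the direct ancestral relationship is exactly where the no-latent-confounder assumption and the generic nonvanishing hypotheses must be combined most carefully.
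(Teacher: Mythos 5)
Your proposal does not prove the statement it was asked to prove. The statement is Cram\'er's decomposition theorem: if $\epsilon_i$ and $\epsilon_j$ are independent and $\epsilon = \epsilon_i + \epsilon_j$ is Gaussian, then both $\epsilon_i$ and $\epsilon_j$ are Gaussian (the converse being elementary). What you have written instead is a proof sketch for Theorem \ref{thm:identifiability}, the correctness of Algorithm \ref{alg: proposed} --- an argument about the PC algorithm, the orientation phases using Theorems \ref{ancestor finding simple BCA star} and \ref{ancestor finding BCA star}, and the collection of measure-zero exceptional sets. None of that bears on the decomposition statement. Worse, your first paragraph explicitly \emph{invokes} Theorem \ref{thm: Cramer} as an ingredient (to show that Gaussian variables form an ancestrally closed set), so even if one tried to read your text as addressing the statement, it would be circular.

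For the record, the paper itself offers no proof of this theorem: it is quoted as a classical result of Cram\'er, and it is genuinely deep. A self-contained proof cannot be assembled from the linear-algebraic and regression-based manipulations used elsewhere in the appendix; the standard argument works with characteristic functions and complex analysis. One shows that the characteristic functions $\varphi_i, \varphi_j$ of the summands satisfy $\varphi_i(t)\varphi_j(t) = e^{-\sigma^2 t^2/2 + i\mu t}$, extends $\varphi_i$ and $\varphi_j$ to zero-free entire functions of order at most $2$ (using bounds derived from the Gaussianity of the sum), and then applies Hadamard's factorization theorem to conclude $\varphi_i(z) = \exp(az^2 + bz + c)$, which forces $\epsilon_i$ to be Gaussian; symmetrically for $\epsilon_j$. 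If you are asked to supply a proof of this statement, that is the route to take --- or, as the paper does, cite it and move on.
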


\subsection{Proofs of Theorems in Section \ref{sec: proposed alg}}
\noindent{\bf Proof of Theorem \ref{ancestor finding simple}}

Since $x_i$ and $x_j$ are adjacent in $G$, $x_i \rightarrow x_j \in E$ or $x_i \leftarrow x_j \in E$ holds. Assume that $x_i \leftarrow x_j \in E$. 
Then 
\begin{equation}
\begin{aligned}
x_i = \sum_{k : x_k \in Anc_i} d_{i,k} \epsilon_k, \\ \notag 
x_j = \sum_{k : x_k \in Anc_j} d_{j,k} \epsilon_k. 
\end{aligned}
\end{equation}
From the assumption that $x_j \sim \mathcal{NG}$, there exists $k \in Anc_j$ satisfying $\epsilon_k \sim \mathcal{NG}$.
Since $x_k \in Anc_i$, $x_i \sim \mathcal{NG}$ by the contraposition of Theorem \ref{thm: Cramer}. 
\hfill\qed\\
\bigskip

Next, we will prove Theorem \ref{ancestor finding}. 
To determine the ancestral relationship between two variables $x_i$ and $x_j$, we consider the 
pair of simple regression models (\ref{model:no ancestors}) as in Maeda and Shimizu \cite{Maeda2020}. 
We note that 
\[
\mathrm{Cov}(x_i, r_j^{(i)}) = \mathrm{Cov}(x_j, r_i^{(j)}) = 0. 
\]
Define $[p]:=\{1,\ldots,p\}$. 

Before we prove the theorem, we provide some lemmas necessary for the proof. 
\begin{lemma}
\label{lemma: v-structure}
Assume that a graph $G=\left(\bm{X}, E\right)$ satisfies the faithfulness assumption. 
For two variables $x_i\in \bm{X}$ and $x_j \in \bm{X}$, 
$x_i \indep x_j$ holds if and only if $x_i \notin Anc_j$, $x_j \notin Anc_i$, and $BCA_{ij} = \emptyset$.
% $\exists x_k, x_i \rightarrow x_k \leftarrow x_j$ or $x_i$ and $x_j$ are not connected in the causal model.
\end{lemma}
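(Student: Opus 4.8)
The plan is to reduce the statistical statement to a purely graphical one and then to describe the active (open) paths joining $x_i$ and $x_j$ when the conditioning set is empty. In a DAG a path is blocked by $\emptyset$ exactly when it contains a collider, so $x_i$ and $x_j$ are d-separated by $\emptyset$ if and only if every path between them contains a collider, i.e. there is no collider-free path joining them. By the global Markov property of $G$, d-separation by $\emptyset$ already implies $x_i \indep x_j$, and under the faithfulness assumption the converse implication holds as well. Hence it suffices to prove the graph-theoretic equivalence: \emph{there is no collider-free path between $x_i$ and $x_j$} if and only if $x_i \notin Anc_j$, $x_j \notin Anc_i$, and $BCA_{ij} = \emptyset$. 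Note that when all three of these hold we are in the ``no ancestral relationship'' regime in which $BCA_{ij}$ is well defined.

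The core of the argument is a structural description of collider-free paths. Writing such a path as $x_i = v_0, v_1, \ldots, v_m = x_j$ and recording each edge as left- or right-pointing, the absence of a collider forbids the local pattern $v_{t-1} \to v_t \leftarrow v_{t+1}$; consequently the orientations must consist of a block of left-arrows followed by a block of right-arrows, i.e. the path has the shape $x_i \leftarrow \cdots \leftarrow v_a \to \cdots \to x_j$ for a unique source $v_a$. I would then split into three cases by the position of $v_a$: if $a = 0$ the path is the directed path $x_i \to \cdots \to x_j$, so $x_i \in Anc_j$; if $a = m$ it is $x_j \to \cdots \to x_i$, so $x_j \in Anc_i$; and if $0 < a < m$ then $v_a$ is a common ancestor of $x_i$ and $x_j$ and the path is an open backdoor path from $x_i$ to $x_j$ through $v_a$, whence $v_a \in BCA_{ij}$. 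Conversely, each of the three graph conditions furnishes a collider-free path of the corresponding type, the $BCA_{ij} \neq \emptyset$ case being exactly the open backdoor path guaranteed by the definition of a backdoor common ancestor. Taking contrapositives then delivers the claimed equivalence.

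I expect the main obstacle to lie in the internal-source case, namely matching the fork-shaped path precisely to the paper's definition of $BCA_{ij}$. For the forward direction one must verify that the source $v_a$ is indeed a common ancestor admitting an open backdoor path through it; for the converse one must read ``backdoor path'' in the definition as an \emph{active} path, and, should the two descending directed paths from a common ancestor intersect, trim them at their lowest shared vertex to recover a genuine simple collider-free path (with source possibly a descendant of the original common ancestor, which is still a backdoor common ancestor). Once this correspondence is pinned down, the remaining steps—the Markov/faithfulness translation and the elementary combinatorics of edge orientations along a path—are routine.
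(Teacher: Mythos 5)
Your proposal is correct and takes essentially the same route as the paper: both reduce $x_i \indep x_j$ to d-separation by $\emptyset$ via the Markov property and faithfulness, and then classify the collider-free paths as either directed paths (giving $x_i \in Anc_j$ or $x_j \in Anc_i$) or fork-shaped backdoor paths (giving $BCA_{ij} \neq \emptyset$). Your unique-source decomposition of a collider-free path and the trimming argument for intersecting descending paths simply make explicit the case analysis that the paper's terse necessity step leaves implicit.
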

\begin{proof}\ 
\label{proof: 1}
\begin{compactenumi}
    \item Sufficiency:  
        Under the faithfulness assumption, since $x_i \indep x_j$, 
        $x_i$ and $x_j$ can be d-separated by $\emptyset$, which implies that $x_i \notin Anc_j$, $x_j \notin Anc_i$, and $BCA_{ij} = \emptyset$. 
    \item 
    Necessity: 
    If $x_i \notin Anc_j$, $x_j \notin Anc_i$, 
    then either $x_i$ and $x_j$ are disconnected, or all paths between $x_i$ and $x_j$ contain V-structures. 
    Since $BCA_{ij} = \emptyset$, $x_i$ and $x_j$ are d-separated by $\emptyset$. 
\end{compactenumi}
\end{proof}

\begin{lemma}
\label{lemma: complex chain}
For two variables $x_i$ and $x_j$, assume that the following conditions are simultaneously satisfied:
\begin{compactitem}
    \item $x_i \notindep x_j$
    \item $x_i \in Anc_j$
    \item $BCA_{ij} = \emptyset$
\end{compactitem}
If there exists $x_k \in Anc_j\cup\{x_j\}$ such that $\epsilon_k\sim \mathcal{NG}$, 
one of the following two conditions holds:
\begin{compactenumi}
    \item $(x_i, x_j\sim\mathcal{NG}) \wedge (r^{(i)}_j \indep x_i) \wedge (r^{(j)}_i\notindep x_j)$
    \item $(x_i\sim\mathcal{G}, x_j\sim\mathcal{NG}) \wedge (r^{(i)}_j \indep x_i) \wedge (r^{(j)}_i\notindep x_j)$
\end{compactenumi}
Otherwise, $(x_i, x_j\sim \mathcal{G}) \wedge (r^{(i)}_j \indep x_i) \wedge (r^{(j)}_i\indep x_j)$. 
\end{lemma}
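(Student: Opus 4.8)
The plan is to write both residuals as explicit linear combinations of the mutually independent disturbances $\{\epsilon_m\}$ and then read off all the required (in)dependences through the Darmois--Skitovitch theorem (Theorem \ref{thm:DS}), converting statements about the Gaussianity of $x_i,x_j$ into statements about the $\epsilon_m$ via Cram\'er's theorem (Theorem \ref{thm: Cramer}). Write $A_i := Anc_i \cup \{x_i\}$ and $A_j := Anc_j \cup \{x_j\}$; since $x_i \in Anc_j$ we have $A_i \subseteq A_j$, and $D := A_j \setminus A_i$ is nonempty because $x_j \in D$ (acyclicity rules out $x_j \in A_i$). First I would invoke Lemma \ref{lem:noBCA}: part (ii) gives that the forward regression coefficient $\mathrm{Cov}(x_i,x_j)/\mathrm{Var}(x_i)$ equals $d_{j,i}$, and part (i) gives $d_{j,k}=d_{j,i}d_{i,k}$ for every $x_k \in A_i$ (the case $k=i$ being trivial since $d_{i,i}=1$). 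Substituting these into $r^{(i)}_j = x_j - d_{j,i}x_i$ cancels every $\epsilon_k$ with $x_k \in A_i$, leaving $r^{(i)}_j = \sum_{x_l \in D} d_{j,l}\epsilon_l$. Because $D$ and $A_i$ are disjoint and the disturbances are independent, $r^{(i)}_j \indep x_i$ holds unconditionally throughout this lemma.

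The heart of the argument is the reverse residual $r^{(j)}_i = x_i - c\,x_j$ with $c := \mathrm{Cov}(x_i,x_j)/\mathrm{Var}(x_j)$. Collecting coefficients, $\epsilon_l$ for $x_l \in D$ enters $r^{(j)}_i$ with coefficient $-c\,d_{j,l}$, while $\epsilon_k$ for $x_k \in A_i$ enters with coefficient $d_{i,k}-c\,d_{j,k} = d_{i,k}(1 - c\,d_{j,i})$, again using Lemma \ref{lem:noBCA}(i). The key step is to show that none of these coefficients vanishes: $c \ne 0$ and $d_{j,l}\ne 0$ by faithfulness, so the $D$-coefficients survive, whereas $c\,d_{j,i}$ equals the squared correlation of $x_i$ and $x_j$, which is strictly below $1$ because $\mathrm{Var}(r^{(i)}_j)=\sum_{x_l\in D}d_{j,l}^2\mathrm{Var}(\epsilon_l)\ge d_{j,j}^2\mathrm{Var}(\epsilon_j)>0$; hence $1 - c\,d_{j,i}\neq 0$ and the $A_i$-coefficients survive as well. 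Consequently $r^{(j)}_i$ and $x_j$ share exactly the same disturbance support $A_j$, so Theorem \ref{thm:DS} yields the sharp equivalence that $r^{(j)}_i \indep x_j$ if and only if every $\epsilon_m$ with $x_m \in A_j$ is Gaussian.

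It then remains to assemble the case split. By Cram\'er's theorem, $x_j = \sum_{x_m \in A_j} d_{j,m}\epsilon_m$ is Gaussian exactly when all these $\epsilon_m$ are Gaussian, and likewise $x_i$ is Gaussian exactly when all $\epsilon_k$ with $x_k \in A_i$ are Gaussian. If some $\epsilon_k$ with $x_k \in A_j$ is non-Gaussian, then $x_j \sim \mathcal{NG}$ and $r^{(j)}_i \notindep x_j$ by the equivalence above; whether $x_i$ is non-Gaussian (case (i)) or Gaussian (case (ii)) depends only on whether the offending disturbance lies in $A_i$, and in either case $r^{(i)}_j \indep x_i$ from the first paragraph. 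If instead all disturbances indexed by $A_j$ are Gaussian, then both $x_i,x_j \sim \mathcal{G}$ and $r^{(j)}_i \indep x_j$, which is the ``otherwise'' conclusion.

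The main obstacle I anticipate is justifying that the coefficients of $r^{(j)}_i$ on $A_i$ are nonzero, i.e. $c\,d_{j,i}\ne 1$. Without this, cancellation on part of the shared support could strip a non-Gaussian disturbance out of $r^{(j)}_i$ and reduce the clean ``iff'' to a mere one-sided implication, which would no longer pin down the three-way case analysis. Recognizing $c\,d_{j,i}$ as the squared correlation and bounding it away from $1$ through the strictly positive residual variance $\mathrm{Var}(r^{(i)}_j)\ge \mathrm{Var}(\epsilon_j)>0$ is precisely the step that makes the Darmois--Skitovitch application yield an equivalence rather than a one-way implication.
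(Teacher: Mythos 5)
Your proof is correct and follows essentially the same route as the paper: the same partition of the disturbance indices into $Anc_i\cup\{x_i\}$ and its complement inside $Anc_j\cup\{x_j\}$, the same cancellation via Lemma \ref{lem:noBCA} showing $r^{(i)}_j \indep x_i$ unconditionally, the same coefficient computation giving the factor $1-\rho_{ij}^2$ on the shared support, and the same Darmois--Skitovitch/Cram\'er case analysis (your variance argument for $1-\rho_{ij}^2\neq 0$ is actually more explicit than the paper's bare appeal to faithfulness). One small correction of attribution: the ``if'' direction of your claimed equivalence (all $\epsilon_m$ with $x_m \in Anc_j\cup\{x_j\}$ Gaussian $\Rightarrow r^{(j)}_i \indep x_j$) does not follow from Theorem \ref{thm:DS}, which only gives the converse implication; it follows because $r^{(j)}_i$ and $x_j$ are then jointly Gaussian and uncorrelated, which is exactly the one-line argument the paper uses in its final case.
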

\begin{proof}
Since $x_i \in Anc_j$, $Anc_i\cup\{x_i\} \subset Anc_j\cup\{x_j\}$ holds. 
Define disjoint sets of indices $K_A$, $K_B$ and $K_C$ by
\begin{align*}
    K_A &:= \{k \mid x_k \in Anc_i \cup \{x_i\}\},\\
    K_B &:= \{k \mid x_k \in (Anc_j \cup \{x_j\}) \setminus (Anc_i \cup \{x_i\})\}, \\
    K_C &:= [p] \setminus (K_A \cup K_B), 
\end{align*}
respectively. 
Then, using 
\begin{align*}
    x_{i} &= \sum_{k\in K_{A}}d_{i, k}\epsilon_k, \quad 
    x_{j} = \sum_{k\in K_{A}\cup K_{B}}d_{j, k}\epsilon_k, 
\end{align*}
$r^{(i)}_j$ and $r^{(j)}_i$ are expressed as 
\begin{align*}
    r^{(i)}_{j} &= x_{j} - \frac{\mathrm{Cov}(x_{i}, x_{j})}{\mathrm{Var}(x_{i})} x_{i} \\
    &= \sum_{k \in K_{A}}
    \left(d_{j, k} - \frac{\mathrm{Cov}(x_{i}, x_{j})}{\mathrm{Var}(x_{i})}d_{i, k} \right)
    \epsilon_{k} + \sum_{k\in K_{B}}d_{j,k}\epsilon_{k},\\
    r^{(j)}_{i} &= x_{i} - \frac{\mathrm{Cov}(x_{i}, x_{j})}{\mathrm{Var}(x_{j})} x_{j} \\
    &= \sum_{k\in K_{A}}
    \left(d_{i, k} - \frac{\mathrm{Cov}(x_{i}, x_{j})}{\mathrm{Var}(x_{j})}
    d_{j, k}\right)\epsilon_{k} - \sum_{k\in K_{B}}\frac{\mathrm{Cov}(x_{i}, x_{j})}{\mathrm{Var}(x_{j})}d_{j,k}\epsilon_{k}.  
\end{align*}
By $BCA_{ij} = \emptyset$, $x_{i} \in Anc_{j}$ and the faithfulness assumption, we have
\[
d_{j,i} = \frac{\mathrm{Cov}(x_i, x_j)}{\mathrm{Var}(x_i)} \neq 0
\]
and hence
\[
\frac{\mathrm{Cov}(x_i, x_j)}{\mathrm{Var}(x_j)} \ne 0.
\]
Since $d_{j,k} = d_{i,k}d_{j,i}$ for $k \in K_A$ from Lemma \ref{lem:noBCA}, $r_j^{(i)}$ and $r_i^{(j)}$ are rewritten by 
\[
r^{(i)}_{j} = \sum_{k\in K_{B}}d_{j,k}\epsilon_{k}, \quad 
r^{(j)}_{i} = \sum_{k\in K_{A}}
    \left(1-\rho_{ij}^2\right)d_{i,k}\epsilon_{k} - 
    \sum_{k\in K_{B}}
    \frac{\mathrm{Cov}(x_{i}, x_{j})}{\mathrm{Var}(x_{j})}d_{j,k}\epsilon_{k}, 
\]
where $\rho_{ij}$ is the correlation coefficient of $x_i$ and $x_j$. 
The first equality implies that $r^{(i)}_j \indep x_{i}$ always holds. 

In the case where there exists $l \in K_A$ such that $\epsilon_l \sim \mathcal{NG}$, 
both $x_i$ and $x_j$ are non-Gaussian. 
Since $(1-\rho_{ij}^2)d_{i,l} \ne 0$ from the faithfulness assumption, 
we can say that $x_j \notindep r^{(j)}_i$ by the contraposition of Darmois-Skitovich theorem. 

Consider the case where $\epsilon_k \sim \mathcal{G}$ for all $k \in K_A$ and there exists $l \in K_B$ such that $\epsilon_l \sim \mathcal{NG}$. 
Then, $x_i \sim \mathcal{G}$ and $x_j \sim \mathcal{NG}$ from the faithfulness assumption. 
Also in this case, since $d_{j,l} \ne 0$ from the faithfulness condition, we can say that $x_j \notindep r^{(j)}_i$ by the contraposition of Darmois-Skitovich theorem. 

If $\epsilon_k \sim \mathcal{G}$ for all $k \in K_A \cup K_B$, both $r^{(j)}_i$ and $x_j$ are 
Gaussian. Then, $\mathrm{Cov}(x_j,r^{(j)}_i)=0$ implies $r^{(j)}_i \indep x_j$. 
\end{proof}

\begin{lemma}
\label{lemma: complex fork}
Assume that $x_i$ and $x_j$ satisfy the following conditions. 
\begin{compactitem}
    \item $x_i \notindep x_j$
    \item $x_i \notin Anc_j, x_j \notin Anc_i$
    \item $BCA_{ij} \neq \emptyset$
\end{compactitem}
Then, $(x_i,x_j)$ generically satisfies one of the following three conditions.
\begin{compactenumi}
    \item $(x_i, x_j \sim \mathcal{NG}) \wedge  (r^{(i)}_j \notindep x_i) \wedge (r^{(j)}_i\notindep x_j)$
    \item $(x_i \sim \mathcal{G}, x_j \sim \mathcal{NG}) \wedge  (r^{(i)}_j \indep x_i) \wedge (r^{(j)}_i\notindep x_j)$
    \item $(x_i, x_j \sim \mathcal{G}) \wedge  (r^{(i)}_j \indep x_i) \wedge (r^{(j)}_i\indep x_j)$
\end{compactenumi}
\end{lemma}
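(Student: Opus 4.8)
The plan is to reduce the whole statement to one clean equivalence — that independence of each OLS residual with its own regressor is equivalent to the Gaussianity of that regressor — and then read off the three cases. First I would exploit the fork structure: since $x_i \notin Anc_j$ and $x_j \notin Anc_i$, the disturbances shared by the representations $x_i = \sum_k d_{i,k}\epsilon_k$ and $x_j = \sum_k d_{j,k}\epsilon_k$ are exactly those indexed by the common ancestors. Accordingly I would partition the indices into $K_i$ (disturbances feeding $x_i$ but not $x_j$, including $\epsilon_i$), $K_j$ (symmetric), and $K_{CA}$ (the common ancestors). Writing $\gamma_{ij}=\mathrm{Cov}(x_i,x_j)/\mathrm{Var}(x_i)$ and $\gamma_{ji}=\mathrm{Cov}(x_i,x_j)/\mathrm{Var}(x_j)$, I would expand $r^{(i)}_j = x_j-\gamma_{ij}x_i$ and $r^{(j)}_i = x_i-\gamma_{ji}x_j$ as explicit linear combinations of the $\epsilon_k$, recording the coefficient on each block. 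Because $BCA_{ij}\neq\emptyset$ forces $\mathrm{Cov}(x_i,x_j)\neq 0$ under faithfulness, both $\gamma_{ij}$ and $\gamma_{ji}$ are nonzero.

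The core claim I would then establish is that, generically, $r^{(i)}_j \indep x_i \iff x_i \sim \mathcal{G}$ and symmetrically $r^{(j)}_i \indep x_j \iff x_j \sim \mathcal{G}$. For the forward (contrapositive) direction I would observe that the set of indices where $x_i$ and $r^{(i)}_j$ both carry nonzero coefficients is exactly the support $K_i\cup K_{CA}$ of $x_i$: on $K_i$ the residual coefficient is $-\gamma_{ij}d_{i,k}\neq 0$, while on $K_{CA}$ it is $d_{j,k}-\gamma_{ij}d_{i,k}$, which vanishes only on a measure-zero parameter set (this is precisely where ``generically'' enters). If $x_i\sim\mathcal{NG}$, then by Cram\'er's theorem (Theorem \ref{thm: Cramer}) some $\epsilon_k$ with $k\in K_i\cup K_{CA}$ is non-Gaussian, and the contrapositive of the Darmois--Skitovitch theorem (Theorem \ref{thm:DS}) yields $r^{(i)}_j\notindep x_i$. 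For the converse I would use that $x_i\sim\mathcal{G}$ makes every $\epsilon_k$, $k\in K_i\cup K_{CA}$, Gaussian; splitting $r^{(i)}_j$ into its $K_i\cup K_{CA}$ part and its $K_j$ part, the latter is independent of $x_i$ outright, the former is jointly Gaussian with $x_i$, and the OLS orthogonality $\mathrm{Cov}(x_i,r^{(i)}_j)=0$ upgrades zero covariance to full independence.

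Finally I would combine the two equivalences. The joint Gaussianity profile of $(x_i,x_j)$ has four possibilities, and the equivalence converts each into the stated residual pattern: $(x_i,x_j\sim\mathcal{NG})$ gives (i), $(x_i\sim\mathcal{G},\,x_j\sim\mathcal{NG})$ gives (ii), and $(x_i,x_j\sim\mathcal{G})$ gives (iii). The remaining profile $(x_i\sim\mathcal{NG},\,x_j\sim\mathcal{G})$ is the mirror image of (ii) and is covered after the harmless relabeling $i\leftrightarrow j$, which is legitimate here precisely because the fork hypothesis $x_i\notin Anc_j \wedge x_j\notin Anc_i$ is symmetric in $i$ and $j$ (unlike the chain situation of Lemma \ref{lemma: complex chain}, where Corollary \ref{cor: ancestor finding simple} already excludes that profile).

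I expect the main obstacle to be the core equivalence, and within it two genuinely delicate points. The first is ruling out, generically, the accidental cancellation $d_{j,k}=\gamma_{ij}d_{i,k}$ on the common-ancestor block: a non-Gaussian common-ancestor disturbance would otherwise drop out of the residual and break the forward direction, so the measure-zero exception must be attributed exactly to this algebraic coincidence. The second is the Gaussian converse, where the Darmois--Skitovitch theorem supplies nothing and one must instead derive independence from joint normality together with the residual's orthogonality to its regressor.
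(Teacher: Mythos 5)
Your proof is correct and relies on the same ingredients as the paper's: the same three-block decomposition of the disturbance indices (your $K_{CA}$, $K_i$, $K_j$ are the paper's $K_A$, $K_B$, $K_C$, since in the fork situation the backdoor common ancestors together with their own ancestors are exactly the common ancestors), the contrapositive of the Darmois--Skitovitch theorem (Theorem \ref{thm:DS}) for the dependence assertions, Cram\'er's theorem (Theorem \ref{thm: Cramer}) to propagate Gaussianity between a variable and the disturbances in its support, and joint Gaussianity plus OLS orthogonality for the independence assertions. The difference is organizational: the paper runs a direct case analysis on where the non-Gaussian disturbances sit --- (i-a) in $K_A$, (i-b) in both $K_B$ and $K_C$, (ii) in $K_C$ only, (iii) nowhere --- and verifies the residual pattern separately in each case, whereas you distill everything into the single generic biconditional $r^{(i)}_j \indep x_i \iff x_i \sim \mathcal{G}$ (and its mirror) and then read off the conclusion from the four Gaussianity profiles of $(x_i,x_j)$. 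Your packaging buys something concrete: it automatically covers the profile $(x_i \sim \mathcal{NG},\, x_j \sim \mathcal{G})$, which the paper's enumeration silently omits (it corresponds to a non-Gaussian disturbance in $K_B$ with $K_A \cup K_C$ all Gaussian) and which, as you correctly argue, reduces to case (ii) by the relabeling $i \leftrightarrow j$ permitted by the symmetry of the fork hypotheses. One small correction: $BCA_{ij} \ne \emptyset$ together with faithfulness does \emph{not} force $\mathrm{Cov}(x_i,x_j) \ne 0$; faithfulness yields $x_i \notindep x_j$, but with non-Gaussian disturbances dependence is compatible with the accidental cancellation $\sum_{k} d_{i,k} d_{j,k}\mathrm{Var}(\epsilon_k) = 0$. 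The paper accordingly treats this nonvanishing as a generic condition, on the same footing as the non-cancellation $d_{j,k} \ne \gamma_{ij} d_{i,k}$ on the common-ancestor block, and your proof should do the same --- this costs nothing, since your conclusion is already stated generically.
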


\begin{proof}
Define $\overline{BCA}_{ij}$ by 
\[
\overline{BCA}_{ij} = BCA_{ij} \cup 
\left(
\bigcup_{k:x_k \in BCA_{ij}} Anc_k
\right). 
\]
In this proof, we define the four disjoint subsets of indices $K_A$, $K_B$, $K_C$, and $K_D$ as follows, 
\begin{align*}
    K_A &:= \{k \mid x_k \in \overline{BCA}_{ij}\},\\
    K_B &:= \{k \mid x_k \in Anc_i \cup \{x_i\} \setminus \overline{BCA}_{ij}\},\\
    K_C &:= \{k \mid x_k \in Anc_j \cup \{x_j\} \setminus \overline{BCA}_{ij}\},\\
    K_D &:= [p] \setminus (K_A \cup K_B \cup K_C).  
\end{align*}

Then, $x_i$, $x_j$, $r_j^{(i)}$ and $r_i^{(j)}$ are written by
\begin{align*}
    x_i &= \sum_{k \in K_A} d_{i, k} \epsilon_{k} + \sum_{k \in K_B} d_{i, k} \epsilon_k,  \\
    x_j &= \sum_{k \in K_A} d_{j, k} \epsilon_{k} + \sum_{k \in K_C} d_{j, k} \epsilon_k,  \\
    r^{(i)}_{j} &= x_j - \frac{\mathrm{Cov}(x_i, x_j)}{\mathrm{Var}(x_i)} x_i \\
    &=
    \sum_{k \in K_A}
    \left(
    d_{j, k} - \frac{\mathrm{Cov}(x_i, x_j)}{\mathrm{Var}(x_i)}d_{i, k}
    \right) \epsilon_k  
    -
    \sum_{k \in K_B} \frac{\mathrm{Cov}(x_i, x_j)}{\mathrm{Var}(x_i)} d_{i, k} \epsilon_k
    +
    \sum_{k \in K_C} d_{j, k} \epsilon_k, \\
    r^{(j)}_{i} &= x_i - \frac{\mathrm{Cov}(x_i, x_j)}{\mathrm{Var}(x_j)} x_j \\
    &=
    \sum_{k \in K_A}
    \left(
    d_{i, k} - \frac{\mathrm{Cov}(x_i, x_j)}{\mathrm{Var}(x_j)}
    d_{j, k}
    \right) \epsilon_k  
    +
    \sum_{k \in K_B} d_{i, k} \epsilon_k
    -
    \sum_{k \in K_C} \frac{\mathrm{Cov}(x_i, x_j)}{\mathrm{Var}(x_j)} d_{j, k} \epsilon_k, 
\end{align*}
respectively. 
\begin{compactenumi}
    \item[(i-a)] Suppose that there exists $l \in K_A$ such that 
        $\epsilon_{l} \sim \mathcal{NG}$. 
        Then $x_i$ and $x_j$ are non-Gaussian from the faithfulness assumption. Since 
        \begin{align}
            \label{eq:cond_nonzero}
            d_{j, l} - \frac{\mathrm{Cov}(x_i, x_j)}{\mathrm{Var}(x_i)}d_{i, l} \ne 0, 
            \quad 
            d_{i, l} - \frac{\mathrm{Cov}(x_i, x_j)}{\mathrm{Var}(x_j)}d_{j, l} \ne 0
        \end{align}
        generically holds, 
        $x_i \notindep r^{(i)}_j$ and $x_j \notindep r^{(j)}_i$ are shown by the contraposition of 
        Darmois-Skitovich Theorem. 
    \item[(i-b)] Suppose that there exist $l_B \in K_B$ and $l_C \in K_C$ such that 
        $\epsilon_{l_B}, \epsilon_{l_C} \sim  \mathcal{NG}$. 
        Then $x_i$ and $x_j$ are non-Gaussian from the faithfulness assumption. 
        Since $\mathrm{Cov}(x_i,x_j) \ne 0$ generically holds, 
        $x_i \notindep r^{(i)}_j$ and $x_j \notindep r^{(j)}_i$ are shown by the contraposition of Darmois-Skitovich Theorem. 
    \item[(ii)] Suppose that there exists $l \in K_C$ such that $\epsilon_l \sim \mathcal{NG}$ and that  
        $\epsilon_k \in \mathcal{G}$ for all $k \in K_A \cup K_B$. 
        Then $x_i \sim \mathcal{G}$ and $x_j \sim \mathcal{NG}$ from the faithfulness assumption. Since 
        $\mathrm{Cov}(x_i,r^{(i)}_j)=0$ and $\epsilon_k$ for $k \in K_A \cup K_B$ and $\epsilon_l$ for $l \in K_C$ are independent, 
        $x_i \indep r^{(i)}_j$. 
        Since $\mathrm{Cov}(x_i,x_j) \ne 0$ generically holds, 
        $x_j \notindep r^{(j)}_i$ by the contraposition of Darmois-Skitovich Theorem. 
    \item[(iii)] Suppose that $\epsilon_k \sim \mathcal{G}$ for all $k \in K_A \cup K_B \cup K_C$.
        Then $x_i, x_j \sim \mathcal{G}$ and $r^{(i)}_j, r^{(j)}_i \sim \mathcal{G}$. 
        Since $\mathrm{Cov}(x_i,r^{(i)}_j)=0$ and  
        $\mathrm{Cov}(x_j,r^{(j)}_i)=0$, 
        $x_i \indep r^{(i)}_j$ and $x_j \indep r^{(j)}_i$ hold.
\end{compactenumi}    
\end{proof}

\begin{lemma}
\label{lemma: complex triple}
Assume that $(x_i,x_j)$ satisfies the following conditions.
\begin{compactitem}
    \item $x_i \notindep x_j$
    \item $x_i \in Anc_j$
    \item $BCA_{ij} \neq \emptyset$
\end{compactitem}
Then, $(x_i,x_j)$ generically satisfies one of the following conditions. 
\begin{compactenumi}
    \item $(x_i, x_j \sim \mathcal{NG}) \wedge  (r^{(i)}_j \notindep x_i) \wedge (r^{(j)}_i\notindep x_j)$
    \item $(x_i\sim\mathcal{G}, x_j \sim \mathcal{NG}) \wedge  (r^{(i)}_j \indep x_i) \wedge (r^{(j)}_i\notindep x_j)$
    \item $(x_i \sim \mathcal{G}, x_j \sim \mathcal{G}) \wedge  (r^{(i)}_j \indep x_i) \wedge (r^{(j)}_i\indep x_j)$
\end{compactenumi}
\end{lemma}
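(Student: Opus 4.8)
The plan is to mirror the proof of Lemma \ref{lemma: complex fork}, decomposing the disturbances according to how they propagate to $x_i$ and $x_j$ and then reading off the Gaussianity and (in)dependence of the residuals from the surviving coefficients via the Darmois--Skitovich theorem (Theorem \ref{thm:DS}) and Cram\'er's theorem (Theorem \ref{thm: Cramer}). The decisive new feature, which separates this lemma from the chain case Lemma \ref{lemma: complex chain}, is that $BCA_{ij}\neq\emptyset$ forces $c:=\mathrm{Cov}(x_i,x_j)/\mathrm{Var}(x_i)\neq d_{j,i}$ generically; this is exactly what prevents the ancestor-of-$x_i$ terms from cancelling in $r_j^{(i)}$, so that here case (i) carries $r_j^{(i)}\notindep x_i$ rather than the independence seen in the chain case.

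First I would set $\overline{BCA}_{ij}=BCA_{ij}\cup\bigcup_{k:x_k\in BCA_{ij}}Anc_k$ as in Lemma \ref{lemma: complex fork}, and use $x_i\in Anc_j$ (so $Anc_i\cup\{x_i\}\subseteq Anc_j\cup\{x_j\}$ and $\overline{BCA}_{ij}\subseteq Anc_i$) to partition $[p]$ into $K_A=\{k:x_k\in\overline{BCA}_{ij}\}$, $K_B=\{k:x_k\in(Anc_i\cup\{x_i\})\setminus\overline{BCA}_{ij}\}$, $K_C=\{k:x_k\in(Anc_j\cup\{x_j\})\setminus(Anc_i\cup\{x_i\})\}$, and $K_D$ the remainder. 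I would then record the structural facts: $d_{i,k}\neq0$ for $k\in K_A\cup K_B$; $d_{i,k}=0$ and $d_{j,k}\neq0$ for $k\in K_C$; and, as the analogue of Lemma \ref{lem:noBCA}(i), that $d_{j,k}=d_{j,i}\,d_{i,k}$ for every $k\in K_B$. The last fact holds because $x_k\in K_B$ is not a backdoor common ancestor, so every directed path from $x_k$ to $x_j$ must pass through $x_i$; were there a path avoiding $x_i$, then $x_k$ itself would be a backdoor common ancestor and hence lie in $\overline{BCA}_{ij}$.

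Next I would write $x_i,x_j,r_j^{(i)}=x_j-c\,x_i$ and $r_i^{(j)}=x_i-c'x_j$ (with $c'=\mathrm{Cov}(x_i,x_j)/\mathrm{Var}(x_j)$) as linear combinations of the $\epsilon_k$. Substituting $d_{j,k}=d_{j,i}d_{i,k}$ on $K_B$, the coefficient of $\epsilon_k$, $k\in K_B$, in $r_j^{(i)}$ becomes $(d_{j,i}-c)\,d_{i,k}$, and in $r_i^{(j)}$ it becomes $(1-c'd_{j,i})\,d_{i,k}$. I would then compute $c-d_{j,i}=\bigl(\sum_{k\in K_A}(d_{j,k}-d_{j,i}d_{i,k})\,d_{i,k}\,\mathrm{Var}(\epsilon_k)\bigr)/\mathrm{Var}(x_i)$; since each backdoor common ancestor $x_k\in K_A$ contributes a directed path to $x_j$ avoiding $x_i$, we have $d_{j,k}-d_{j,i}d_{i,k}\neq0$, whence generically $c\neq d_{j,i}$ and the $K_B$-terms of $r_j^{(i)}$ do not vanish.

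With the coefficients in hand I would split into the three Gaussianity patterns, which are exhaustive and which, because $x_i\in Anc_j$, contain no ``$x_i\sim\mathcal{NG},x_j\sim\mathcal{G}$'' pattern (by Corollary \ref{cor: ancestor finding simple}). If some $\epsilon_l$, $l\in K_A\cup K_B$, is non-Gaussian, then $x_i\sim\mathcal{NG}$ and hence $x_j\sim\mathcal{NG}$; since $c\neq d_{j,i}$ this non-Gaussian term survives with nonzero coefficient in both residuals, so the contraposition of Theorem \ref{thm:DS} gives $r_j^{(i)}\notindep x_i$ and $r_i^{(j)}\notindep x_j$, which is case (i). If $K_A\cup K_B$ carries only Gaussian disturbances but some $\epsilon_l$, $l\in K_C$, is non-Gaussian, then $x_i\sim\mathcal{G}$, $x_j\sim\mathcal{NG}$; the shared disturbances of $x_i$ and $r_j^{(i)}$ are all Gaussian with zero covariance while the $K_C$-part is independent of $x_i$, giving $r_j^{(i)}\indep x_i$, whereas the $K_C$-term keeps $r_i^{(j)}\notindep x_j$ (using $c'\neq0$), which is case (ii). If all of $K_A\cup K_B\cup K_C$ is Gaussian, everything is jointly Gaussian and the zero covariances yield both independences, which is case (iii). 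The hard part will be the generic step $c\neq d_{j,i}$: each summand of $\sum_{k\in K_A}(d_{j,k}-d_{j,i}d_{i,k})\,d_{i,k}\,\mathrm{Var}(\epsilon_k)$ is individually nonzero, but one must argue that they do not conspire to cancel outside a measure-zero parameter set; this is precisely what the qualifier \emph{generically} absorbs, and it is the only place the conclusion could fail. Once it is secured, the remaining identification of nonzero coefficients and the applications of Theorem \ref{thm:DS} are routine and parallel to Lemmas \ref{lemma: complex chain} and \ref{lemma: complex fork}.
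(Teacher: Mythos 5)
Your proposal is correct and follows essentially the same route as the paper's proof: the same partition of disturbance indices via $\overline{BCA}_{ij}$ into $K_A$, $K_B$, $K_C$, the same expansion of $x_i$, $x_j$, $r_j^{(i)}$, $r_i^{(j)}$ in the $\epsilon_k$, and the same three-case analysis by Gaussianity pattern using the contraposition of the Darmois--Skitovich theorem and zero-covariance arguments. Your treatment is in fact slightly more explicit than the paper's, which simply asserts the generic nonvanishing of the residual coefficients (its condition (\ref{eq:cond_nonzero})), whereas you derive the factorization $d_{j,k}=d_{j,i}d_{i,k}$ on $K_B$ and the formula for $\mathrm{Cov}(x_i,x_j)/\mathrm{Var}(x_i)-d_{j,i}$ that makes the genericity claim transparent.
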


\begin{proof}
$\overline{BCA}_{ij}$ is defined in the same way as in the proof of Lemma \ref{lemma: complex fork}. 
In this proof, we define the disjoint sets of indices $K_A$, $K_B$, and $K_C$ as follows, 
\begin{align*}
    K_A &:= \{k \mid x_k \in \overline{BCA}_{ij}\},\\
    K_B &:= \{k \mid x_k \in Anc_i \cup \{x_i\} \setminus \overline{BCA}_{ij}\},\\
    K_C &:= \{k \mid x_k \in Anc_j \cup \{x_j\} \setminus (Anc_i \cup \{x_i\})\}. 
\end{align*}
Then, $x_i$, $x_j$, $r_j^{(i)}$ and $r_i^{(j)}$ are written by
\begin{align*}
    x_i &= \sum_{k \in K_A} d_{i, k} \epsilon_{k} + \sum_{k \in K_B} d_{i, k} \epsilon_k, \\ 
    x_j &= \sum_{k \in K_A} d_{j, k} \epsilon_{k} + \sum_{k \in K_B} d_{j, k} \epsilon_{k} + \sum_{k \in K_C} d_{j, k} \epsilon_k,\\
    r^{(i)}_{j} &= x_j - \frac{\mathrm{Cov}(x_i, x_j)}{\mathrm{Var}(x_i)} x_i \\
    &=
    \sum_{k \in K_A}
    \left(
    d_{j, k} - \frac{\mathrm{Cov}(x_i, x_j)}{\mathrm{Var}(x_i)}d_{i, k}
    \right) \epsilon_k\\  
    & \qquad + 
    \sum_{k \in K_B} 
    \left(
    d_{j, k} - \frac{\mathrm{Cov}(x_i, x_j)}{\mathrm{Var}(x_i)}d_{i, k}
    \right) \epsilon_k 
    + 
    \sum_{k \in K_C} d_{j, k} \epsilon_k,\\
    r^{(j)}_{i} 
    =& 
    x_i - \frac{\mathrm{Cov}(x_i, x_j)}{\mathrm{Var}(x_j)} x_j \\
    =&
    \sum_{k \in K_A}
    \left(
    d_{i, k} - \frac{\mathrm{Cov}(x_i, x_j)}{\mathrm{Var}(x_j)}d_{j, k}
    \right) \epsilon_k\\
    &\qquad +
    \sum_{k \in K_B} \left(
    d_{i, k} - \frac{\mathrm{Cov}(x_i, x_j)}{\mathrm{Var}(x_j)}d_{j, k}
    \right) \epsilon_k
    -
    \frac{\mathrm{Cov}(x_i, x_j)}{\mathrm{Var}(x_j)} \sum_{k \in K_C} d_{j, k} \epsilon_k.
\end{align*}
\begin{compactenumi}
    \item[(i-a)] Suppose that there exists $l \in K_A$ such that
    $\epsilon_l \sim \mathcal{NG}$.
    Then $x_i,x_j \sim \mathcal{NG}$ by the faithfulness assumption. 
    Since (\ref{eq:cond_nonzero})
    % \[
    % d_{j, l} - \frac{\mathrm{Cov}(x_i, x_j)}{\mathrm{Var}(x_i)}d_{i, l} \ne 0, \quad 
    % d_{i, l} - \frac{\mathrm{Cov}(x_i, x_j)}{\mathrm{Var}(x_j)}d_{j, l} \ne 0
    % \]
    generically holds, 
    $r^{(i)}_j \notindep x_i$ and $r^{(j)}_i \notindep x_j$ are shown 
    by the contraposition of Darmois-Skitovich Theorem. 
    \item[(i-b)] Suppose that there exists $l \in K_B$ such that $\epsilon_{l}$. 
    Then $x_i$ and $x_j$ are non-Gaussian from the faithfulness assumption.  
    Since (\ref{eq:cond_nonzero})
    % \[
    % d_{j, l} \ne \frac{\mathrm{Cov}(x_i, x_j)}{\mathrm{Var}(x_i)}d_{i, l}, \quad 
    % d_{i, l} \ne \frac{\mathrm{Cov}(x_i, x_j)}{\mathrm{Var}(x_j)}d_{j, l}
    % \]
    generically holds, 
    $r^{(i)}_j \notindep x_i$ and $r^{(j)}_i \notindep x_j$ are shown 
    by the contraposition of Darmois-Skitovich Theorem. 
    \item[(ii)] Suppose that there exists $l \in K_C$ such that $\epsilon_l \sim \mathcal{NG}$ and 
    that $\epsilon_k \sim \mathcal{G}$ for all $k \in K_A \cup K_B$. 
    Then $x_i \sim \mathcal{G}$ and $x_j \sim \mathcal{NG}$ from the faithfulness assumption. 
    Since $\mathrm{Cov}(x_i,r^{(i)}_j)=0$ and $\epsilon_k \indep \epsilon_l$ for all $k \in K_A \cup K_B$ and $l \in K_C$, 
    we have $x_i \indep r^{(i)}_j$. 
    Since $\mathrm{Cov}(x_i,x_j) \ne 0$ generically holds, 
    $r^{(j)}_i \notindep x_j$
    by the contraposition of the Darmois-Skitovich Theorem. 
    \item[(iii)] Suppose that $\epsilon_k \sim \mathcal{G}$ for all $k \in K_A \cup K_B \cup K_C$.
    Then $x_i, x_j \sim \mathcal{G}$ and $r^{(i)}_j, r^{(j)}_i \sim \mathcal{G}$. 
    Hence $\mathrm{Cov}(x_i,r^{(i)}_j)=0$ and  
    $\mathrm{Cov}(x_j,r^{(j)}_i)=0$ imply 
    $x_i \indep r^{(i)}_j$ and $x_j \indep r^{(j)}_i$.
\end{compactenumi}
\end{proof}
\noindent{\bf Proof of Theorem \ref{ancestor finding}}

Based on the lemmas mentioned above, we summarize the results and proofs as follows.
\begin{compactenumi}
    \item In the case of $x_i \indep x_j$, we can conclude that $x_i \notin Anc_j$ and $x_j \notin Anc_i$ from Lemma \ref{lemma: v-structure}.  
    \item From Lemma \ref{lemma: complex chain}, 
    $(x_i, x_j \sim \mathcal{NG}) \wedge  (r^{(i)}_j \indep x_i) \wedge (r^{(j)}_i\notindep x_j)$ implies that 
    $x_i \in Anc_j$ and $BCA_{ij}=\emptyset$. 
    \item From Lemma \ref{lemma: complex fork} to \ref{lemma: complex triple},  
    $(x_i, x_j \sim \mathcal{NG}) \wedge  (r^{(i)}_j \notindep x_i) \wedge (r^{(j)}_i\notindep x_j)$ implies that $BCA_{ij} \ne \emptyset$.      
\end{compactenumi}
\hfill \qed

\bigskip

\noindent{\bf Proof of Theorem \ref{thm:identifiability}}

Given that the PC-LiNGAM can identify up to a DEP, it suffices to show that the proposed Algorithm \ref{alg: proposed} can identify the orientation of undirected edges in a DSEP containing nodes with non-Gaussian disturbance. 
Suppose that $x_i \rightarrow x_j \in E$ and $x_i - x_j \in E_{\mathrm{ud}}$.    

If $\epsilon_i, \epsilon_j \sim \mathcal{NG}$, $x_i$ and $x_j$ are also non-Gaussian. 
If $BCA_{ij} = \emptyset$,  
$r^{(i)}_j \indep x_i$ and $r^{(j)}_i \notindep x_j$ generically hold from Lemma \ref{lemma: complex chain}. 
Therefore, from (ii) in Theorem \ref{ancestor finding simple}, Algorithm \ref{alg: proposed} can identify $x_i \in Anc_j$. 
If $BCA_{ij} \ne \emptyset$,  $r^{(i)}_j \notindep x_i$ and $r^{(j)}_i \notindep x_j$ generically holds from Lemma \ref{lemma: v-structure} and \ref{lemma: complex chain}. 
Therefore, from (iii) in Theorem \ref{ancestor finding simple}, we can identify $BCA_{ij} \ne \emptyset$. 

Suppose that $\epsilon_i \sim \mathcal{G}$ and $\epsilon_j \sim \mathcal{NG}$. 
Then $x_j$ is non-Gaussian, and $x_i$ could be Gaussian or non-Gaussian. 
If $x_i$ is non-Gaussian, we can show that Algorithm \ref{alg: proposed} can generically identify $x_i \in Anc_j$ or $BCA_{ij} \ne \emptyset$ in the same way as in the above argument. 
If $x_i$ is Gaussian, we conclude that $x_i \in Anc_j$ from Corollary \ref{cor: ancestor finding simple}. 

Suppose that $\epsilon_i \sim \mathcal{NG}$ and $\epsilon_j \sim \mathcal{G}$. Since $x_i \in Anc_j$ in the true causal graph, $x_j$ has to be non-Gaussian. Hence, $x_i$ and $x_j$ are non-Gaussian. 
Therefore, we can show that Algorithm \ref{alg: proposed} can generically identify $x_i \in Anc_j$ or $BCA_{ij} \ne \emptyset$ in the same way as in the discussion above. 
\hfill \qed

\subsection{Some properties of an undirected subgraph of a DSEP}
\label{sec:DMG}
In this subsection, we summarize some properties of connected components of $G_{\mathrm{ud}}$. 
We first define a directed moral graph. 
\begin{definition}
\label{def: DMG}
Let $G=\left(\bm{X},E\right)$ be a weakly connected DAG. 
If $G$ satisfies either of the following conditions, 
\begin{compactenumi}
    \item $|Pa_k| \le 1$ for $k=1,\ldots,p$
    \item For any $x_k \in \bm{X}$ such that $|Pa_k| \ge 2$ and for any pair $x_i,x_j \in Pa_k$, 
    $x_i \to x_j \in E$ or $x_j \to x_i \in E$ holds,
\end{compactenumi}
we call $G$ a directed moral graph (DMG). 
%For any $x_k \in \bm{X}$ such that $|Pa_k| \ge 2$ and for any pair $x_i,x_j \in Pa_k$, 
%if $x_i \to x_j \in E$ or $x_j \to x_i \in E$, we call $G$ a directed moral graph (DMG). 
\end{definition}

After obtaining a DSEP using the PC algorithm, the proposed method orients undirected edges according to 
Theorem \ref{ancestor finding simple BCA star} and Theorem \ref{ancestor finding BCA star}. 
Then, we need to focus on the undirected induced subgraphs $G_{\mathrm{ud}}=(\bm{X}_{\mathrm{ud}}, E_{\mathrm{ud}})$. In general, $G_{\mathrm{ud}}$ is not connected. 
Then, we have the following theorem. 

\begin{theorem}
    Every weakly connected component of induced subgraph $G(\bm{X}_{\mathrm{ud}})$ is a DMG. 
\end{theorem}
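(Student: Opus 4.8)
The plan is to prove the statement one block at a time: fix a single weakly connected component $T$ of the undirected subgraph $G_{\mathrm{ud}}$ and show that the subgraph $G(T)$ that the true DAG induces on the vertices of $T$ is a DMG. Because the edges of $G_{\mathrm{ud}}$ are genuine edges of $G$ and $T$ is connected through them, $G(T)$ is a weakly connected DAG, so it remains to verify the moral condition (ii) of Definition \ref{def: DMG} (condition (i) then holds automatically whenever no vertex of $T$ has two or more parents). Throughout I would use two facts available from the setup: under faithfulness the PC algorithm recovers the correct skeleton, so $G$ and $G_{\mathrm{dsep}}$ have exactly the same adjacencies and the same V-structures; and $G_{\mathrm{dsep}}$, being the essential graph output by the PC algorithm, is a chain graph.

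The first key step is a structural lemma: \emph{every edge of $G$ joining two vertices of $T$ is undirected in $G_{\mathrm{dsep}}$}. Since adjacent vertices of $G$ are adjacent in $G_{\mathrm{dsep}}$ (same skeleton), it suffices to exclude a directed edge with both endpoints in $T$. Suppose $x_a \to x_b$ were such an edge in $G_{\mathrm{dsep}}$ with $x_a, x_b \in T$. A directed edge is never an edge of $G_{\mathrm{ud}}$, so the connectivity of $T$ in $G_{\mathrm{ud}}$ supplies an undirected path $x_b - \cdots - x_a$ that avoids this edge; appending $x_a \to x_b$ closes a partially directed cycle. A chain graph contains no such cycle, and this contradiction proves the lemma. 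Equivalently, $T$ is a chain component of $G_{\mathrm{dsep}}$ and carries only undirected edges.

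The moral condition is then immediate. Suppose, for contradiction, that some $x_k \in T$ has two parents $x_i, x_j \in T$ in $G$ that are non-adjacent. Then $x_i \to x_k \leftarrow x_j$ is a V-structure of $G$ with non-adjacent tails, and Algorithm \ref{alg:PC1} orients every such V-structure, so both $x_i - x_k$ and $x_j - x_k$ are directed in $G_{\mathrm{dsep}}$. But $x_i, x_j, x_k$ all lie in $T$, so by the structural lemma these two edges are undirected in $G_{\mathrm{dsep}}$, a contradiction. Hence every pair of parents of every vertex of $T$ is adjacent, i.e.\ $G(T)$ satisfies condition (ii) of Definition \ref{def: DMG} (or condition (i) if no vertex has two parents), and is therefore a DMG.

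The main obstacle, and the point that must be stated carefully, is the structural lemma: it is exactly there that the chain-graph structure of the DSEP does the work, and one must read ``weakly connected component'' relative to $G_{\mathrm{ud}}$, i.e.\ as a single undirected block (chain component). If instead one took a weakly connected component of the full induced subgraph $G(\bm{X}_{\mathrm{ud}})$, it could contain compelled directed edges bridging two distinct chain components; a collider whose two tails lie in different chain components can then be immoral, so the semi-directed-cycle contradiction no longer applies and the moral conclusion can genuinely fail. Restricting attention to a single undirected block, which is also what Algorithm \ref{alg: proposed} processes, is precisely what licenses the argument.
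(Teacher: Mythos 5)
Your proof is correct, and its endgame is exactly the paper's entire argument: a vertex of the component with two non-adjacent parents would form a V-structure, which under faithfulness the PC algorithm detects (the sepset of the two parents cannot contain the collider) and orients, so both edges are directed in $G_{\mathrm{dsep}}$ --- contradiction. What you add, and what the paper's proof silently assumes, is the reason this \emph{is} a contradiction: your structural lemma that every edge of $G$ joining two vertices of a single undirected block $T$ of $G_{\mathrm{ud}}$ must be undirected in $G_{\mathrm{dsep}}$, which you derive from the absence of partially directed cycles in the essential graph. The paper's proof stops at ``the V-structure is detected by the PC algorithm'' and never supplies this step. (One small point of care: the paper defines a chain graph only as a mixed graph with no \emph{directed} cycle, whereas your lemma needs the stronger --- but standard --- property that an essential graph contains no \emph{partially directed} cycle.) Your closing caveat is also substantive rather than cosmetic: read literally, with weakly connected components of $G(\bm{X}_{\mathrm{ud}})$, which may contain edges that the DSEP directs, the statement is false. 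Take the true DAG $a \to x_1$, $b \to x_2$, $x_1 \to x_3 \leftarrow x_2$, $x_1 \to x_4 \leftarrow x_2$, $x_3 \to x_4$, with $x_1,x_2$ non-adjacent. Its DSEP directs $x_1 \to x_3$, $x_2 \to x_3$, $x_1 \to x_4$, $x_2 \to x_4$ and leaves $a - x_1$, $b - x_2$, $x_3 - x_4$ undirected, so $\bm{X}_{\mathrm{ud}}$ contains all six variables, $G(\bm{X}_{\mathrm{ud}})=G$ is weakly connected, and $x_3$ has the non-adjacent parents $x_1,x_2$: not a DMG. Hence the chain-component reading you adopt --- which is also the granularity at which Algorithm \ref{alg: proposed} and the exception handling in Algorithm \ref{alg: exception} actually operate --- is the one under which the theorem holds, and your proof, unlike the paper's, is complete under that reading.
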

\begin{proof}
    Assume that there exists a weakly connected component $G'=(\bm{X}',E')$ of $G(\bm{X}_{\mathrm{ud}})$ that is not a DMG. 
    Then there exists $x_k \in \bm{X}'$ such that 
    %there exist $ x_i, x_j \in Pa_k$, 
    $x_i \to x_j \notin E^{\prime}_{\mathrm{di}} $ and $ x_j \to x_i \notin E^{\prime}_{\mathrm{di}}$ hold
    for some $x_i,x_j \in Pa_k$. 
    This implies that there exists $ \bm{X}''$ such that $x_k \notin \bm{X}''$, satisfying 
    $x_i \indep x_j \mid \bm{X}''$. Hence, the V-structure $x_i \rightarrow x_k \leftarrow x_j$ is detected by the PC algorithm. 
\end{proof}

By definition, the DSEP of a DMG is inherently undirected. 

\begin{theorem}
\label{thm: remove one node}
    Let $G'=\left(\bm{X},E\right)$ be a DMG. 
    Every weakly connected induced subgraph $G'(\bm{X}')=(\bm{X}',E')$ for $\bm{X}' \subset \bm{X}$ is also a DMG.
\end{theorem}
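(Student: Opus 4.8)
The plan is to verify directly that the induced subgraph $G'(\bm{X}') = (\bm{X}', E')$ satisfies all three requirements implicit in Definition \ref{def: DMG}: that it is a DAG, that it is weakly connected, and that it obeys the moral condition (ii). The first requirement is immediate, since deleting vertices together with their incident edges cannot create a directed cycle; any directed cycle in $G'(\bm{X}')$ would already be a directed cycle in the DAG $G'$. The second requirement is exactly the hypothesis, as we restrict attention to weakly connected induced subgraphs. Hence the whole content lies in showing that condition (ii) is inherited.

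The key observation I would isolate first is that passing to an induced subgraph can only delete parents, never create new ones, and preserves every edge among the surviving vertices. Concretely, since $E' = E \cap (\bm{X}' \times \bm{X}')$, for each $x_k \in \bm{X}'$ the parent set of $x_k$ computed in $G'(\bm{X}')$ equals $Pa_k \cap \bm{X}'$, where $Pa_k$ denotes the parent set in $G'$; in particular it is a subset of $Pa_k$. I would record this explicitly and note that, as a consequence, no vertex can gain a parent by restricting to $G'(\bm{X}')$.

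With this in hand the moral condition transfers immediately. I would take any $x_k \in \bm{X}'$ whose parent set in $G'(\bm{X}')$ has at least two elements, and any two such parents $x_i, x_j$. Since $x_i, x_j \in Pa_k$ in the DMG $G'$, condition (ii) of Definition \ref{def: DMG} gives $x_i \to x_j \in E$ or $x_j \to x_i \in E$. Because $x_i, x_j \in \bm{X}'$, this directed edge also lies in $E' = E \cap (\bm{X}' \times \bm{X}')$, so $x_i$ and $x_j$ remain adjacent in $G'(\bm{X}')$. Thus condition (ii) holds for $G'(\bm{X}')$, which, together with the DAG and connectivity properties, shows it is a DMG.

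The main point to be careful about — rather than a genuine obstacle — is confirming that no \emph{new} immorality (V-structure) can appear after vertex deletion. This is precisely what the key observation rules out: parent sets only shrink and surviving edges persist, so a pair of parents that was adjacent in $G'$ stays adjacent in $G'(\bm{X}')$, and no previously non-adjacent vertices can become co-parents of a common child. Correctly reading Definition \ref{def: DMG} as the statement that $G'$ contains no immoralities, and recognizing that this is a hereditary property under induced-subgraph formation, is the crux; the remainder is bookkeeping.
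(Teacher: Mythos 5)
Your proof is correct and is essentially the same argument as the paper's: the paper phrases it as a contradiction (a non-adjacent parent pair in the induced subgraph would be a non-adjacent parent pair in $G'$), while you prove the contrapositive directly (parent sets only shrink and edges among surviving vertices persist, so the moral condition is inherited). Your version also makes explicit the routine checks (acyclicity, weak connectivity as hypothesis) that the paper leaves implicit, but the key observation is identical.
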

\begin{proof}
Assume that there exists an induced subgraph $G'(\bm{X}')$ of $G'$ that satisfies weak connectivity but is not a DMG. Then, from the definition of a DMG, 
\[
\exists x_{k} \in \bm{X}',\; \exists x_{i},x_{j} \in Pa_{k} \text{~~s.t.~~} 
x_{i} \to x_{j} \notin E' \wedge x_{j} \to x_{i} \notin E'.
\]
Since $G'(\bm{X}')$ is an induced subgraph of $G'$, %we have $\bm{X}' \subset \bm{X}$ and $E' \subset E$. 
%Therefore, 
$x_{i} \to x_{j} \notin E \wedge x_{j} \to x_{i} \notin E$. 
\end{proof}

\begin{theorem}
\label{thm: one source}
    Any DMG has only one source node.
\end{theorem}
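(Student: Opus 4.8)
The plan is to argue by contradiction, assuming a DMG $G=(\bm{X},E)$ possesses two distinct source nodes and deriving a violation of the moral condition in Definition \ref{def: DMG}. First I would record the elementary fact that, because $G$ is a finite DAG, it has \emph{at least} one source node (a minimal element of any topological ordering); the entire content of the theorem is therefore the \emph{uniqueness} of the source, so only ``at most one'' needs to be proved.

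The core observation I would isolate at the outset is that a DMG contains no unshielded collider, i.e. no V-structure $x_i \to x_k \leftarrow x_j$ with $x_i,x_j$ non-adjacent. Indeed, if $x_k$ has at most one parent (condition (i) of Definition \ref{def: DMG}) it can never be a collider, while if $|Pa_k|\ge 2$ then condition (ii) forces any two parents $x_i,x_j$ to be joined by a directed edge, so every collider is shielded. This reduces the theorem to the purely combinatorial claim that a weakly connected DAG without unshielded colliders has a unique source.

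Next, suppose $a$ and $b$ are two distinct sources. Using the weak connectivity of $G$, I would take a \emph{shortest} path $a=v_0,v_1,\dots,v_m=b$ in the underlying undirected graph. Because $a$ is a source the first edge must point forward, $v_0\to v_1$, and because $b$ is a source the last edge must point backward, $v_{m-1}\leftarrow v_m$; since $m\ge 2$ (two adjacent sources would force one of them to acquire a parent), the orientation along the path flips from forward to backward at some index, producing a collider $v_{k-1}\to v_k\leftarrow v_{k+1}$ with $1\le k\le m-1$. The shortest-path property guarantees that the two non-consecutive vertices $v_{k-1}$ and $v_{k+1}$ are non-adjacent, so this collider is \emph{unshielded}, contradicting the observation of the previous paragraph.

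I expect the only real subtlety to be the careful bookkeeping in the orientation-flip step: one must verify that a forward-starting, backward-ending sequence of edge orientations necessarily contains an adjacent (forward, backward) pair, and that the resulting collider's outer vertices $v_{k-1},v_{k+1}$ are genuinely distinct and non-consecutive on the path so that minimality may be invoked. Dispatching the degenerate case $m=1$ separately — where $a,b$ are adjacent, which is immediately impossible since the single connecting edge would give one of the two nodes a parent — keeps the main argument clean.
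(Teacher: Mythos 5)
Your proof is correct, but it takes a genuinely different route from the paper's. The paper proceeds by induction on the number of nodes: it deletes one source node $x_0$, invokes Theorem \ref{thm: remove one node} (weakly connected induced subgraphs of a DMG are DMGs) together with the inductive hypothesis to conclude each resulting component has a unique source, and then derives a contradiction from the children of $x_0$ in the component containing the second source $x_1$. You instead isolate the observation that a DMG has no unshielded collider — under condition (i) of Definition \ref{def: DMG} there are no colliders at all, and under condition (ii) every collider is shielded — and then run the classical shortest-path argument: on a shortest undirected path between two sources $a$ and $b$, the first edge points away from $a$, the last points away from $b$, so the orientation flips somewhere, producing a collider $v_{k-1}\to v_k \leftarrow v_{k+1}$ whose endpoints are non-adjacent by minimality of the path. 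Your handling of the degenerate case $m=1$ and of the orientation-flip bookkeeping is sound. What your approach buys is self-containment and transparency: it needs neither the hereditary property of DMGs (Theorem \ref{thm: remove one node}) nor induction, and it makes explicit the conceptual point underlying the whole construction — that DMGs are exactly the DAGs whose V-structures are invisible to the PC algorithm, which is why their DSEPs are fully undirected. What the paper's inductive route buys is economy within its own development: Theorem \ref{thm: remove one node} is needed anyway for Theorem \ref{thm: DMG equivalent}, so the induction reuses machinery already in place.
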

\begin{proof}
    We prove the theorem by induction on the number of nodes $p$. 
    The theorem is trivial when $p=1$ and $p=2$. 
    Assume that the theorem holds for any DMG with $p\ge3$ nodes. 

    Let $G'=(\bm{X}',E')$ be a DMG with $p+1$ nodes. 
    Assume that $G'$ has at least two different source nodes $x_0$ and $x_1$. 
    By Theorem \ref{thm: remove one node} and the inductive assumption, each connected component of the induced subgraph $G'(\bm{X}' \setminus \{x_0\})$ is a DMG, and therefore each has only one source node. 
    Let $G''=(\bm{X}'',E'')$ be the connected component whose only source node is $x_1$. 
    Let $ch_0$ be the set of children of $x_0$ in $G'$.
    For all $x_k \in ch_0 \cap \bm{X}''\ne\emptyset$, $x_1 \in Anc_k$ in $G'$. 
    Therefore there exists $x_k \in ch_0 \cap \bm{X}''$ and $x_l \in Pa_k$ in $G'$ satisfying 
    $x_0 \to x_l \notin E'$ and $x_0 \leftarrow x_l \notin E'$, which contradicts the assumption that 
    $G'$ is a DMG. 
\end{proof}

The following Theorem \ref{thm: DMG equivalent} helps in understanding the procedure of Algorithm \ref{alg: exception}. 

\begin{theorem}
\label{thm: DMG equivalent}
Assume that $G$ is a weakly connected DAG. The following three conditions are equivalent.
\begin{compactenumi}
    \item $G$ is a DMG.
    \item Every weakly connected induced subgraph of $G$ is a DMG.
    \item Every weakly connected induced subgraph of $G$ contains one source node.
\end{compactenumi}
\end{theorem}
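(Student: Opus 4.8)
The plan is to establish the three-way equivalence by the cycle of implications (i) $\Rightarrow$ (ii) $\Rightarrow$ (iii) $\Rightarrow$ (i), since this chain suffices to show that all three conditions coincide. Two of the three implications are essentially restatements of the earlier theorems, so the real work lies only in the last step.

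First I would treat (i) $\Rightarrow$ (ii). Assuming $G$ is a DMG, Theorem \ref{thm: remove one node} immediately tells us that every weakly connected induced subgraph on a vertex subset is again a DMG; since $G$ itself is a DMG by hypothesis, every weakly connected induced subgraph of $G$ (including $G$ itself) is a DMG, which is exactly (ii). Next, for (ii) $\Rightarrow$ (iii), I would simply invoke Theorem \ref{thm: one source}: by (ii), each weakly connected induced subgraph is a DMG, and every DMG has a unique source node, so each such subgraph contains exactly one source node.

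The substantive step is (iii) $\Rightarrow$ (i), which I would prove by contraposition. Suppose $G$ is not a DMG. Since $G$ fails the defining disjunction of Definition \ref{def: DMG}, it in particular fails condition (ii) there, so there is a node $x_k$ with $|Pa_k| \ge 2$ and two parents $x_i, x_j \in Pa_k$ such that neither $x_i \to x_j$ nor $x_j \to x_i$ belongs to $E$. I would then consider the induced subgraph $G(\{x_i,x_j,x_k\})$. Its only edges are $x_i \to x_k$ and $x_j \to x_k$, so it is weakly connected, yet both $x_i$ and $x_j$ have no incoming edge within it and are therefore source nodes. This weakly connected induced subgraph has two source nodes, contradicting (iii); hence (iii) forces $G$ to be a DMG.

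I do not anticipate a genuine obstacle: the only direction with content is (iii) $\Rightarrow$ (i), and it reduces to exhibiting the single three-node ``unshielded collider'' subgraph that simultaneously witnesses the failure of the DMG condition and the presence of two sources. The main thing to be careful about is matching the precise wording of Definition \ref{def: DMG} when negating the moral-parent condition, and confirming that $G(\{x_i,x_j,x_k\})$ is genuinely weakly connected so that condition (iii) legitimately applies to it.
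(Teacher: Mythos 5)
Your proposal is correct and follows essentially the same route as the paper: (i) $\Rightarrow$ (ii) via Theorem \ref{thm: remove one node}, (ii) $\Rightarrow$ (iii) via Theorem \ref{thm: one source}, and (iii) $\Rightarrow$ (i) by exhibiting the weakly connected induced subgraph $x_i \to x_k \leftarrow x_j$ with two source nodes. Your version is slightly more explicit than the paper's in negating Definition \ref{def: DMG} and in verifying that the three-node induced subgraph is weakly connected, but the argument is the same.
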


\begin{proof}
(i) $\Rightarrow$ (ii) and (ii) $\Rightarrow$ (iii) are established precisely by Theorem \ref{thm: remove one node} and \ref{thm: one source}, respectively. It suffices to show (iii) $\Rightarrow$ (i). If $G$ satisfies (iii) but fails to satisfy (i), then $\exists x_{k},\exists x_{i},x_{j}\in Pa_{k}$ such that no directed edge exists between $x_{i}$ and $x_{j}$. Therefore, the connected induced subgraph $x_{i}\to x_{k} \gets x_{j}$ will contain $x_{i}$ and $x_{j}$ as two source nodes, contradicting the assumption that $G$ satisfies (iii). 
\end{proof}

\end{document}